\theoremstyle{plain}
\newtheorem{theorem}{Theorem}[section]
\newtheorem{proposition}[theorem]{Proposition}
\newtheorem{lemma}[theorem]{Lemma}
\theoremstyle{definition}
\newtheorem{definition}[theorem]{Definition}
\theoremstyle{remark}
\definecolor{color0}{RGB}{240, 78, 64} %output
\definecolor{color4}{RGB}{60, 220, 125} %input
\definecolor{color6}{RGB}{120, 100, 200} %hidden
\definecolor{color7}{RGB}{107, 100, 200} %hidden
\definecolor{nnedgecolor}{RGB}{90,90,90}
\tikzstyle{every pin edge}=[<-,shorten <=1pt]
\tikzstyle{every path}=[draw=color7!50]
\tikzstyle{neuron}=[circle,fill=black!25,minimum size=17pt,inner sep=0pt]
\tikzstyle{input neuron}=[neuron, fill=color4]
\tikzstyle{output neuron}=[neuron, fill=color0]
\tikzstyle{hidden neuron}=[neuron, fill=color6]
\tikzstyle{annot} = [text width=4em, text centered]
\tikzstyle{nnedge} = [-{stealth},shorten >=0.1cm, shorten <=0.05cm,line 
\tikzstyle{nnedge_t} = [-{dashed},shorten >=0.1cm, shorten <=0.05cm,line 
\def\eqref#1{equation~\ref{#1}}
\def\1{\bm{1}}
\def\va{{\bm{a}}}
\def\vb{{\bm{b}}}
\def\vc{{\bm{c}}}
\def\vl{{\bm{l}}}
\def\vu{{\bm{u}}}
\def\vv{{\bm{v}}}
\def\vw{{\bm{w}}}
\def\vx{{\bm{x}}}
\def\vz{{\bm{z}}}
\def\mA{{\bm{A}}}
\def\mD{{\bm{D}}}
\def\mW{{\bm{W}}}
\DeclareMathAlphabet{\mathsfit}{\encodingdefault}{\sfdefault}{m}{sl}
\SetMathAlphabet{\mathsfit}{bold}{\encodingdefault}{\sfdefault}{bx}{n}
\newcommand{\E}{\mathbb{E}}
\title{\large Probabilistically Tightened Linear Relaxation-based Perturbation Analysis for Neural Network Verification\thanks{Accepted at the Journal of Artificial Intelligence Research (JAIR). DOI: \url{https://doi.org/10.1613/jair.1.20808}}}
\author{
  Luca Marzari, Ferdinando Cicalese and Alessandro Farinelli
  \\
  Department of Computer Science, University of Verona, Verona, Italy.\\
  Contact author: \textit{luca.marzari@univr.it}}
\begin{document}
\maketitle

\begin{abstract}
   We present \textbf{P}robabilistically \textbf{T}ightened \textbf{Li}near \textbf{R}elaxation-based \textbf{P}erturbation \textbf{A}nalysis (\texttt{PT-LiRPA}), a novel framework that combines over-approximation techniques from LiRPA-based approaches with a sampling-based method to compute tight intermediate reachable sets. In detail, we show that with negligible computational overhead, \texttt{PT-LiRPA} exploiting the estimated reachable sets, significantly tightens the lower and upper linear bounds of a neural network's output,  reducing the computational cost of formal verification tools while providing probabilistic guarantees on verification soundness. Extensive experiments on standard formal verification benchmarks, including the International Verification of Neural Networks Competition, show that our \texttt{PT-LiRPA}-based verifier improves robustness certificates, i.e., the certified lower bound of $\varepsilon$ perturbation tolerated by the models, by up to 3.31X and 2.26X compared to related work. Importantly, our probabilistic approach results in a valuable solution for challenging competition entries where state-of-the-art formal verification methods fail, allowing us to provide answers with high confidence (i.e., at least 99\%).
\end{abstract}

% keywords can be removed
% \keywords{Control Barrier Functions \and Probabilistic Verification \and Safe Deep Reinforcement Learning}

\section{Introduction}

Deep neural networks (DNNs) and recently large language models (LLMs) \citep{vaswani2017attention} have revolutionized various fields, from healthcare and finance to natural language processing, enabling remarkable capabilities, for instance, in image recognition \citep{image}, robotic tasks such as manipulation \citep{manipulation1,manipulation2} and navigation \cite{curriculum,navigation,TIST, eps_retrain}. Nonetheless, their opacity and vulnerability to the so-called ``adversarial inputs" \citep{adversarial,amir2022verifying} have also raised significant concerns, especially when deployed in safety-critical applications such as autonomous driving or medical diagnosis. Consequently, developing methods to certify the safety aspect of these models is crucial. Achieving provable safety guarantees involves employing formal verification (FV) of neural network techniques \citep{LiuSurvey}, which mathematically ensures that a system will never produce undesired outcomes in all configurations tested.\\
In this work, we focus on robustness verification, which aims to guarantee that a model's output remains consistently robust under small predefined input perturbations around a given point $\vx_0,$ typically defined as an $\ell_p$ ball of $\varepsilon$ radius around the input $\vx_0,$ i.e., the set  $\mathcal{C} = \mathcal{C}_{\vx_0, \varepsilon}=\{\vx \vert \;\vert\vert \vx-\vx_0\vert\vert_p \leq \varepsilon\}.$ A standard approach to proving the robustness property is to first encode the desired output of the classifier as the sign of the value $f$ computed by a single (output) node, i.e., so that the correct classification in $\vx_0$ corresponds to $f(\vx_0) > 0.$  Then, the robustness verification becomes equivalent to checking that  $\min\limits_{\vx\in\mathcal{C}} f(\vx)$, is positive \citep{bcrown}.
%\footnote{For the sake of clarity and without loss of generality, we are only going to discuss the optimal lower bounding $\underline{f}(\vx)$. Similar considerations can also be applied when computing the upper bound $\overline{f}(\vx)$ with the necessary changes in computation.} 
However, due to the non-linear and non-convex nature of the DNN, solving this problem is, in general,  NP-hard \citep{Reluplex,fastlin}. A recent line of works called linear relaxation-based perturbation analysis (LiRPA) algorithms \citep{crown,acrown,bcrown,autolirpa} proposes a formal analysis based on a sound linear relaxation of the DNN. The idea is to compute relaxations of all {\em sources of non-linearity} in the network so as to obtain two linear functions providing respectively a lower and an upper bound of the DNN's output, which are then used in the robustness certification. Nonetheless, these methods use overapproximation techniques to satisfy the worst-case setting, where no exceptions are allowed. While this strict approach ensures absolute safety within the certified input space, it faces crucial limitations. In fact, the efficiency of LiRPA approaches, and, in general, any FV methods, scales poorly with the size of the model. Additionally, formal methods fail to offer meaningful robustness information when input perturbations exceed the certified bounds. As discussed in \citep{proven}, relying solely on formal verification can lead to overly conservative outcomes, particularly when adversarial examples are rare or when prior knowledge of the input distribution is available. This stems from the exact nature of formal solvers, which treat all violations equally: even a single adversarial input—such as a one-pixel change—invalidates the entire region, without distinguishing between isolated, low-probability cases and large, semantically significant unsafe regions.
Similar to recent approaches \citep{proven,randomizedSmoothing}, to overcome these critical limitations, we propose a probabilistic perspective that allows for an infinitesimal trade-off in certainty while providing the likelihood of such violations, enabling more nuanced and practical assessments of robustness that better reflect real-world risks. Importantly, the goal of this work is not to compare probabilistic methods with provable ones, as they are fundamentally different in nature. Rather, our objective is to propose a novel, complementary solution that enhances formal methods by offering additional safety information about the models under evaluation for particular challenging instances to be verified.\\
Building on this foundation, we explore \citet{acrown} speculation that estimates as tightly as possible reachable sets, i.e., the range of possible values at each hidden node given bounded inputs, could significantly enhance verification efficiency by yielding sharper final linear bounds.
In this perspective, we address two key research questions: (i) \textit{How can we compute (probabilistic) reachable sets that yield tighter intermediate and output bounds than existing methods, while remaining computationally efficient?}
(ii) \textit{What theoretical guarantees can we establish for linearized layers using (probabilistically sound) reachable set?} 

\textbf{Our Contributions.} We propose \textbf{P}robabilistically \textbf{T}ightened \textbf{Li}near \textbf{R}elaxation-based \textbf{P}erturbation \textbf{A}nalysis (\texttt{PT-LiRPA}), a novel framework for computing tight linear lower and upper bounds combining existing LiRPA methods with a sampling-based reachable set estimation strategy. Unlike other related probabilistic works  \citep{proven,randomizedSmoothing}, our approach does not rely on specific input distributions or attacks. 
Specifically, in this work, we start considering statistical results on tolerance limits \citep{wilks} to provide probabilistic guarantees on estimating the minimum and maximum values of a neural network's output. Using the approach of \citet{wilks}, we are guaranteed that, for any $R, \psi \in (0,1),$ with probability $\psi$, at most, a fraction $(1-R)$ of points from a possibly infinitely large sample in the perturbation region $\mathcal{C}$ may violate the estimated bounds obtained from an initial sample, whose size $n$ is explicitly computable from the desired parameters $\psi$ and $R$. Hence, our intuition is to extend this approach to also compute an estimation of reachable sets with a specified confidence level. Nonetheless, while the result of \citep{wilks} quantitatively bounds the error of the sample-based procedure by predicting the fraction of potential violations in future samples, it results in "weaker" guarantees with respect to other known probabilistic approaches. In particular, this approach does not provide information on the magnitude of these violations with respect to the estimated bounds. Specifically, in any estimated reachable set for a possibly existing fraction $(1-R)$ of points drawn from $\mathcal{C}$, the probability of violating the bound might in principle be uncontrollably large. 
%and holds only for a fixed fraction $R$ of the perturbation region under consideration. 
To address such an issue, following results on \textit{extreme value theory} (EVT) \citep{haan2006extreme,de1981estimation}, we extend \citet{wilks}' probabilistic guarantees and present two novel bounds on the magnitude of potential errors between the true minimum (Theorem \ref{theorem:worst_case_bound}) and the sample-based estimated one (Theorem \ref{theorem:pt_lirpa_guarantees}). Notably, our final result proves that with negligible computational overhead, each node's reachable set computed using random samples represents with high probability the actual domain of that node for any $x \in \mathcal{C}$. Hence, we show that by employing this sampling-based procedure to compute probabilistically tight reachable set bounds in the neural network and integrating these into the linearization used by LiRPA-based formal verification methods, we are able to obtain significantly tighter lower and upper linear bounds of a neural network’s output, while preserving the verification soundness for any input in the perturbation region and specified confidence level. 

To assess the benefit and effectiveness of our novel framework, we perform an extensive empirical evaluation. We first compare our approach on neural networks trained on MNIST and CIFAR datasets with PROVEN \citep{proven} and Randomized Smoothing \citep{randomizedSmoothing}, the most closely related probabilistic verification approaches. In addition, as a ground truth, we also consider a set of state-of-the-art worst-case robustness verification approaches, namely CROWN\citep{crown}, $\alpha$-CROWN \citep{acrown}, $\beta$-CROWN \citep{bcrown}, and GCP-CROWN \citep{GCP-CROWN}. This first set of experiments shows that with a very high confidence (i.e., $\geq 99\%$), \texttt{PT-LiRPA} improves the certified lower bound of $\varepsilon$ perturbation tolerated by the models up to 3.31x and 2.26x compared with both the corresponding probabilistic approaches of \citep{proven,randomizedSmoothing} and up to 3.62x w.r.t. the worst-case analysis results. Finally, in the second set of experiments, we demonstrate that for challenging instances from the International Verification of Neural Networks Competition (VNN-COMP) \citep{VNN-comp2022,VNN-comp2023}, where state-of-the-art formal verification methods fail to produce a conclusive result, our \texttt{PT-LiRPA}, with a quantifiable level of confidence represents a valuable resource in providing safety information.  \\
The paper is structured as follows. Section~\ref{sec:preliminaries} introduces the necessary background on robustness verification and explains how to perform linear relaxation-based perturbation analysis in a Rectified Linear Unit (ReLU)-based deep neural network. We also include a detailed discussion comparing our probabilistic guarantees with those of existing approaches.  We present the theoretical foundations of \texttt{PT-LiRPA}, along with a running example to illustrate our method for linearizing complex deep neural networks and practical implementation in Section~\ref{sec:method}, and \ref{sec:pt_lirpa_algo}, respectively. Finally, Section~\ref{sec:empirical} reports on the extensive empirical evaluation of our approach on standard benchmarks in DNN verification.

\section{Related Work and Preliminaries}\label{sec:preliminaries}

\subsection{Related Work}\label{sec:related}
\paragraph{Formal Verification.} In recent years, significant research has been dedicated to formal verification and especially to robustness verification \citep{LiuSurvey,wei2024modelverification}. For example, \citep{jair1} shares with our work the goal of achieving certified robustness for neural networks, but differs significantly in both focus and methodology. Their approach is specifically designed for the time-series domain, leveraging statistical constraints and polynomial transformations to generate adversarial examples and derive robustness guarantees. In contrast, our work addresses general classification tasks and introduces a probabilistically sound verification method based on tight linear relaxation bounds. Other related work that integrates recurrent neural network(RNN)-based policy learning with formal verification is presented in \citep{jair2}. Specifically, the authors target policy verification under temporal logic constraints in partially observable setups by extracting finite-state controllers from RNNs to enable model checking. In contrast, our work focuses on classification tasks and specifically robustness verification of general feedforward deep neural networks under input perturbations. For this type of verification is important to cite sound and complete verifiers such as mixed integer programming (MIP) \cite{MIP} and satisfiability module theory(SMT)-based solvers \cite{Reluplex,marabou2}.
The most closely related works to our proposal comprise LiRPA-based verification approaches that focus on increasing the quality of linear bounds of the most popular activation functions, such as ReLU, and more general activation functions. More specifically, \citep{autolirpa} proposes a framework for deriving and computing near-optimal sound bounds with linear relaxation-based perturbation analysis for neural networks. This framework is the base of all the most famous state-of-the-art formal verification tools such as CROWN \citep{crown}, $\alpha$-CROWN \citep{acrown}, $\beta$-CROWN \citep{bcrown}, GCP-CROWN \citep{GCP-CROWN}, the top performer on last years VNN-COMP \citep{VNN-comp2022,VNN-comp2023}. Notably, this approach has also been recently employed to both provably \citep{kotha2023provably} and approximate \citep{zhang2024provable} bounding neural network preimages. 

All these approaches will be used as a worst-case verification result in our experiments. 
\vspace{-2mm}
\paragraph{Sampling-based Approaches for Provable Verification Certificates.} Different approaches have tried to incorporate a sampling-based approach to enhance either the linear relaxation of arbitrary non-linear functions \citep{linsyn,sol,dualapp} or the verification process \citep{geometric_sample}, but still maintaining provable verification certificates. For instance, \citep{sol,linsyn} proposed a method synthesizing linear bounds for arbitrary complex activation functions, such as GeLU \citep{gelu} and Swish \citep{swish}, by combining a sampling technique with an LP solver to synthesize candidate lower and upper bound coefficients and then certifying the final result via SMT solvers \citep{smtsolver}. 
In \citep{dualapp}, the authors focus on estimating the actual domain of an activation function by combining Monte Carlo simulation and gradient descent methods to compute an underestimated domain, which is then paired with over-approximations to define provable linear bounds. If, on the one hand, our \texttt{PT-LiRPA} also employs a similar sampling-based procedure to compute the estimated domain of reachable sets, on the other hand, it differs fundamentally in both nature and focus. Specifically, our approach provides an explicit formula that, for any desired level of confidence, gives the number of samples sufficient to estimate the minimum and maximum pre-activation value of any node (over the set of inputs in $\mathcal{C}$) with the given confidence. In addition, we are also able to estimate the maximum error between our estimates and the true extremal pre-activation values. Crucially, our approach adopts a probabilistic perspective to derive safety insights, whereas \citep{dualapp} relies on combining under- and over-approximations to produce provable bounds—an approach that may inherit the limitations of formal methods discussed earlier.

\vspace{-2mm}
\paragraph{Probabilistic Verification.} Recently, several works have explored probabilistic verification of machine learning models. For example, \citep{probabilistic1,probabilistic2,probabilistic9} focus on a probabilistic verification perspective for general machine learning models (e.g., deep generative/diffusion models), leveraging either uncertainty sources 
%encoded from distributions 
generated by specific encoders or using symbolic reasoning and probabilistic inference. \citep{probabilistic3} proposes a Monte Carlo-based method utilizing multi-level splitting to estimate the probability of rare events for robustness verification. Other approaches rely on Chernoff-Cramér bounds, e.g., \citep{probabilistic8} estimates the local variation of neural network mappings at training points to regularize the loss function.  In \citep{probabilistic10}, the authors probabilistically certify a neural network by overapproximating input regions where robustness is violated.
%, but they do not consider general semantic perturbations such as image rotation or translation. To address this limitation, 
In \citep{probabilistic4} a probabilistic certification approach is proposed that can be used in general attack settings to estimate the probability of a model failing if the attack is sampled from a certain distribution.  Another line of work, like \citep{CLEVER} uses \textit{extreme value theory} (EVT) \citep{haan2006extreme} to statistically estimate the local Lipschitz constant and assess the probabilistic robustness of the model without relying on the overapproximation of LiRPA-based solvers. Finally, recent works \citep{probabilistic5,marzari2023dnn,marzari2024enumerating,probabilistic6,probabilistic7} focus either on different types of verification, such as probabilistic enumeration of (un)safe region for neural networks, or on probabilistic specifications for robustness verification, which falls outside the scope of this paper, as we are interested in standard robustness verification with common specifications.

More closely related to our work are the approaches used in PROVEN \citep{proven} and Randomized Smoothing \citep{randomizedSmoothing}, which also focus on probabilistic robustness guarantees for neural network classifiers. PROVEN builds upon LiRPA-based techniques, combining them with concentration inequalities to derive probabilistic bounds on the network’s output under input perturbations. Similarly, Randomized Smoothing constructs a smoothed classifier by adding random noise, typically Gaussian, to the input and then provides robustness guarantees by analyzing the output distribution of the smoothed model. \texttt{PT-LiRPA} aligns with these approaches in that it also leverages linear relaxation techniques and probabilistic reasoning, but introduces a novel perspective by focusing on how much sampling-based underestimations affect the intermediate linear bounds used during network linearization. This enables a finer-grained analysis of robustness with strong probabilistic guarantees, which is particularly useful in scenarios where the available methods for computing exact worst-case bounds either fail (by not terminating under reasonable time and space constraints) or terminate with only over-conservative bounds.
%probabilistic analysis of robustness, particularly useful in scenarios where exact worst-case bounds are overly conservative or intractable to compute.

\subsection{Notation and Problem Formulation}\label{sec:problem_form}
Consider a neural network classifier $f: \mathbb{R}^{d} \to \mathbb{R}$, where $d$ is the input space dimension. Let $N$ denote the number of layers. For each $i = 1, \dots, N,$ we let $d_i$ be the number of nodes in layer $i.$ We use $z^{(i)}_j$ to denote the $j$th node in layer $i$ (according to some fixed ordering of the nodes in the same level). For a given input vector $\vx,$ we 
associate to node $z^{(i)}_j$ two values: the 
preactivation value, denoted by $z^{(i)}_j(\vx)$, 
and the postactivation value $\hat{z}^{(i)}_j(\vx)$ obtained by applying a (typically non-convex) activation function $\sigma$ to the 
preactivation value, i.e.,
$\hat{z}^{(i)}_j(\vx) = \sigma(z^{(i)}_j(\vx)).$
The preactivation value of node $z^{(i)}_j$ is 
obtained as a linear combination of the post-activation values of the nodes in the previous layer. In formulas, let
$\vz^{(i)}(\vx) = (z^{(i)}_1(\vx), \dots, z^{(i)}_{d_i}(\vx))$ 
and 
$\hat{\vz}^{(i)}(\vx) = (\hat{z}^{(i)}_1(\vx), \dots, \hat{z}^{(i)}_{d_i}(\vx) = 
\sigma(\vz^{(i)}(\vx)) = (\sigma(z^{(i)}_{1}(\vx)), \dots, \sigma(z^{(i)}_{d_i}(\vx))).$ 
Then, $\vz^{(i)}(\vx) = \mW^{(i)}\hat{\vz}^{(i-1)}(\vx) + \vb^{(i)},$ for some given inter level weight matrix $\mW^{(i)} \in \mathbb{R}^{d_i\times d_{i-1}}$ and bias vector $\vb^{(i)}\in \mathbb{R}^{d_i}$---as resulting from the network training.  

As observed in the introduction, we assume, without loss of generality, that there is a single node in 
the $N$th layer, which we simply denote by $z^{(N)}$.\footnote{
One can simply enforce this condition for networks that do not satisfy this assumption by adding one layer and encoding, for instance, the robustness property that we aim to verify in a single output node as a margin between logits, which produces a positive output only if the correct label is predicted \citep{LiuSurvey,bcrown}.} Hence we have $f(\vx) = \hat{\vz}^{(N)}(\vx) = {z}^{(N)}(\vx).$

In our following arguments, we assume that the activation function of each node is the ReLU, which is the most employed in the verification works of the literature \citep{acrown,bcrown}, but the soundness of the proposed approach can also be extended for different non-linear scalar functions studied in the literature, such as Tanh, Sigmoid, GeLU, as in \citep{autolirpa}. Hence, we define the robustness verification problem of deep neural networks as follows.

Given an input point of interest $\vx_0$, for which $f(\vx_0) > 0$, and an input perturbation region $\mathcal{C}= \mathcal{C}_{\vx_0, \epsilon}=\{\vx \vert \;\vert\vert \vx-\vx_0\vert\vert_\infty \leq \epsilon\}$, i.e., we set $p = \infty$, obtaining an N-dimensional hypercube, we aim to find, if there exists, an input $\vx \in \mathcal{C}$ such that $f(\vx) \leq 0$, thus resulting in a violation of the property. If $f(\vx) > 0\; \forall \vx \in \mathcal{C}$, we say $f$ is robust (or verified) to all the possible input perturbations in $\mathcal{C}$. Importantly, as we will show, \texttt{PT-LiRPA} can be seamlessly integrated into any state-of-the-art LiRPA method. Therefore, whenever the underlying method supports the verification of specifications beyond the $\ell_\infty$ norm—such as non-convex specifications or related constraints—\texttt{PT-LiRPA} remains directly applicable. A possible way to prove the property is to solve the optimization problem in terms of $\min\limits_{\vx\in\mathcal{C}} f(\vx)$ and by checking if the result is positive.
Formally:
\begin{definition}[\textit{Robustness verification problem}]
\label{def:decision_problem_new}
\phantom{a}

    {\bf Input}: A tuple $\mathcal{T}=\langle f, \mathcal{C}\rangle$.

%a verification problem returns 
    {\bf Output}: $\texttt{Robust} \iff$
    %$ \Leftrightarrow 
    $\min\limits_{\vx\in\mathcal{C}} f(\vx) := z^{(N)}(\vx) > 0$.
\end{definition}

Because of the effect of the activation functions $\sigma$ applied to the value computed in each node, the resulting function $f$ computed by the network is, in general, non-convex, making the above Robustness verification problem NP-hard
\citep{Reluplex}. In order to cope with this issue, (in)complete verifiers usually relax the DNNs' non-convexity to obtain over-approximate sound lower and upper bounding functions on $f$, respectively, denoted by $\underline{f}$ and $\overline{f},$ i.e., $\underline{f}(\vx) \leq f(\vx) \leq \overline{f}(\vx)$ for all $\vx \in \mathcal{C}.$ 
Therefore, if $\underline{f}^* = \min_{\vx \in \mathcal{C}}\underline{f}(\vx) > 0$, then also $f^* = \min_{\vx \in \mathcal{C}} f(\vx) > 0$, i.e., the real minimum value of $f$ will be positive, and similarly if $\overline{f}^* = \min_{\vx \in \mathcal{C}}\overline{f}(\vx) \leq 0$ than also $f^* = \min_{\vx \in \mathcal{C}} f(\vx) \leq 0$.

%If $\underline{f}^* = \min_{\vx \in \mathcal{C}}\underline{f}(\vx) > 0$, then also $f^* = \min_{\vx \in \mathcal{C}} f(\vx) > 0$, i.e., the real minimum value of $f$ will be positive, and similarly if $\overline{f}^* = \min_{\vx \in \mathcal{C}}\overline{f}(\vx) \leq 0$ than also $f^* = \min_{\vx \in \mathcal{C}} f(\vx) \leq 0$. 

In both these situations, we can return a provable result. However, if $\underline{f}^* < 0 < \overline{f}^*$, we cannot be sure about the sign of $f^*$, and we typically have to proceed with a branch and bound (BaB) \citep{bab} process. More specifically, many FV tools firstly recursively divide the original verification problem into smaller subdomains, either by dividing the perturbation region \citep{reluval} or by splitting ReLU neurons into positive/negative linear domains \citep{babSplitRelu}. Secondly, they bound each subdomain with specialized (incomplete) verifiers, typically linear programming (LP) solvers \citep{LP1}, which can fully encode neuron split constraints. The verification process ends once we verify all the subdomains of this searching tree, or we find a single counterexample $\vx$ such that  $\overline{f}(\vx) \leq 0$. Even though LP-verifiers are mainly used in complete FV tools, recent LiRPA-based approaches \citep{acrown,bcrown} show how to solve an optimization problem that is equivalent to the costly LP-based methods with neuron split constraints while maintaining the efficiency of bound propagation techniques, significantly outperforming LP-verification time thanks to GPU acceleration.

\subsection{Linear Relaxation-based Perturbation Analysis (LiRPA) Approaches} \label{sec:lirpa_preliminaries}
LiRPA approaches \citep{crown,singh2019abstract,autolirpa,acrown,bcrown} propose to cope with the non-linearity of the function computed by a neural network by computing linear approximations of each non-linear unit.
The high-level idea is to compute bounds on each neuron's function in the DNN, for instance, all the ReLU nodes, that can be expressed by linear functions, for which the above {\em robustness verification problem} can be efficiently solved. 
In detail, using interval bound propagation (IBP) \citep{lomuscio2017approach}, we first compute a 
{\em reachable set} for each neuron $z^{(i)}_j$.

\begin{definition}[Reachable set]
    Given an input perturbation region $\mathcal{C} \subseteq \mathbb{R}^d$, the reachable set of a neuron $z^{(i)}_j$ is defined as the interval
    $[l^{(i)}_j, u^{(i)}_j],$
    where $l^{(i)}_j \leq z^{(i)}_j(\mathbf{x}) \leq u^{(i)}_j \quad \text{for all } \mathbf{x} \in \mathcal{C}$. 
\end{definition}

\begin{wrapfigure}{l}{0.45\linewidth}
    \centering
    \vspace{-2mm}
    \includegraphics[width=\linewidth]{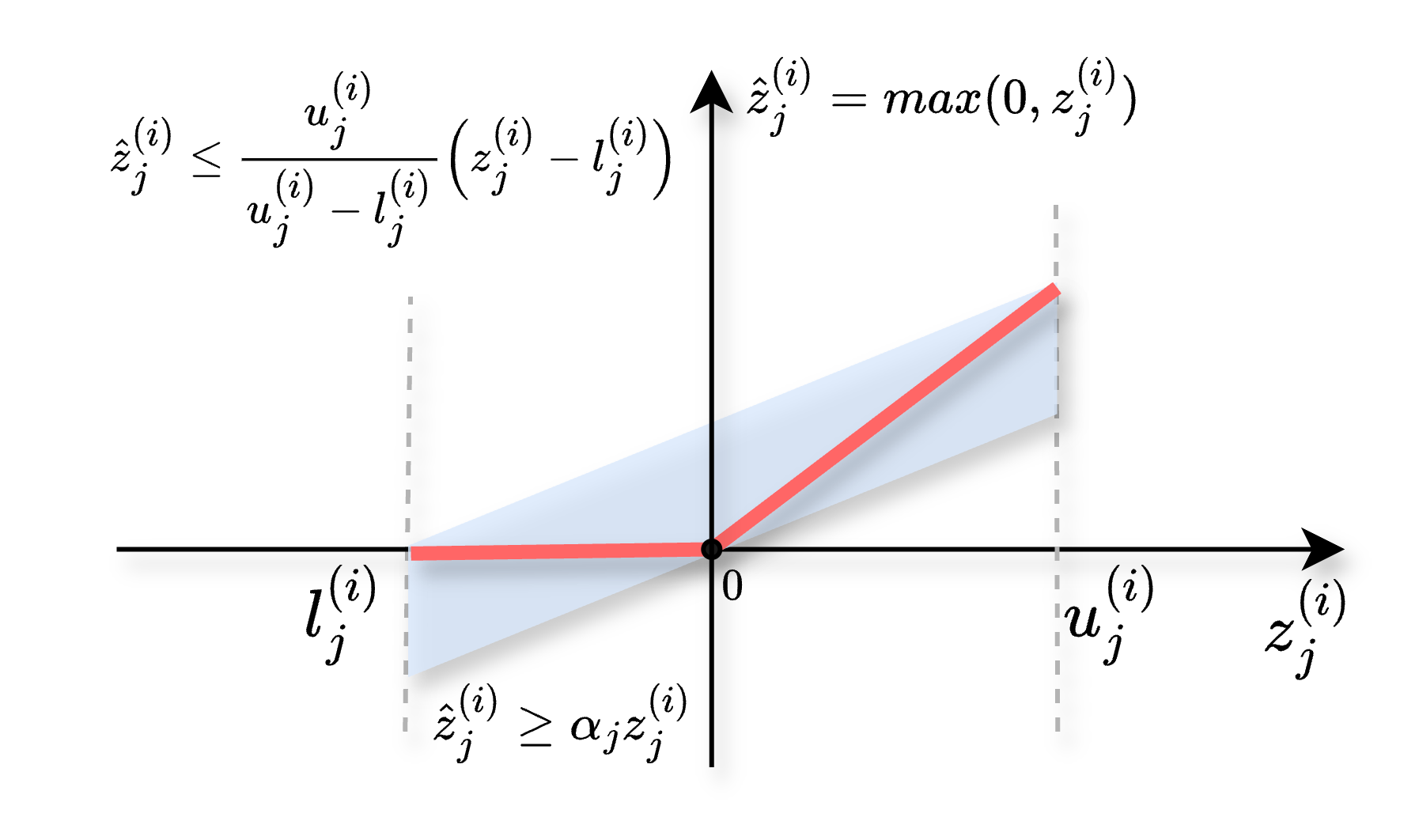}
    \caption{Linear relaxation for ReLU($z^{(i)}_j$)}
    \label{fig:relu}
\end{wrapfigure}
That is, $l^{(i)}_j$ and $u^{(i)}_j$ represent lower and upper bounds, respectively, on the pre-activation values of neuron $z^{(i)}_j$ over the entire perturbation region. The ReLU post-activation value of the node is given by $\hat{z}^{(i)}_j=\max(0, z^{(i)}_j)$.
Therefore, the node is considered \textit{``unstable"} if its pre-activated bounds are such that $u^{(i)}_j > 0 > l^{(i)}_j$ and a linear approximate bound can be computed as depicted in Fig. \ref{fig:relu}.
% \begin{figure}[h!]
%      \centering
%     \includegraphics[width=0.5\linewidth]{images/ReLU.png}
%     \caption{Linear relaxation for ReLU($z^{(i)}_j$)}
%     \label{fig:relu}
% \end{figure}
Once linear bounds are established across all neurons, two propagation methods are typically employed: forward and backward. In forward propagation, the linear bounds for each neuron are expressed in terms of the input and propagated layer by layer until the output is reached. 
In backward propagation, we start from the output and propagate the bounds backward to earlier layers until we can express a linear relation between input and output.
\begin{algorithm}[t]
\caption{\texttt{LiRPA}\cite{crown} backward output bounds computation}\label{alg:lirpa}
\begin{algorithmic}[1]
\small
\STATE \textbf{Input:} A DNN $f$ with $N$ layers, and input $\vx_0$ and an $\varepsilon$ perturbation to compute the perturbation region $\mathcal{C}$, \textit{interm\_bounds} (optional).
\STATE \textbf{Output:} provable lower bound of $f$ when considering $\mathcal{C}$.
\vspace{0.2cm}

\IF{\textit{interm\_bounds} $==\emptyset$}
    \STATE{\textit{interm\_bounds} $\gets \texttt{IBP}(f, \mathcal{C})$}
\ENDIF

\STATE{$\underline{\mA}^{(N)} = I$ ,\; $\underline{\mA}^{(N-1)} = \vw^\top$} \hspace*{\fill} $\rhd$  $\vw^\top$ is the weight vector of the final layer.  
\FOR{$i \in \{N-2,\ldots,1\}$}
    \STATE $\underline{\mD}^{(i)} \gets \texttt{ComputeDiagonalMatrix}$(interm\_bounds$[i]$, $\underline{\mA}^{(i+1)}$) \hspace*{\fill} \hspace*{\fill} $\rhd$  as in Eq. \ref{eq:matrix_bias}
    \STATE $\underline{\vb}^{(i)} \gets \texttt{ComputeBiasVector}$(interm\_bounds$[i]$, $\underline{\mA}^{(i+1)}$)\hspace*{\fill} $\rhd$  as in Eq. \ref{eq:matrix_bias}
    \STATE $\underline{\mA}^{(i)} \gets \underline{\mA}^{(i+1)} \, \underline{\mD}^{(i)} \, \mW^{(i)}$ \hspace*{\fill} $\rhd$  $\mW^{(i)}$ is the weight matrix of layer $i$.
\ENDFOR
\STATE $\underline{d} \gets \underline{\mA}^{(N-1)}\underline{\vb}^{(N-2)}+\ldots+\underline{\mA}^{(2)}\underline{\vb}^{(1)}$
\STATE \textit{lower\_bound} $\gets -\vert\vert \underline{\mA}^{(1)} \vert\vert_1 \cdot \varepsilon + \underline{\mA}^{(1)} \vx_0 + \underline{d}$  \hspace*{\fill} $\rhd$  using Hölder’s inequality for $\min\limits_{\vx \in \mathcal{C}} \underline{\mA}^{(1)} (\vx) + \underline{d}$
\STATE{\textbf{return} \textit{lower\_bound}}
\end{algorithmic}
\end{algorithm}
Otherwise, the node is either considered \textit{``active"} if $l^{(i)}_j \geq 0$ or \textit{``inactive"} if $u^{(i)}_j \leq 0$. 
In detail, to improve the tightness of the bounds, \citep{autolirpa} proposes a refined backward computation strategy that leverages information from a prior forward propagation. We report in Alg. \ref{alg:lirpa} a brief overview of the method. After computing all reachable sets using a method such as interval bound propagation \cite{lomuscio2017approach}, the backward analysis constructs layer-wise linear relaxations, starting from the output layer and working backward to the input. 
For clarity purposes, in the following, we will only report the notation for the linear lower bound computation, but similar considerations also apply to the upper bound. 

For a DNN composed of $N$ layers, the base case is defined as $\underline{\mA}^{(N)} = I$. Moreover, in the case of a single output node, $\underline{\mA}^{(N-1)} = \vw^\top$, where $\vw$ is the weight vector of the final layer. For the remaining layers $i \in \{N-2, \dots, 1\}$, the linear relaxation is propagated using the recurrence $\underline{\mA}^{(i)} = \underline{\mA}^{(i+1)} \, \underline{\mD}^{(i)} \, \mW^{(i)}$, where $\mW^{(i)}$ is the inter level weight matrix and $\underline{\mD}^{(i)}$ is a diagonal matrix encoding the linear relaxation of the activation function at the $i$-th layer. For each layer $i$, we also recursively compute a bias vector $\underline{\vb}^{(i)}$ based on the reachable sets of the $i$-th layer nodes and the matrix $\underline{\mA}^{(i+1)}$. Each diagonal entry $\underline{\mD}^{(i)}_{j,j}$ depends on the preactivation bounds of neuron $j$ (computed during the forward pass) and the sign of $\underline{\mA}_j^{(i+1)}$, i.e., the coefficient associated with neuron $j$ in the linear relaxation of the next layer, for which we report the explicit formulas used for computing $\underline{\mD}^{(i)}$ and $\vb^{(i)}$ in the example provided below.
At the end of the process, a provable lower bound on the minimum of $f$ in $\mathcal{C}$ (the  $\ell_\infty$ norm ball around $\vx_0$) is then easily obtained using Hölder’s inequality \citep{crown} as 
    \begin{equation}
        \min \limits_{\vx\in \mathcal{C}} \va^T_{\text{LiRPA}}(\vx) + \vc_{\text{LiRPA}} = -\vert\vert \underline{\mA}^{(1)} \vert\vert_1 \cdot \varepsilon + \underline{\mA}^{(1)} \vx + \vc_{\text{LiRPA}}.
    \end{equation}
with $\vc_{\text{LiRPA}} = \underline{\mA}^{(N)}\underline{\vb}^{(N-1)}+\ldots+\underline{\mA}^{(2)}\underline{\vb}^{(1)}$.
%where $\va^T_{\text{LiRPA}} = \underline{\mA}^{(1)}$ and %$\vc_{\text{LiRPA}}= c$, 
%the coefficients of the linear equation for the lower bound of $f(\vx)$. For clarity purposes, in the following, we provide an explanatory example of linear bounds computation for a toy DNN.

To provide the reader a concrete and practical illustration of this approach, in the following, we present a simple example of linear bound computation for the toy DNN shown in Figure \ref{fig:toyDNN}, using CROWN, a state-of-the-art LiRPA-based method \citep{crown}.

\paragraph{Example of linear computation with LiRPA}\label{example_computation}

\begin{figure}[b]
    \centering
    \def\layersep{2.0cm}
\begin{tikzpicture}[shorten >=1pt,->,draw=black!50, node
				distance=\layersep,font=\footnotesize]
				
				\node[input neuron] (I-1) at (0,-1) {$x_1$};
				\node[input neuron] (I-2) at (0,-2.5) {$x_2$};

                    % vector input
				\node[left=-0.05cm of I-1] (b1) {};
				\node[left=-0.05cm of I-2] (b2) {};
				
				\node[hidden neuron] (H-1) at (\layersep,-1) {$z_1^{(1)}$};
				\node[hidden neuron] (H-2) at (\layersep,-2.5) {$z_2^{(1)}$};
                   				
				\node[hidden neuron] (H-4) at (2*\layersep,-1) {$\hat{z}_1^{(1)}$};
				\node[hidden neuron] (H-5) at (2*\layersep,-2.5) {$\hat{z}_2^{(1)}$};

                    \node[hidden neuron] (H-7) at (3*\layersep,-1) {$z_1^{(2)}$};
				\node[hidden neuron] (H-8) at (3*\layersep,-2.5) {$z_2^{(2)}$};
                    		
				\node[hidden neuron] (H-10) at (4*\layersep,-1) {$\hat{z}_1^{(2)}$};
				\node[hidden neuron] (H-11) at (4*\layersep,-2.5) {$\hat{z}_2^{(2)}$};

				\node[output neuron] at (5*\layersep, -1.75) (O-1) {$f(x)$};
				
				% Connect every node in the hidden layer with the output layer
				\draw[nnedge] (I-1) --node[above] {$2$} (H-1);
				\draw[nnedge] (I-1) --node[above, pos=0.3] {$-3$} (H-2);
                  
				\draw[nnedge] (I-2) --node[below, pos=0.3] {$1$} (H-1);
				\draw[nnedge] (I-2) --node[below] {$4$} (H-2);

				\draw[dashed,->] (H-1) --node[below] {ReLU} (H-4);
				\draw[dashed,->] (H-2) --node[below] {ReLU} (H-5);

				\draw[nnedge] (H-4) --node[above] {$4$} (H-7);
                    \draw[nnedge] (H-4) --node[above, pos=0.3] {$2$} (H-8);
                   
                    \draw[nnedge] (H-5) --node[below, pos=0.4] {$-2$} (H-7);
				\draw[nnedge] (H-5) --node[below, pos=0.3] {$1$} (H-8);

                    \draw[dashed,->] (H-7) --node[below] {ReLU} (H-10);
                    \draw[dashed,->] (H-8) --node[below] {ReLU} (H-11);
                    \draw[nnedge] (H-10) --node[above] {$-2$} (O-1);
				\draw[nnedge] (H-11) --node[below] {$1$} (O-1);

				% result first prop
				\node[below=0.05cm of H-1] (b1) {};
				\node[below=0.05cm of H-2] (b2) {};

                     % Biases
				\node[below=0.05cm of H-4] (b1) {};
				\node[below=0.05cm of H-5] (b2) {};

                      % Biases
				\node[below=0.05cm of H-7] (b1) {};
				\node[below=0.05cm of H-8] (b2) {};
                   
                    % Annotate the layers
		          \node[annot,left of=I-1, node distance=1cm] (hl1) {$[-2, 2]$};
            \node[annot,left of=I-2, node distance=1cm] (hl1) {$[-1, 3]$};

				\node[annot,above of=H-1, node distance=2cm] (hl1) {$\vz^{(1)}$};
            \node[annot,above of=H-1, node distance=1cm] (hl1) {$[-5, 7]$};
            
            \node[annot,below of=H-2, node distance=1cm] {$[-10, 18]$};

             \node[annot,below of=H-5, node distance=1cm] {$[0, 18]$};
    
				\node[annot,above of=H-4, node distance=2cm] (hl2){$\hat{\vz}^{(1)}$};
                \node[annot,above of=H-4, node distance=1cm] (hl2){$[0,7]$};

                    \node[annot,above of=H-7, node distance=2cm] (hl3) {$\vz^{(2)}$ };
                    \node[annot,above of=H-7, node distance=1cm] (hl3) {$[-36, 28]$};
                    \node[annot,below of=H-8, node distance=1cm] (hl3) {$[0, 32]$};
                     \node[annot,above of=H-10, node distance=2cm] (hl4) {$\hat{\vz}^{(2)}$ };

                     \node[annot,above of=H-10, node distance=1cm] (hl4) {$[0, 28]$};

                     \node[annot,below of=H-11, node distance=1cm] (hl4) {$[0, 32]$};

                    \node[annot,right of=O-1, node distance=1.5cm] (o1) {{\color{Green}$\bm{[-33, 18.9]}$}\\$[-56, 32]$\\{\color{purple}$[-42, 24.3]$}};
                
                    \node[annot,above of=O-1, node distance=2.75cm] (o1) {$z^{(3)}$};

			\end{tikzpicture}
    \caption{Toy DNN used in this example. Intervals reported in green are the exact output reachable set computed via MIP, in black are the ones of IBP, and finally, in purple, the results for CROWN considering the input $[[-2,2], [-1,3]]$.}
    \label{fig:toyDNN}
\end{figure}
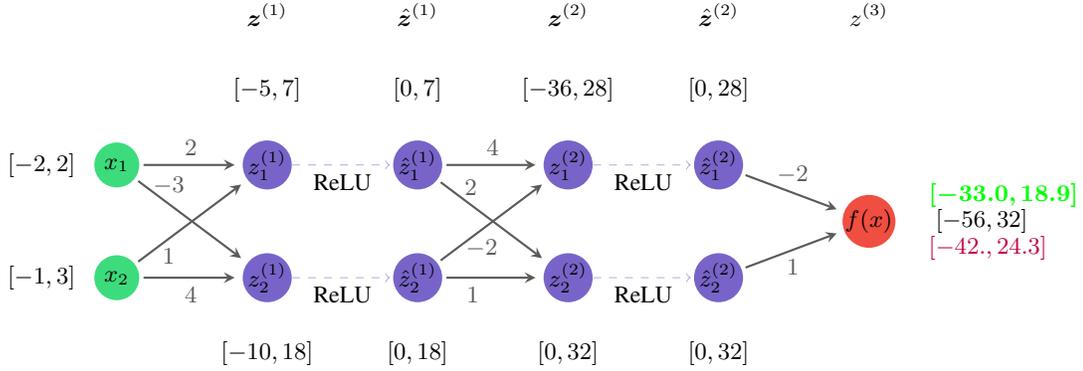

Consider a neural network with two inputs, two hidden layers with ReLU activation, and one single output. Following the notation introduced in  Sec. \ref{sec:preliminaries} we define:
\[
\mW^{(1)}=\begin{bmatrix}
    2       &  1 \\
    -3 & 4
\end{bmatrix},
\quad
\mW^{(2)}=\begin{bmatrix}
4       &  -2 \\
    2 & 1
\end{bmatrix},
\quad 
{\vw^{(3)}}^T = [-2, 1];
\]
and, for simplicity, we set the bias terms in the layers to zero. We consider an original input ${\vx_0}^T= [0, 1]$ and an $\ell_\infty$ $\varepsilon=2$ perturbation around it, thus obtaining a perturbation region $\mathcal{C}=[[-2,2], [-1,3]]$.

By propagating these intervals through the DNN, we obtain the interval $[-56, 32]$ as the output reachable set. Given the reasonable size of the neural network, before computing the linear lower and upper bounds using \texttt{LiRPA}, we employed the sound and complete MIP \citep{MIP} solver to compute the true min and max of the function, respectively, which correspond to $[-33, 18.89]$, highlighted in green in Fig. \ref{fig:toyDNN}.

To compute the lower and upper bounds using CROWN \citep{crown}, we employ LiRPA's backward computation strategy. To this end, it is helpful to represent the neural network as reported in Fig. \ref{fig:alternativeDNN}.

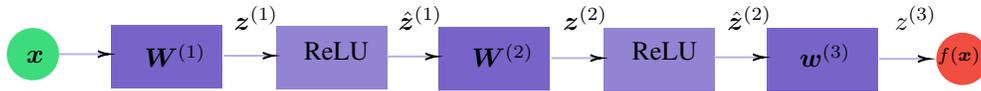
\begin{figure}[h!]
    \centering
    \tikzset{every picture/.style={line width=0.75pt}} %set default line width to 0.75pt        

\begin{tikzpicture}[x=0.6pt,y=0.6pt,yscale=-1,xscale=1]
%uncomment if require: \path (0,300); %set diagram left start at 0, and has height of 300

%Shape: Circle [id:dp5507164524201971] 
\draw  [draw opacity=0][fill={rgb, 255:red, 60; green, 220; blue, 125 }  ,fill opacity=1 ] (30,98.5) .. controls (30,89.39) and (37.39,82) .. (46.5,82) .. controls (55.61,82) and (63,89.39) .. (63,98.5) .. controls (63,107.61) and (55.61,115) .. (46.5,115) .. controls (37.39,115) and (30,107.61) .. (30,98.5) -- cycle ;
%Shape: Rectangle [id:dp3420903647943586] 
\draw  [draw opacity=0][fill={rgb, 255:red, 120; green, 100; blue, 200 }  ,fill opacity=1 ] (96,80) -- (166,80) -- (166,120) -- (96,120) -- cycle ;
%Straight Lines [id:da5834063783763016] 
\draw    (63,98.5) -- (92,98.5) ;
\draw [shift={(94,98.5)}, rotate = 180] [color={rgb, 255:red, 0; green, 0; blue, 0 }  ][line width=0.75]    (6.56,-1.97) .. controls (4.17,-0.84) and (1.99,-0.18) .. (0,0) .. controls (1.99,0.18) and (4.17,0.84) .. (6.56,1.97)   ;
%Shape: Rectangle [id:dp3379583664924073] 
\draw  [draw opacity=0][fill={rgb, 255:red, 120; green, 100; blue, 200 }  ,fill opacity=0.8 ] (200,80.5) -- (270,80.5) -- (270,120.5) -- (200,120.5) -- cycle ;
%Straight Lines [id:da6791400764853078] 
\draw    (167,99.5) -- (196,99.5) ;
\draw [shift={(198,99.5)}, rotate = 180] [color={rgb, 255:red, 0; green, 0; blue, 0 }  ][line width=0.75]    (6.56,-1.97) .. controls (4.17,-0.84) and (1.99,-0.18) .. (0,0) .. controls (1.99,0.18) and (4.17,0.84) .. (6.56,1.97)   ;
%Shape: Rectangle [id:dp7304389114843151] 
\draw  [draw opacity=0][fill={rgb, 255:red, 120; green, 100; blue, 200 }  ,fill opacity=1 ] (302.5,81.5) -- (372.5,81.5) -- (372.5,121.5) -- (302.5,121.5) -- cycle ;
%Straight Lines [id:da5150186200062842] 
\draw    (269.5,100) -- (298.5,100) ;
\draw [shift={(300.5,100)}, rotate = 180] [color={rgb, 255:red, 0; green, 0; blue, 0 }  ][line width=0.75]    (6.56,-1.97) .. controls (4.17,-0.84) and (1.99,-0.18) .. (0,0) .. controls (1.99,0.18) and (4.17,0.84) .. (6.56,1.97)   ;
%Shape: Rectangle [id:dp9386756149865806] 
\draw  [draw opacity=0][fill={rgb, 255:red, 120; green, 100; blue, 200 }  ,fill opacity=0.8 ] (406.5,82) -- (476.5,82) -- (476.5,122) -- (406.5,122) -- cycle ;
%Straight Lines [id:da8689852350305439] 
\draw    (373.5,101) -- (402.5,101) ;
\draw [shift={(404.5,101)}, rotate = 180] [color={rgb, 255:red, 0; green, 0; blue, 0 }  ][line width=0.75]    (6.56,-1.97) .. controls (4.17,-0.84) and (1.99,-0.18) .. (0,0) .. controls (1.99,0.18) and (4.17,0.84) .. (6.56,1.97)   ;
%Shape: Rectangle [id:dp1515485615765505] 
\draw  [draw opacity=0][fill={rgb, 255:red, 120; green, 100; blue, 200 }  ,fill opacity=1 ] (509,82) -- (579,82) -- (579,122) -- (509,122) -- cycle ;
%Straight Lines [id:da6906207774369013] 
\draw    (476,100.5) -- (505,100.5) ;
\draw [shift={(507,100.5)}, rotate = 180] [color={rgb, 255:red, 0; green, 0; blue, 0 }  ][line width=0.75]    (6.56,-1.97) .. controls (4.17,-0.84) and (1.99,-0.18) .. (0,0) .. controls (1.99,0.18) and (4.17,0.84) .. (6.56,1.97)   ;
%Straight Lines [id:da18062367993032136] 
\draw    (580,101.5) -- (609,101.5) ;
\draw [shift={(611,101.5)}, rotate = 180] [color={rgb, 255:red, 0; green, 0; blue, 0 }  ][line width=0.75]    (6.56,-1.97) .. controls (4.17,-0.84) and (1.99,-0.18) .. (0,0) .. controls (1.99,0.18) and (4.17,0.84) .. (6.56,1.97)   ;
%Shape: Circle [id:dp521014707856173] 
\draw  [draw opacity=0][fill={rgb, 255:red, 240; green, 78; blue, 64 }  ,fill opacity=1 ] (616,101.5) .. controls (616,92.39) and (623.39,85) .. (632.5,85) .. controls (641.61,85) and (649,92.39) .. (649,101.5) .. controls (649,110.61) and (641.61,118) .. (632.5,118) .. controls (623.39,118) and (616,110.61) .. (616,101.5) -- cycle ;

% Text Node
\draw (115,90) node [anchor=north west][inner sep=0.75pt]   [align=left] {$\mW^{(1)}$};
% Text Node
\draw (40.5,94.5) node [anchor=north west][inner sep=0.75pt]   [align=left]  {$\vx$};
% Text Node
\draw (321.5,90) node [anchor=north west][inner sep=0.75pt]   [align=center] {$\mW^{(2)}$};
% Text Node
\draw (528,90) node [anchor=north west][inner sep=0.75pt]   [align=left] {$\vw^{(3)}$};
% Text Node
\draw (215.5,89) node [anchor=north west][inner sep=0.75pt]   [align=left] {ReLU}; 

% Text Node
\draw (422.5,89) node [anchor=north west][inner sep=0.75pt]   [align=left] {ReLU};

% Text Node
\draw (615,92) node [anchor=north west][inner sep=0.75pt]   [align=left] {\scriptsize $f(\vx)$};
% Text Node
\draw (588,66.5) node [anchor=north west][inner sep=0.75pt]   [align=center] {$z^{(3)}$};
% Text Node
\draw (482.5,66.5) node [anchor=north west][inner sep=0.75pt]   [align=center] {$\hat{\vz}^{(2)}$};
% Text Node
\draw (379,66.5) node [anchor=north west][inner sep=0.75pt]   [align=center] {$\vz^{(2)}$};
% Text Node
\draw (275,66.5) node [anchor=north west][inner sep=0.75pt]   [align=center] {$\hat{\vz}^{(1)}$};
% Text Node
\draw (171.5,66.5) node [anchor=north west][inner sep=0.75pt]   [align=center] {$\vz^{(1)}$};

\end{tikzpicture}
    \caption{Alternative representation of toy DNN of Figure \ref{fig:toyDNN}.}
    \label{fig:alternativeDNN}
\end{figure}

 We note that $\hat{\vz}^{(2)}$ and $\hat{\vz}^{(1)}$ contain non-linear activation functions (ReLU), and we have to linearize them to keep the linear relationship between the output and these hidden layers. To this end, we create $2 \times \#$ReLU layers (for the lower and upper bound, respectively) diagonal matrices $\underline{\mD}^{(2)},\overline{\mD}^{(2)}, \underline{\mD}^{(1)}, \overline{\mD}^{(1)}$ and bias vectors $\underline{\vb}^{(2)}, \overline{\vb}^{(2)}, \underline{\vb}^{(1)}, \overline{\vb}^{(1)}$ reflecting the impact of each ReLU nodes on the final output. We report for simplicity here the original definition provided in \citep{crown} 
 %also reported in Sec.\;\ref{sec:preliminaries} 
 (a similar definition is applied to compute the $i$-th layer $\overline{\mD}^{(i)}$ and $\overline{\vb}^{(i)}$):

\begin{equation}\label{eq:matrix_bias}
    \underline{\mD}^{(i)} = \begin{cases} 
     1 \quad & \vl_j \geq 0,\\
    0 \quad & \vu_j \leq 0,\\
    \alpha_j  \quad & \vu_j > 0 > \vl_j \text{ and }  \mA_j^{(i+1)} \geq 0,\\
    \frac{\vu_j}{\vu_j - \vl_j} \quad &\vu_j > 0 > \vl_j \text{ and } \mA_j^{(i+1)} < 0
\end{cases}
\hspace{1cm}
\underline{\vb}^{(i)} =\begin{cases} 
    0 \quad &\vl_j > 0 \text{ or } \vu_j \leq 0,\\
    0 \quad &\vu_j > 0 > \vl_j \text{ and } \mA_j^{(i+1)} \geq 0,\\
    -\frac{\vu_j\vl_j}{\vu_j - \vl_j} \quad &\vu_j > 0 > \vl_j \text{ and } \mA_j^{(i+1)} < 0 .
\end{cases}
 \end{equation}

In the following, for simplicity, we always set $\alpha_j=0$. After defining the $i$-th diagonal matrix, we can compute the $i$-th layer relaxation with respect to the output as
 $\underline{\mA}^{(i)} = \underline{\mA}^{(i+1)} \underline{\mD}^{(i)} \mW^{(i)}$ and similarly for the $\overline{\mA}^{(i)}$. In the beginning, we set $\underline{\mA}^{(4)}=\overline{\mA}^{(4)}=I$ and $\underline{\mA}^{(3)}=\overline{\mA}^{(3)}={\vw^{(3)}}^T$ and write starting from right to left (backward computation)\footnote{We report the lower bound version but for the upper we have similar consideration with the reversed inequality.}

\begin{align*}
    f(x) &= z^{(3)}(\vx)\\
         &= {\vw^{(3)}}^T\hat{z}^{(2)}(\vx)\\
         &\geq 
         \underline{\mA}^{(3)}\underline{\mD}^{(2)} z^{(2)}(\vx) && \text{computing a linearization for }\hat{z}^{(2)}\\
         & \geq \underbrace{\underline{\mA}^{(3)}\underline{\mD}^{(2)} \mW^{(2)}}_{\underline{\mA}^{(2)}} \hat{z}^{(1)}(\vx) && \text{rewriting } {z}^{(2)}=\mW^{(2)}\hat{z}^{(1)}\\
         &\geq \underline{\mA}^{(2)}\underline{\mD}^{(1)}z^{(1)}(\vx) && \text{computing a linear bound for }\hat{z}^{(1)}\\
         &\geq \underbrace{\underline{\mA}^{(2)}\underline{\mD}^{(1)}\mW^{(1)}}_{\underline{\mA}^{(1)}}(\vx) && \text{rewriting } {z}^{(1)}=\mW^{(1)}\hat{z}^{(0)}=\mW^{(1)}(\vx)\\
         & \geq \underline{\mA}^{(1)}(\vx) + \underline{d}.
\end{align*}

Hence, in order to linearize $\hat{z}^{(2)} (\vx)$ we compute $\underline{\mD}^{(2)},\overline{\mD}^{(2)}$ and $\underline{\vb}^{(2)}, \overline{\vb}^{(2)}$ which presely correspond to

\begin{minipage}{0.49\linewidth}
    \begin{align*}
        &\underline{\mD}^{(2)} = \begin{bmatrix}
        \frac{u}{u-l} & 0\\
        0 & 1
    \end{bmatrix}= \begin{bmatrix}
        0.4375 & 0\\
        0&1
    \end{bmatrix}\\
     &\overline{\mD}^{(2)} = \begin{bmatrix}
        \alpha & 0\\
        0 & 1
    \end{bmatrix}= \begin{bmatrix}
        0 & 0\\
        0&1
    \end{bmatrix}
    \end{align*}
\end{minipage}\hfill
\begin{minipage}{0.49\linewidth}
\begin{align*}
&\underline{\vb}^{(2)} = \begin{bmatrix}
    \frac{-ul}{u-l}\\
    0 
\end{bmatrix}= \begin{bmatrix}
    15.75\\
    0
\end{bmatrix}\\
&\overline{\vb}^{(2)} = \begin{bmatrix}
    0\\
    0 
\end{bmatrix}
\end{align*}
\end{minipage}

where $\underline{\mD}_{j,j}^{(2)}$ element is computed looking at each intermediate pre-activated bounds of $\vz^{(2)}_j$ and the sign of $j$-th element of the vector $\underline{\mA}^{(3)}$. Thus we have $\underline{\mA}^{(2)} = \underline{\mA}^{(3)} \underline{\mD}^{(2)} \mW^{(2)} = [-1.5, 2.75]$ and $\overline{\mA}^{(2)} = \overline{\mA}^{(3)} \overline{\mD}^{(2)} \mW^{(2)} = [2, 1]$. We proceed computing the diagonal matrix $\underline{\mD}^{(1)}$, $\overline{\mD}^{(1)}$ and bias vectors $\underline{\vb}^{(1)}$, $\overline{\vb}^{(1)}$ for $\hat{\vz}^{(1)}$. In detail, we obtain,

\begin{minipage}{0.49\linewidth}
    \begin{align*}
        &\underline{\mD}^{(1)} = \begin{bmatrix}
        \frac{u}{u-l} & 0\\
        0 & \alpha
    \end{bmatrix}= \begin{bmatrix}
        0.583 & 0\\
        0& 0
    \end{bmatrix}\\
     &\overline{\mD}^{(1)} = \begin{bmatrix}
        \frac{u}{u-l} & 0\\
        0 & \frac{u}{u-l}
    \end{bmatrix}= \begin{bmatrix}
        0.583 & 0\\
        0& 0.643
    \end{bmatrix}
    \end{align*}
\end{minipage}\hfill
\begin{minipage}{0.49\linewidth}
\begin{align*}
&\underline{\vb}^{(1)} = \begin{bmatrix}
    \frac{-ul}{u-l}\\
    0 
\end{bmatrix}= \begin{bmatrix}
    2.92\\
    0
\end{bmatrix}\\
&\overline{\vb}^{(1)} = \begin{bmatrix}
    \frac{-ul}{u-l}\\
    \frac{-ul}{u-l} 
\end{bmatrix}
= \begin{bmatrix}
    2.92\\
    6.43 
\end{bmatrix}
\end{align*}
\end{minipage}

with $\underline{\mA}^{(1)} = \underline{\mA}^{(2)}\underline{\mD}^{(1)}\mW^{(1)} = [-1.75, -0.875]$ and $\overline{\mA}^{(1)} = \overline{\mA}^{(2)}\overline{\mD}^{(1)}\mW^{(1)} = [0.40, 3.74]$.

Finally, we compute the sum in the bias vectors $\underline{d} = \underline{\mA}^{(3)} \underline{\vb}^{(2)} + \underline{\mA}^{(2)} \underline{\vb}^{(1)} = -35.88$ and $\overline{d} = \overline{\mA}^{(3)} \overline{\vb}^{(2)} + \overline{\mA}^{(2)} \overline{\vb}^{(1)} = 12.27$.
The final linear relation is thus $\underline{f}(\vx) \geq \underline{\mA}^{(1)} (\vx) + \underline{d}$ and $\overline{f}(\vx) \leq \overline{\mA}^{(1)} (\vx) + \overline{d}$. Using Hölder’s inequality \citep{crown}, we obtain

\begin{align*}
    \underline{f}_{\text{CROWN}} = \min\limits_{\vx \in \mathcal{C}} \underline{\mA}^{(1)} (\vx) + \underline{d} &= -\vert\vert \underline{\mA}^{(1)} \vert\vert_1 \cdot \varepsilon + \underline{\mA}^{(1)} \vx_0 + \underline{d}\\
    &= -5.25 -0.875 -35.88 = -42. 
\end{align*}
\begin{align*}
    \overline{f}_{\text{CROWN}} = \max\limits_{\vx \in \mathcal{C}} \overline{\mA}^{(1)} (\vx) + \overline{d} &= \vert\vert \underline{\mA}^{(1)} \vert\vert_1 \cdot \varepsilon + \overline{\mA}^{(1)} \vx_0 + \overline{d}\\
    &= 8.28 + 3.74 + 12.27 = 24.3. 
\end{align*}

As we can notice, we obtain a tight over-approximation of the true lower and upper bounds, significantly improving the bounds derived with naive IBP ([-56,32]).

\section{Probabilistically Tightened LiRPA via Underestimation}\label{sec:method}

In this section, we theoretically investigate whether and how it is possible to compute tight intermediate reachable sets, which directly impact the linear output bound computation. As highlighted in Alg.~\ref{alg:pt_lirpa}, the entire linearization process crucially depends on the bounds computed at line 3. However, computing exact values for such bounds is generally infeasible, as the problem has been shown to be NP-hard \cite{Reluplex,fastlin}. Motivated by the speculation of \citet{acrown}, we therefore explore efficient alternatives for approximating the exact (unknown) values of these intermediate bounds as tightly as possible. To this end, we study the impact of a sampling-based approach and what type of probabilistic guarantees can be achieved with it. In detail, we begin employing the \textit{statistical prediction of tolerance limits} results \cite{wilks}, which allows a closed-form derivation of the required sample size to achieve a desired confidence level. However, as we will show, this method only quantifies the fraction of the perturbation region where the guarantees may fail, without addressing the magnitude of potential violations relative to the estimated bounds. Consequently, the resulting probabilistic certificates are weaker than those provided by related approaches such as \citep{proven,randomizedSmoothing}, where guarantees hold across the entire perturbation region. To mitigate this limitation, we first introduce a qualitative extension of Wilks’ guarantees adapted to our setting. While novel, this bound can become overly loose in high-dimensional scenarios. To address this issue, we then develop a new theoretical and practical result based on \textit{extreme value theory} \cite{haan2006extreme}, which allows us to tightly bound the magnitude of possible violations with respect to the estimated bounds.

We now present all the theoretical and practical components to compute tightened bounds with probabilistic guarantees within our \texttt{PT-LiRPA} approach. Our approach is based on a statistical methodology that allows us to compute estimates on the neural network's output in the form $\min_{\vx \in \mathcal{C}} f(\vx)$ and $\max_{\vx \in \mathcal{C}} f(\vx)$, which hold with high confidence.
In particular, we employ the tools of \textit{statistical prediction of tolerance limits} of \citep{wilks}. 
Fix a node $z^{(i)}_j$ in the neural network and, as before, let 
$z^{(i)}_j(\vx)$ be the preactivation value for the node when the input to the network is a vector $\vx.$ Let $\vx_1, \dots \vx_n$ be $n$ points (vectors) independently and uniformly sampled from the perturbation set of interest $\mathcal{C}.$
We compute $z^{(i)}_j$'s (estimated) pre-activated bounds as:
\[\overline{l}^{(i)}_j = \min\limits_{k= 1,\dots, n} z^{(i)}_j(\vx_k); \qquad \underline{u}^{(i)}_j = \max\limits_{k = 1,\dots, n} z^{(i)}_j(\vx_k), \]
i.e., the minimum and maximum value observed from the propagation of the $n$ random points in that specific neuron $z^{(i)}_j$. 
Let ${l^*_j}^{(i)} = \min_{\vx \in \mathcal{C}} z^{(i)}_j(\vx)$ and 
${u^*_j}^{(i)} = \max_{\vx \in \mathcal{C}} z^{(i)}_j(\vx).$ Clearly, we have 
\begin{equation} \label{equation_bound_sampling}
\overline{l}_j^{(i)} \geq {{l}_j^*}^{(i)}; \quad \underline{u}_j^{(i)} \leq {{u}_j^*}^{(i)}.
\end{equation}

However, based on the results of \citep{wilks}, for any $\psi, R \in (0,1)$, we can choose the sample size $n$ that guarantees that the estimated reachable set is correct with probability $\psi$ for at least a fraction $R$ of points in the perturbation set $\mathcal{C}$. Crucially, this statistical result does not require any knowledge of the probability distribution governing the output of our function of interest. %and thus also applies to our setting. 
Formally, we have the following.

\def\l{\overline{l}}
\def\u{\underline{u}}

\begin{lemma}[Probabilistically tightened reachable sets]\label{lemma:opt_bounds_wilks}
    Let $n$ the number of samples employed in the computation and the interval $[\overline{l}^{(i)}_j, \underline{u}^{(i)}_j]$, where $\overline{l}_j^{(i)}$ and $\underline{u}_j^{(i)}$ are the minimum and maximum pre-activation values observed in the sample, respectively. Fix $R \in (0,1)$, then for any further possibly infinite sequence of samples from $\mathcal{C}$, the probability that $[\overline{l}_j^{(i)}, \underline{u}_j^{(i)}]$ is correct\footnote{In the sense that there exists $C'\subseteq C$ such that $|C'|/|C| \geq R$ and 
    for all $\vx \in C'$ it holds that $z^{(i)}_j(\vx) \in [\overline{l}_j^{(i)}, \underline{u}_j^{(i)}].$} for at least a fraction $R$ of points is at least $\psi=n \cdot \int_R^1 x^{n-1}\;dx = (1 - R^n)$. 
\end{lemma}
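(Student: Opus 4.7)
The plan is to apply the classical Wilks tolerance-interval argument, reducing the claim to an elementary order-statistic computation via the probability integral transform.

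First I would transform to uniforms. Let $F$ denote the CDF of the scalar random variable $Y = z^{(i)}_j(\vx)$ when $\vx$ is drawn uniformly from $\mathcal{C}$. Because $\vx_1,\ldots,\vx_n$ are iid uniform on $\mathcal{C}$, the preactivation values $Y_k = z^{(i)}_j(\vx_k)$ are iid with CDF $F$. Assuming $F$ continuous (the general case is handled by a routine tie-breaking/perturbation argument), the variables $U_k = F(Y_k)$ are iid Uniform$[0,1]$. Since $F$ is monotone non-decreasing, $F(\overline{l}^{(i)}_j) = U_{(1)}$ and $F(\underline{u}^{(i)}_j) = U_{(n)}$, where $U_{(1)} \le \cdots \le U_{(n)}$ are the order statistics of the $U_k$.

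Next I would translate coverage into an order-statistic condition. Under the uniform measure on $\mathcal{C}$, the fraction of points $\vx \in \mathcal{C}$ satisfying $z^{(i)}_j(\vx) \in [\overline{l}^{(i)}_j, \underline{u}^{(i)}_j]$ equals $F(\underline{u}^{(i)}_j) - F(\overline{l}^{(i)}_j) = U_{(n)} - U_{(1)}$. Using only the upper side, the set $\{\vx : z^{(i)}_j(\vx) \le \underline{u}^{(i)}_j\} \subseteq \mathcal{C}$ already has measure $U_{(n)}$ and, once this is at least $R$, serves as a witness $C'$ for the lemma's footnote. The core computation then exploits the fact that $\Pr[U_{(n)} \le u] = u^n$ by independence, so $U_{(n)}$ has density $n u^{n-1}$ on $[0,1]$ and therefore
\[
\Pr\bigl[U_{(n)} \ge R\bigr] \;=\; \int_R^1 n\, u^{n-1}\,du \;=\; 1 - R^n,
\]
matching exactly the claimed $\psi$. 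Whenever $U_{(n)} \ge R$ holds, the half-line witness set above has relative measure at least $R$, which gives the lemma.

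The main subtlety I anticipate is aligning the one-sided order-statistic computation with the symmetric, two-sided appearance of the interval $[\overline{l}^{(i)}_j, \underline{u}^{(i)}_j]$. A strictly two-sided reading would demand the CDF of the range $U_{(n)} - U_{(1)}$, which is Beta$(n-1,2)$ and gives the tighter (and strictly smaller for $R \in (0,1)$) value $1 - n R^{n-1} + (n-1) R^n$. The reconciliation—and the cleanest route to the stated bound—is to note that the lemma only asserts existence of a subset $C' \subseteq \mathcal{C}$ of relative measure at least $R$ on which the interval is respected; the one-sided half-line witness suffices, and by symmetry the analogous argument using $\overline{l}^{(i)}_j$ yields the same $1 - R^n$ guarantee. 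The remaining steps are elementary, so this alignment between phrasing and probabilistic content is the only delicate point.
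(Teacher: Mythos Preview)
The paper does not give its own proof of this lemma; it simply states that it ``follows from \cite{wilks}.'' Your approach via the probability integral transform and uniform order statistics is precisely the classical Wilks tolerance-limit argument, so in spirit you are aligned with what the paper invokes.

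That said, your reconciliation step contains a genuine gap. You propose using the half-line $\{\vx : z^{(i)}_j(\vx) \le \underline{u}^{(i)}_j\}$ as the witness set $C'$. But the footnote requires that \emph{every} $\vx \in C'$ satisfy $z^{(i)}_j(\vx) \in [\overline{l}^{(i)}_j, \underline{u}^{(i)}_j]$, and your half-line can contain points with $z^{(i)}_j(\vx) < \overline{l}^{(i)}_j$, which lie outside the interval. The maximal valid witness set is exactly $\{\vx : \overline{l}^{(i)}_j \le z^{(i)}_j(\vx) \le \underline{u}^{(i)}_j\}$, whose measure is $U_{(n)} - U_{(1)}$, and you yourself correctly computed that $\Pr[U_{(n)} - U_{(1)} \ge R] = 1 - nR^{n-1} + (n-1)R^n$, which is strictly less than $1 - R^n$ for $R \in (0,1)$ and $n \ge 2$.

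In other words, you have correctly diagnosed a mismatch between the one-sided formula $1 - R^n$ appearing in the lemma and the two-sided condition described in the footnote; the issue is that your attempted rescue does not work. The discrepancy lives in the paper's statement itself rather than in your analysis: the bound $1-R^n$ is the Wilks one-sided guarantee (for $\{\vx : z^{(i)}_j(\vx) \le \underline{u}^{(i)}_j\}$ or its mirror), whereas the footnote's two-sided reading yields the smaller Beta-based quantity you derived.
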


Hence, following Lemma \ref{lemma:opt_bounds_wilks}, whose proof follows from \citep{wilks}, for any desired confidence level $\psi$, and lower bound fraction $R$, we can compute the number $n$ of samples sufficient to obtain the provable probabilistic guarantees on the desired reachable set accuracy. Specifically, we have that if we use a sample of $n\geq\frac{\ln{(1-\psi)}}{\ln{(R)}}$ input points and with them we obtain an estimated reachable set  $[\overline{l}^{(i)}_j, \underline{u}^{(i)}_j]$, then with probability $\psi$ at most a fraction $(1-R)$ of points in an indefinitely larger future sample could fall outside such reachable set. By taking into account the total number of neurons in the DNN, and using a independently chosen set of points for each neuron, we can extend the above result to compute  reachable sets for each neuron that are simultaneously correct with probability $\psi$ for at least a fraction $R$ of the points in $\mathcal{C}$: 
%$\psi$ determine the number of samples needed to ensure, with a given level of confidence, that all reachable sets throughout the network are correct: %In fact, we have:\\

\begin{proposition}\label{prop:wilks_union_probability}
    Consider an N-layer ReLU DNN with $m$ neurons. Fix a confidence level $\psi$ and coverage ratio $R \in (0,1)$. For each intermediate neuron $z^{(i)}_j,$ collect $n'$ i.i.d. samples $\vx_1, \ldots,\vx_n$ from the perturbation region $\mathcal{C}$, and  compute the approximate reachable set as $\overline{l}_j^{(i)} = \min\limits_{k= 1,\dots, n'} z^{(i)}_j(\vx_k)$ and $\underline{u}_j^{(i)} = \max\limits_{k= 1,\dots, n'} z^{(i)}_j(\vx_k)$. If the number of samples  used for each neuron satisfies $n'\geq \frac{\ln{(1-\psi^{1/m})}}{\ln{(1- (1-R)/m)}}$, then for any (possibly infinite) sequence $X$ of inputs sampled independently and uniformly from $\mathcal{C},$ for each neuron $z^{(i)}_j$ with probability at least $\psi^{1/m}$ there is a subsequence $X'$ of $X$ of size $|X'| \geq 
    \left(1- \frac{1-R}{m}\right)|X|$ such that for each $\vx \in X'$ the estimated reachable set is sound, i.e., 
    $z^{(i)}_j(\vx) \in [\l^{(i)}_j, \u^{(i)}_j].$
    %$ of any further possibly infinite sequence of samples from $\mathcal{C}$.
\end{proposition}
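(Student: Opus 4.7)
The plan is to apply Lemma \ref{lemma:opt_bounds_wilks} directly to each of the $m$ intermediate neurons, with the per-neuron confidence and coverage parameters rescaled so that the stated sample-size formula emerges. First I would fix an arbitrary neuron $z^{(i)}_j$, denote $R'' := 1 - (1-R)/m$, and invoke Lemma \ref{lemma:opt_bounds_wilks} on the $n'$ i.i.d.\ samples drawn specifically for this neuron. The lemma then guarantees that, with probability at least $1-(R'')^{n'}$, the observed interval $[\overline{l}^{(i)}_j, \underline{u}^{(i)}_j]$ contains the preactivation value $z^{(i)}_j(\vx)$ for at least a fraction $R''$ of any further (possibly infinite) i.i.d.\ sample $X$ drawn from $\mathcal{C}$.

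Next, I would verify that the stated bound on $n'$ is exactly what is required to push this per-neuron probability above $\psi^{1/m}$. Solving the inequality $1-(R'')^{n'} \geq \psi^{1/m}$ for $n'$---using that both $\ln R''$ and $\ln(1-\psi^{1/m})$ are negative so that taking logs and dividing flips the inequality only once---yields $n' \geq \tfrac{\ln(1-\psi^{1/m})}{\ln R''} = \tfrac{\ln(1-\psi^{1/m})}{\ln(1-(1-R)/m)}$, matching the hypothesis. This already establishes the per-neuron conclusion of the proposition, with the subsequence $X'$ being the set of points in $X$ whose preactivation at $z^{(i)}_j$ lands in $[\overline{l}^{(i)}_j, \underline{u}^{(i)}_j]$.

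The main subtlety, rather than a genuine obstacle, is explaining why the particular rescaling $\psi \mapsto \psi^{1/m}$ and $R \mapsto 1-(1-R)/m$ is the right one. This is justified by the intended joint use across neurons: since the samples drawn for distinct neurons are independent, multiplying the $m$ per-neuron probabilities gives $(\psi^{1/m})^m = \psi$ for the event that the guarantee holds simultaneously at every neuron; and since each neuron contributes a "bad" subset of $X$ of fractional size at most $(1-R)/m$, a union bound across the $m$ neurons shrinks the common good subsequence by at most $m\cdot (1-R)/m = 1-R$, so the joint good subsequence still has size at least $R|X|$. The per-neuron parameters are thus calibrated so that the obvious union argument recovers the original target $(\psi, R)$ at the network level. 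I would close by noting that the "possibly infinite" phrasing of $X$ introduces no additional difficulty, because Lemma \ref{lemma:opt_bounds_wilks} is distribution-free and its fractional-coverage statement extends verbatim to arbitrarily long i.i.d.\ sequences.
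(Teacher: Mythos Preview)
Your proposal is correct and matches the paper's intended argument: the paper does not spell out a proof of this proposition but presents it as the immediate extension of Lemma~\ref{lemma:opt_bounds_wilks} obtained by applying the per-neuron guarantee with rescaled parameters $\psi \mapsto \psi^{1/m}$ and $R \mapsto 1-(1-R)/m$, exactly as you describe. Your added explanation of why this rescaling is calibrated for the subsequent network-level union/product argument is precisely how the paper deploys the proposition in the proof of Lemma~\ref{lemma:sound_sampling}.
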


We will now show that we can combine the (probabilistically valid) estimation computed on the reachable sets of individual neurons using a LiRPA approach, so as to obtain, first, a probabilistically valid over-approximation of any ReLU layer in the DNN and thus on the final network's lower and upper outputs. In detail, we begin by proving that the estimated reachable sets used to produce the vectors $\mA$ and $\vb$, together with the linearization applied to each ReLU layer of the network (as in Sec. \ref{sec:lirpa_preliminaries}), yield a probabilistically sound over-approximation that covers at least a fraction $R$ of the perturbation region $\mathcal{C}$.

\begin{lemma}[ReLU Layer Relaxation using \texttt{PT-LiRPA}]\label{lemma:sound_sampling}
    Consider a ReLU DNN with $m$ neurons distributed over $N$ layers, and $\mathcal{C}$ a perturbation region of interest. Fix confidence and coverage parameter $\psi, R \in (0,1)$.
    Fix a layer $i \in \{1, \dots, N-2\},$ and for each $j=1, \dots, d_i$ compute an estimated reachable set $[\overline{l}_j, \underline{u}_j]$ for neuron $z^{(i)}_j,$ using 
    $n' \geq m \frac{ln(1-\psi^{1/m})}{ln(1-(1-R)/m)}$ independently and uniformly sampled point from $\mathcal{C},$ as by Proposition \ref{prop:wilks_union_probability}. 
   Let $\overline{\vl} = (\overline{l}_1, \dots, \overline{l}_{d_i})$ and $\underline{\vu}=(\underline{u}_1, \dots, \underline{u}_{d_i}).$ 
    Let ${\mA}, \vb$ be the vectors of the linear bounds coefficients and biases (inductively) computed for the ReLU layer $i+1$, and such that, with probability $\geq \displaystyle{\psi^{\frac{d_{i+1}+ d_{i+2}+ \cdots + d_N}{m}},}$
    for any $n \in \mathbb{N}$ in any sequence of $n$ points uniformly and independently sampled from $\mathcal{C},$ for at least $n \times \left(1 - \frac{1-R}{m} \sum_{j=i+1}^N d_j\right)$ points $\vx$ in $X$ it holds that  
    \begin{equation} \label{induction_hypothesis} 
    f(\vx) \geq {\mA}^T ReLU(\vv_{\vx}) +\vb, %\geq {\mA^*}^T (\underline{\mD}^*\vv_{\vx} + \underline{\vb}^*),
    \end{equation}
    where $\vv_{\vx}$ is the vector of pre-activation values
    of the neurons in the layer $i$ when the input is $\vx.$
    Then, 
    \begin{enumerate}
        \item with probability %$\geq \psi$ 
    $\geq \displaystyle{\psi^{\frac{d_{i}}{m}},}$
    it holds that 
    for any $n \in \mathbb{N}$ in any sequence $X$ of $n$ points uniformly and independently sampled from $\mathcal{C},$ for at least $n \times \left(1 - \frac{1-R}{m} d_i\right)$ points $\vx$ in $X$ it holds that:
    \begin{equation} \label{induction_step}
    %\small
    {\mA}^T ReLU(\vv_{\vx}) \geq {\mA}^T(\underline{\mD}^*\vv_{\vx} + \underline{\vb}^*)
    \end{equation}

    where 
   
     \begin{center}
        
    \resizebox{0.8\linewidth}{!}{%
    $\underline{\mD}^* = \begin{cases} 
     1 \quad & \overline{\vl}_j \geq 0,\\
    0 \quad & \underline{\vu}_j \leq 0,\\
    \alpha_j  \quad & \underline{\vu}_j > 0 > \overline{\vl}_j \text{ and }  {\mA_j} \geq 0,\\
    \frac{\underline{\vu}_j}{\underline{\vu}_j - \overline{\vl}_j} \quad &\underline{\vu}_j > 0 > \overline{\vl}_j \text{ and } {\mA_j} < 0
    \end{cases}$
    \quad
    $\underline{\vb}^* =\begin{cases} 
    0 \quad &\overline{\vl}_j > 0 \text{ or } \underline{\vu}_j \leq 0,\\
    0 \quad &\underline{\vu}_j > 0 > \overline{\vl}_j \text{ and } {\mA_j} \geq 0,\\
    -\frac{\underline{\vu}_j\overline{\vl}_j}{\underline{\vu}_j - \overline{\vl}_j} \quad &\underline{\vu}_j > 0 > \overline{\vl}_j \text{ and } {\mA_j} < 0 .
    \end{cases}$
    }
    \end{center}

    \item 
    with probability
    $\geq \displaystyle{\psi^{\frac{d_{i}+ d_{i+1}+ \cdots + d_N}{m}}}$, for vectors ${\mA}'= {\mA}^T \mD^*$ and $\vb'= \vb + \mA^T\vb^*$  it holds that for any $n \in \mathbb{N}$ in any sequence $X$ of $n$ points uniformly and independently sampled from $\mathcal{C},$ for at least
    $n \times \left(1 - \frac{1-R}{m} \sum_{j=i}^N d_j\right)$ points $\vx$ in $X$ we have that  
    \begin{equation} \label{induction-result}
    f(\vx) \geq {\mA'} ReLU(\vv_{\vx}) +\vb', %\geq {\mA^*}^T (\underline{\mD}^*\vv_{\vx} + \underline{\vb}^*),
    \end{equation}

    \end{enumerate}

    \end{lemma}

\begin{proof}
The proof closely follows the analogous result at the basis of the LiRPA approach of \citet{crown}. 
The only (crucial) difference is that we construct  $\underline{\mD}^*$ and $\underline{\vb}^*$, i.e., the diagonal matrix, and the bias vector meant to provide a linear lower bound on the post-activation values of layer $i,$ using the vectors of estimated reachable sets of the nodes in the layer, instead of their actual reachable sets.

Let $X$ be a sequence of $n$ points independently and uniformly sampled from $\mathcal{C}.$

\def\E{\mathcal{E}}

For each $j=1, \dots, d_i,$ let $$X_{j} = \{\vx \in X \mid z^{(i)}_j(\vx) \not \in [\l_j, \u_j ]\},$$
and let $\mathcal{E}_j$ be the event
$$\E_j = \{|X_j| \leq \frac{1-R}{m}|X|\}.$$

Because of the way we compute the estimated reachable sets $[\l_j,\u_j]$ (Proposition \ref{prop:wilks_union_probability}) we have that for each $j$
$$Pr[\E_j] \geq \psi^{1/m}.$$
Moreover, these events are independent, hence
$$Pr[\E_1 \wedge \E_2 \wedge \cdots \wedge \E_{d_i}] \geq \psi^{d_i/m}.$$

Let $$X_{OK} = X\setminus\left(\bigcup_{j=1}^{d_i} X_j\right)=
\{\vx \in X \mid \forall j, z^{(i)}_j \in [\l_j, \u_j]\}.$$
Then, $$|X_{OK}| \geq |X| - \sum_{j=1}^{d_i} |X_j|,$$ and in particular, if for each $j$ it holds that 
$|X_j| \leq \frac{1-R}{m}|X|$--i.e., when the 
event $\E_1 \wedge \cdots \wedge \E_{d_i}$ occurs--we have $|X_{OK}| \geq \left(1 - \frac{(1-R)}{m}d_i \right) |X|.$

We can conclude that the probability of the event $\E_{OK} = \{|X_{OK}| \geq (1-\frac{(1-R)d_i}{m})|X|\}$
satisfies
$$Pr[\E_{OK}] \geq Pr[\E_1 \wedge \E_2 \wedge \cdots \wedge \E_{d_i}] \geq \psi^{d_i/m}.$$

In words, we have shown that with probability 
$\psi^{1/m}$ for each point $\vx$ in a fraction of $X$ of size  
$(1-\frac{(1-R)d_i}{m})|X|,$  for all nodes $z^{(i)}_j$ in layer $i,$ the pre-activation value $z^{(i)}_j(\vx)$ induced by $\vx$ is contained in the estimated reachable sets 
$[\l_j, \u_j].$

In order to simplify the notation, let us fix an $\vx$ from $X_{OK}$ and let $\vv = \vz^{(i)}(\vx)$ be the vector of pre-activation values induced by $\vx$ in the nodes of layer $i.$ 
We will show that
for the $\underline{\mD}^*$ and $\underline{\vb}^*$ 
defined in the statement, we have  
that for each $j$ it holds that 
${\mA_j} (ReLU (\vv_j)) \geq {\mA_j} (\underline{\mD}^*_{j,j} \vv_j +\underline{\vb}^*_j),$ which immediately implies
the desired result, since 
${\mA}^T ReLU(\vv)= \sum_j {\mA_j} (ReLU (\vv_j)) \geq 
{\mA_j} (\underline{\mD}^*_{j,j} \vv_j +\underline{\vb}^*_j) = {\mA}^T(\underline{\mD}^*\vv_{\vx} + \underline{\vb}^*).$

    %Following the construction is in any standard LiRPA approach, we use ${\mA_j^*}$ which is the $j$-th coefficient of the row vector $\mA^*$ that encodes the linear relation (inductively assumed to hold with the desired statistical guarantees) between the layer $i+1$ and the output, and instead of a sound reachable set $[\vl_j, \vu_j],$ typically computed via IBP, we use $[\overline{\vl}_j, \underline{\vu}_j]$ which are the estimated reachable sets of the $j$-th node, computed with the sampling-based approach. 
    
    We start by noticing that 
   % for unstable ReLU nodes, 
    the following inequalities hold
    \begin{equation}\label{eq:inequality}
        \vl_j \leq \vl^*_j \leq \overline{\vl}_j < 0 < \underline{\vu}_j \leq \vu^*_j \leq \vu_j,
    \end{equation}
    where $[\vl^*_j, \vu^*_j]$ is the actual reachable set for the $j$-th node of the layer under consideration. 
    
    We split the argument into three cases according to the sign of the estimated values $\l_j$ and $\u_j.$ Moreover, we split each case into subcases according to the sign of $\mA_j$ and $\vv_j.$\\

    \noindent\fbox{{\sc Case 1.} $\underline{\vu}_j > 0 > \overline{\vl}_j$}
     \vspace{2mm}

    From inequality \ref{eq:inequality}, we know that since we are underestimating true bounds $[\vl^*_j, \vu^*_j]$, the ReLU node is actually unstable, even for any LiRPA approach. Comparing the diagonal coefficients $\frac{\underline{\vu}_j}{\underline{\vu}_j - \overline{\vl}_j}$ with $\frac{\vu_j}{\vu_j - \vl_j}$ and the biases $-\frac{\underline{\vu}_j\overline{\vl}_j}{\underline{\vu}_j - \overline{\vl}_j}$ with $-\frac{\vu_j\vl_j}{\vu_j - \vl_j}$ of \texttt{PT-LiRPA} and any LiRPA cannot be helpful. The relation between the coefficients strongly depends on the quality of the bounds computed, and we cannot draw any direct conclusion since in some cases $\underline{\mD} > \underline{\mD}^*$ and in some cases not. Hence, we need to proceed by subcases.

    \underline{{\em Subcase 1.1.} $\mA_j<0$ and $\vv_j < 0.$}

    Since $\vv_j < 0$ then $ReLU (\vv_j) = 0.$
    Moreover, we have 
    $$  
    (\underline{\mD}^*_{j,j} \vv_j +\underline{\vb}^*_j) = 
    \frac{\underline{\vu}_j}{\underline{\vu}_j - \overline{\vl}_j} \vv_j + \Big(-\frac{\underline{\vu}_j\overline{\vl}_j}{\underline{\vu}_j - \overline{\vl}_j} \Big)\\
    = \frac{\underline{\vu}_j (\vv_j - \overline{\vl}_j)}{\underline{\vu}_j - \overline{\vl}_j} \geq 0,$$
    where the last inequality holds    
    since $\vv_j \in [\overline{\vl}_j, \u_j]$ implies $\vv_j - \overline{\vl}_j \geq 0.$
        Multiplying both sides by $\mA_j<0$ and using $ReLU (\vv_j) = 0,$ we get 
    $${\mA_j} (ReLU (\vv_j)) = 0 \geq {\mA_j} (\underline{\mD}^*_{j,j} \vv_j +\underline{\vb}^*_j)$$ 
    %to prove becomes $ReLU (\vv_j) \leq \underline{\mD}^*_{j,j} \vv_j +\underline{\vb}^*_j$ where if $\vv_j < 0$ we have:
%
 %  $$0 \leq \frac{\underline{\vu}_j}{\underline{\vu}_j - \overline{\vl}_j} \vv_j + \Big(-\frac{\underline{\vu}_j\overline{\vl}_j}{\underline{\vu}_j - \overline{\vl}_j} \Big)\\
  %  = \frac{\underline{\vu}_j (\vv_j - \overline{\vl}_j)}{\underline{\vu}_j - \overline{\vl}_j}
   % $$

    %since $\overline{\vl}_j < 0$ and $\overline{\vl}_j \leq \vv_j$, thus $\vv_j - \overline{\vl}_j \geq 0$ which is enough to prove the inequality.

 \underline{{\em Subcase 1.2.} $\mA_j<0$ and $\vv_j \geq 0.$}

    Since $\vv_j \geq 0$ then $ReLU (\vv_j) = \vv_j.$
    Moreover, we have 
    $$  
    (\underline{\mD}^*_{j,j} \vv_j +\underline{\vb}^*_j) = 
    \frac{\underline{\vu}_j}{\underline{\vu}_j - \overline{\vl}_j} \vv_j + \Big(-\frac{\underline{\vu}_j\overline{\vl}_j}{\underline{\vu}_j - \overline{\vl}_j} \Big)\\
    \geq \vv_j,$$
    where the last inequality holds    
    since under the standing hypothesis the coefficient of $\vv_j$ in the left hand side is $\geq 1$ and term 
    $-\frac{\underline{\vu}_j\overline{\vl}_j}{\underline{\vu}_j - \overline{\vl}_j}$ is non-negative.
Multiplying both sides by $\mA_j < 0$ and using $ReLU (\vv_j) = \vv_j,$ we get 
    $${\mA_j} (ReLU (\vv_j)) = {\mA_j} \vv_j \geq {\mA_j} (\underline{\mD}^*_{j,j} \vv_j +\underline{\vb}^*_j).$$

    %where $\overline{\vl}_j<0$ and $(\vv_j - \underline{\vu}_j) < 0$. The inequality necessarily holds since the numerator and the denominator are positive. This concludes the first subcase.

    \underline{Subcase 1.3. $\mA_j\geq0$ and $\vv_j < 0$.} 
    
    We have 
    $$ReLU (\vv_j) = 0  \geq \alpha_j \vv_j + 0 = 
    \underline{\mD}^*_{j,j} \vv_j +\underline{\vb}^*_j\;,$$  since $0 <\alpha_j.$ Hence,  
    multiplying both sides by $\mA_j \geq 0$ we obtain again the desired inequality.
    \underline{Subcase 1.4. $\mA_j\geq0$ and $\vv_j \geq 0$.} 
    
    We have 
    $$ReLU(\vv_j) = \vv_j \geq \alpha_j \vv_j + 0 = 
    \underline{\mD}^*_{j,j} \vv_j +\underline{\vb}^*_j$$
    since $\alpha < 1.$ Again,  
    multiplying both sides by $\mA_j \geq 0$ we obtain again the desired inequality.
    This concludes the first case.

    %For the next cases, $\overline{\vl}_j > 0$ and  $\underline{\vl}_j > 0$, from the inequality \ref{eq:inequality}, we could define a ReLU node as (un)stable when, actually, it is not. However, from Lemma \ref{lemma:opt_bounds_wilks}, we know that
    %in even a potentially infinite sampling of points 
    %with probability $\psi^{1/m}$ at most a fraction $(1-R)/m$ of the points in $\mathcal{C}$ could induce a pre-activation value of the neuron
    %$z^{(i)}_j$ outside the reachable set $[\overline{l}_j, \underline{u}_j]$. Thus, with probability $\geq \psi$.

    \vspace{3mm}
    \noindent\fbox{{\sc Case 2.} $\overline{\vl}_j > 0$}
     \vspace{2mm}

    %\underline{{\em Subcase 2.1.} $\mA_j < 0.$} 
    
    Since $\vv_j \in [\l_j, \u_j],$ with  $\l_j > 0$ we have $ReLU(\vv_j) = \vv_j.$ 
    
    Moreover, we have 
    $$\underline{\mD}^*_{j,j} \vv_j +\underline{\vb}^*_j 
    =  1 \cdot \vv_j + 0 = \vv_j = ReLU(\vv_j)$$ 
    from which, multiplying both sides by $\mA_j$ 
    yields the desired inequality 
    $\mA_j ReLU(\vv_j) \geq \mA_j (\underline{\mD}^*_{j,j} \vv_j +\underline{\vb}^*_j).$

%    \underline{$\mA_j \geq 0.$} We need to show that $ReLU (\vv_j) \geq \underline{\mD}^*_{j,j} \vv_j +\underline{\vb}^*_j$, which for similar previous consideration we have $\vv_j \geq 1 \cdot \vv_j$.
%    This concludes the second case.
    
    \vspace{3mm}
    \noindent\fbox{{\sc Case 3.} $\underline{\vu}_j < 0$}
    \vspace{2mm}

    Since $\vv_j \in [\l_j, \u_j]$ and $\u_j < 0$ we have 
    $ReLU(\vv_j) = 0.$
    Moreover, under the standing hypothesis, we also have 
    $\underline{\mD}^*_{j,j} \vv_j +\underline{\vb}^*_j = 0 = 
    ReLU(\vv_j).$
    Again, multiplying both sides by $\mA_j$, we have that the desired inequality holds also in this case. 
    We have proved that for each $\vx$ in $X_{OK}$ we have $\mA^T ReLU(\vv_j) \geq \mA^T (\underline{\mD}^* \vv_x +\underline{\vb}^*).$ Finally, recalling that with probability $\psi^{d_i/m}$ we have that  
    $|X_{OK}| \geq |X|(1-\frac{1-R}{m}d_i),$ we have that the previous inequality holds with the desired statistical guarantees, which completes the proof of claim (1).\\

    For proving claim (2) it is enough to note that  the guarantees on $\mA, \vb$ and (\ref{induction_hypothesis}) hold independently of the 
    choice of the samples leading to the statistical guarantee on (\ref{induction_step}).
    Hence they simultaneously happen--yielding that  (\ref{induction-result}) holds--with the product of the probabilities of (\ref{induction_hypothesis}) and (\ref{induction_step}), i.e., $\displaystyle{\psi^{\frac{d_{i}+ d_{i+1}+ \cdots + d_N}{m}}}.$ 
    In particular, with this probability, for any $n \in \mathbb{N}$ in any sequence $X$ of $n$ points uniformly and independently sampled from $\mathcal{C},$ neither of (\ref{induction_hypothesis}) and (\ref{induction_step}) fails on at least  $n \times \left(1-\frac{1-R}{m} \sum_{j=i}^N d_j\right)$ points.

\end{proof}

 As a direct implication of this result, we can show that the neural network's output linear bounds computed using \texttt{PT-LiRPA} remain, with high probability, an overestimation of the real lower and upper bound, respectively, for at least a fixed fraction $R$ of the perturbation region under consideration.

\begin{theorem}[\texttt{PT-LiRPA} weak probabilistic guarantees]\label{theorem:PT_lirpa_wilks}
   Fix an N-layer ReLU DNN with $m$ neurons with $f$ the function it computes.
   Then, for any $\psi,  R \in (0,1)$ \texttt{PT-LiRPA},  using a total number of  $n \geq m \frac{\ln(1 - \psi^{1/m})}{\ln(1 - (1-R)/m)}$ independent and uniformly distributed input points, computes a linear approximation $\va^T_{\texttt{PT-LiRPA}}(\vx) + \vc_{\texttt{PT-LiRPA}}$ of $f$ such that with probability at least $\psi$ for at least a fraction $R$ of possibly infinite samples of input points, ${\vx},$ it holds that 
    $$
    %\min_{{\bf x} \in {\cal C}} 
    f({\vx}) \geq \va^T_{\texttt{PT-LiRPA}}(\vx) + \vc_{\texttt{PT-LiRPA}}.$$
\end{theorem}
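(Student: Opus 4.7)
The plan is to obtain Theorem~\ref{theorem:PT_lirpa_wilks} as a direct consequence of Proposition~\ref{prop:wilks_union_probability} and an inductive application of Lemma~\ref{lemma:sound_sampling}, working backward from the output layer to the input layer of the network. The chosen sample size $n\geq m\,\frac{\ln(1-\psi^{1/m})}{\ln(1-(1-R)/m)}$ is exactly what Proposition~\ref{prop:wilks_union_probability} requires (with $n'=n/m$ independent samples per neuron) in order to guarantee that the estimated reachable sets $[\overline{l}^{(i)}_j,\underline{u}^{(i)}_j]$ for all $m$ intermediate neurons are simultaneously valid, each with probability at least $\psi^{1/m}$ and jointly covering at least a fraction $1-(1-R)/m$ of any further (possibly infinite) uniform sample from $\mathcal{C}$.

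First, I would set up the induction on $i=N-1, N-2, \dots, 1$. The base case is the output layer: since $f(\vx) = z^{(N)}(\vx) = {\vw^{(N)}}^{T}\hat{\vz}^{(N-1)}(\vx)$, we take $\mA = {\vw^{(N)}}^{T}$ and $\vb = 0$, which gives (\ref{induction_hypothesis}) deterministically (probability $1$, all of $X$) at layer $N-1$. Then, at each inductive step at layer $i$, I invoke Lemma~\ref{lemma:sound_sampling} to refine the linear underestimator: the updated coefficients $\mA' = \mA^{T}\underline{\mD}^{*}$ and bias $\vb' = \vb + \mA^{T}\underline{\vb}^{*}$ satisfy $f(\vx) \geq \mA'\,\vv_{\vx} + \vb'$ with probability at least $\psi^{(d_i+d_{i+1}+\cdots+d_N)/m}$ over at least a fraction $1-\frac{1-R}{m}\sum_{j=i}^{N}d_j$ of the sampled inputs. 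Plugging in $\vv_{\vx} = \mW^{(i)}\hat{\vz}^{(i-1)}(\vx) + \vb^{(i)}$ then rewrites this bound in terms of the previous layer, preparing the induction hypothesis for iteration $i-1$.

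Once the induction is carried down to $i=1$, the linear function produced is precisely $\va^{T}_{\texttt{PT-LiRPA}}(\vx) + \vc_{\texttt{PT-LiRPA}}$, where $\va^{T}_{\texttt{PT-LiRPA}} = \underline{\mA}^{(1)}\vx$ and $\vc_{\texttt{PT-LiRPA}}$ is the accumulated bias $\sum_{i} \underline{\mA}^{(i+1)}\underline{\vb}^{(i)}$, matching the formulation in Section~\ref{sec:preliminaries}. Using the fact that $\sum_{j=1}^{N}d_j = m$, the compounded probability becomes $\psi^{m/m}=\psi$, and the fraction of ``bad'' points is at most $\frac{1-R}{m}\cdot m = 1-R$, giving at least a fraction $R$ of inputs for which $f(\vx) \geq \va^{T}_{\texttt{PT-LiRPA}}(\vx) + \vc_{\texttt{PT-LiRPA}}$. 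This is exactly the statement of the theorem.

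The main obstacle I anticipate is bookkeeping: I must make sure that (a) the independence of the samples used across neurons is explicitly preserved so that the union-of-events argument in Lemma~\ref{lemma:sound_sampling} applies cleanly at every layer, and (b) the exponent of $\psi$ and the fraction $\frac{1-R}{m}\sum_{j=i}^{N}d_j$ telescope correctly, so that when $i=1$ we obtain exactly $\psi$ and $R$ rather than a strictly smaller probability or coverage. A secondary subtlety is the treatment of the output layer (where no ReLU relaxation is needed) and the input layer (where $\hat{\vz}^{(0)}(\vx)=\vx$ and no further reachable set estimation is involved), which I would handle as explicit base/edge cases rather than as applications of Lemma~\ref{lemma:sound_sampling}. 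Once these are handled, the bound $\min_{\vx\in\mathcal{C}} \va^{T}_{\texttt{PT-LiRPA}}(\vx) + \vc_{\texttt{PT-LiRPA}}$ follows immediately by Hölder's inequality, paralleling the concluding step of the standard LiRPA analysis.
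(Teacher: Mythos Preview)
Your proposal is correct and follows essentially the same approach as the paper: the paper's own proof is a one-line appeal to Lemma~\ref{lemma:sound_sampling} together with the derivations of \citet{crown}, and your plan simply makes that inductive application explicit, carrying the layerwise relaxation from the output back to the input and telescoping the exponent $\sum_j d_j/m$ to $1$ and the excluded fraction $\frac{1-R}{m}\sum_j d_j$ to $1-R$. The bookkeeping concerns you flag (independence across neurons, and the handling of the output and input layers as base/edge cases) are exactly the minor details the paper leaves implicit.
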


\begin{proof}
    % From our Lemma \ref{lemma:sound_sampling}, we note that every single node in a ReLU layer is probabilistically soundly overapproximated with a confidence $\geq \psi^{(1/Nk)}$, and at most a fraction of $(1-R)/Nk$ of any further possibly infinite sequence of samples could violate the reachable set of that specific node. Hence, with probability $\geq (\psi^{(1/Nk)})^{Nk}= \psi$ at most a fraction $1-R$ of a further sampling could fall outside at least one of the reachable set computed using the first sample, meaning that with probability $\psi$, for at least a fraction $R$ of a further sample, all the reachable set computed in the first sample are correct. 
    The proof then directly follows from Lemma \ref{lemma:sound_sampling} and the derivations of \citet{crown}.
\end{proof}

\subsection{A Qualitative Bound of Statistical Prediction of Tolerance Limit}\label{sec:formula}

The result of \citep{wilks} allows us to bound the error of the sample-based procedure employed in our \texttt{PT-LiRPA} framework in terms
of the number of potential violations in future samples.
However, 
it does not provide any 
%qualitative 
information on the 
%distance 
magnitude of such possible violations with respect to the estimated bounds. Additionally, the probabilistic guarantees we provide so far are, broadly speaking, weaker than those offered by related probabilistic methods such as \citep{proven,randomizedSmoothing}, where the guarantees hold for any input $\vx \in \mathcal{C}$, and not only for a predefined fraction of it. In this section, we aim to complement \citet{wilks} statistical result with a qualitative interpretation that bounds the potential error between the true minimum and its estimate based on samples. Specifically, we leverage known results on \textit{extreme value theory} \citep{haan2006extreme} to strengthen and extend our original guarantee to apply to the entire perturbation set $\mathcal{C}$, thus bringing the proposed solution in line with other state-of-the-art probabilistic approaches.

We start by noticing that the problem of finding an empirical minimizer close to a function's exact minimizer is well established in the statistical literature (see, e.g., \citep{archetti1984survey}). Exploiting quantitative assumptions on the objective function—such as a Lipschitz condition that holds in our setting—can facilitate the bounding of the possible magnitude error in the sampled-based minimizer computation approach.
%However, as noted in \citep{betro1991bayesian}, the number of function evaluations required to obtain a tight bound using this approach may scale exponentially with the dimension $d$ of the perturbation region $\mathcal{C}$, and the Lipschitz constant $L$.\\
In this vein, we begin by providing a first worst-case qualitative bound on the maximum $\Delta$ error we can achieve when employing our sampling-based approach to compute the estimated reachable set with respect to the real (unknown) ones. For readability, in the following we simplify the notation $z_j^{(i)}$—which denotes the pre-activation value of the node in position $j$ of layer $i$—by removing the indices and referring generically to a node’s pre-activation value as $z$. We adopt the same simplification for the corresponding lower and upper bound estimates, denoted as  $\overline{l}$ and $\underline{u}$, respectively.

We start with the following result.
\newpage

\begin{theorem}[Worst-case excess bound]\label{theorem:worst_case_bound}
 Fix a neuron in our $N$-layer DNN. 
 Let $z$ be the function 
 mapping the input $\vx$ to the network to the pre-activation value  
 $z(\vx)$ of the neuron of interest. 
 Let $\overline{l} = \min_{k=1,\ldots,n} z(\vx_k)$ be 
 the minimum pre-activation value observed in a sample of $n$ inputs, $\vx_1,\ldots,\vx_n$ independently and uniformly drawn from $\mathcal{C} \subseteq \mathbb{R}^{d}$. 
 Let $l^* = \min_{\vx \in \mathcal{C}} z(\vx)$ be the actual minimum pre-activation value achievable for $z$ over all $\vx \in \mathcal{C}$. Then, if $z$ is Lipschitz continuous, it holds that:
    \begin{equation}\label{eq:qualitative_bound1}
        \small
        Pr[\vert \overline{l} - l^* \vert \leq \Delta ] \geq 1 - exp\left({-n\left(\frac{\Delta}{L}\right)^d\cdot \frac{\pi^{d/2}}{\Gamma(\frac{d}{2} + 1)}}\right)
    \end{equation}
    where $L$ is the Lipschitz constant of the function $z()$, $d$ is the dimension of the perturbation region $\mathcal{C}$, $\Gamma$ is the gamma function, and $\Delta$ the maximum error we are interested in bounding.
\end{theorem}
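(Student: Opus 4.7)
The plan is to reduce the event $\{|\overline{l}-l^*|>\Delta\}$ to a geometric event that none of the $n$ samples lands in a small ball around a true minimizer, and then apply a standard i.i.d.\ coverage argument together with the Lipschitz hypothesis.

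First, fix a minimizer $\vx^*\in\argmin_{\vx\in\mathcal{C}} z(\vx)$, so that $z(\vx^*)=l^*$. Since the sample minimum always satisfies $\overline{l}\geq l^*$, the event of interest reduces to $\{\overline{l}-l^*>\Delta\}$. By the Lipschitz assumption, any input $\vx$ with $\|\vx-\vx^*\|\leq \Delta/L$ obeys $z(\vx)\leq z(\vx^*)+L\cdot(\Delta/L)=l^*+\Delta$. Hence, if at least one of the $n$ samples falls inside the Euclidean ball $B(\vx^*,\Delta/L)$, then $\overline{l}\leq l^*+\Delta$ and the bad event cannot occur. This reduces the theorem to controlling the probability that no sample lies in $B(\vx^*,\Delta/L)\cap\mathcal{C}$.

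Second, let $p$ denote the probability that a single uniform draw from $\mathcal{C}$ lands in $B(\vx^*,\Delta/L)\cap\mathcal{C}$. Because the $\vx_k$ are i.i.d.\ uniform and independent across $k$, we get $\Pr[\overline{l}-l^*>\Delta]\leq (1-p)^n\leq \exp(-np)$. To match the stated form I would lower bound $p$ by the volume of the $d$-dimensional Euclidean ball of radius $\Delta/L$, namely $V_d(\Delta/L)^d$ with $V_d=\pi^{d/2}/\Gamma(d/2+1)$, under the normalization convention that $\mathcal{C}$ has unit volume (or equivalently absorbing $\mathrm{vol}(\mathcal{C})$ into the constant). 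Substituting this lower bound into $\exp(-np)$ and taking complements yields precisely inequality~(\ref{eq:qualitative_bound1}).

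I expect the main obstacle to be justifying the lower bound on $p$: if $\vx^*$ lies near the boundary of $\mathcal{C}$, the intersection $B(\vx^*,\Delta/L)\cap\mathcal{C}$ can be strictly smaller than the whole ball, so one needs either a convexity argument (when $\mathcal{C}$ is convex, a half-ball is always contained in $\mathcal{C}$ up to a constant factor) or a worst-case treatment of corner minimizers that is then absorbed into an implicit constant. A secondary subtlety is the consistency between the norm used in the Lipschitz condition and the Euclidean norm implicitly used in the volume formula; if the Lipschitz constant $L$ is given in a different norm, an equivalence constant depending only on $d$ appears but does not change the shape of the exponential bound.
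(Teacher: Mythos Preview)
Your proposal is correct and follows essentially the same approach as the paper: fix a minimizer $\vx^*$, use Lipschitz continuity to reduce $\{\overline{l}-l^*>\Delta\}$ to the event that no sample lands in the ball $B(\vx^*,\Delta/L)$, apply the i.i.d.\ product bound $(1-p)^n\leq e^{-np}$, and plug in the $d$-ball volume formula. The two subtleties you flag (boundary intersection and the implicit unit-volume normalization of $\mathcal{C}$) are in fact glossed over in the paper's own proof as well, so your treatment is at least as careful.
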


\begin{proof}
    Since z is an $L$-Lipschitz function, then we have that for each $k=\{1,\ldots,n\}$, and $\vx^*$ being a minimizer of $z$, i.e., $z(\vx^*)=l^*$, it holds that
    $$|z(\vx_k)- l^*| \leq L ||\vx_k-\vx^*||_2.$$
    From the function's Lipschitz continuity property, we have
    
    \begin{align*}
        Pr[|\overline{l} - l^*| \geq \Delta] 
        &= Pr[\forall k\; z(\vx_k) - z(\vx^*) \geq \Delta] \\
        &\leq Pr[\forall k\; L ||\vx_k - \vx^*||_2 \geq \Delta]\\
        &=  Pr[\forall k\; ||\vx_k - \vx^*||_2 \geq \frac{\Delta}{L}]
    \end{align*}

    Fix $S = \{\vx \in \mathcal{C} : ||\vx - \vx^*||_2 \leq \Delta / L\}.$
    This is the sphere of radius $\Delta / L$ centered at $\vx^*$. 
    For a uniformly sampled point $\vx \in \mathcal{C},$ let $\mu(S) = Pr[\vx \in S].$ This is equal to the volume of the set $S$ which is proportional to $(\Delta / L)^d \cdot c(d)$, where $c(d) = \frac{\pi^{d/2}}{\Gamma(\frac{d}{2} + 1)}$ is a constant that depends on the dimension $d$. Hence, by the independence of the $\vx_k$s, the probability $Pr[\forall k\;\vx_k \notin S]= (1 - \mu(S))^n$.

    \begin{align*}
        Pr[|\overline{l} - l^*| \geq \Delta] &\leq Pr[\forall k\; ||\vx_k - \vx^*||_2 \geq \frac{\Delta}{L}]\\
        & = Pr[\forall k \; \vx_k \notin S]\\
        & = (1 - \mu(S))^n \\
        & \text{ from the fact that } (1-x)^n \approx e^{-nx}\\
        & \leq exp\left(-n\left(\frac{\Delta}{L}\right)^d\cdot \frac{\pi^{d/2}}{\Gamma(\frac{d}{2} + 1)}\right)
    \end{align*}
    
    Hence,  $Pr[\vert \overline{l} - l^*\vert \leq \Delta] \geq 1 - Pr[\vert \overline{l} - {l^*}\vert \geq \Delta] \geq 1 - exp\left(-n\left(\frac{\Delta}{L}\right)^d\cdot \frac{\pi^{d/2}}{\Gamma(\frac{d}{2} + 1)}\right)$. 
\end{proof}

  \begin{figure}[h!]
    \centering
    \includegraphics[width=0.4\linewidth]{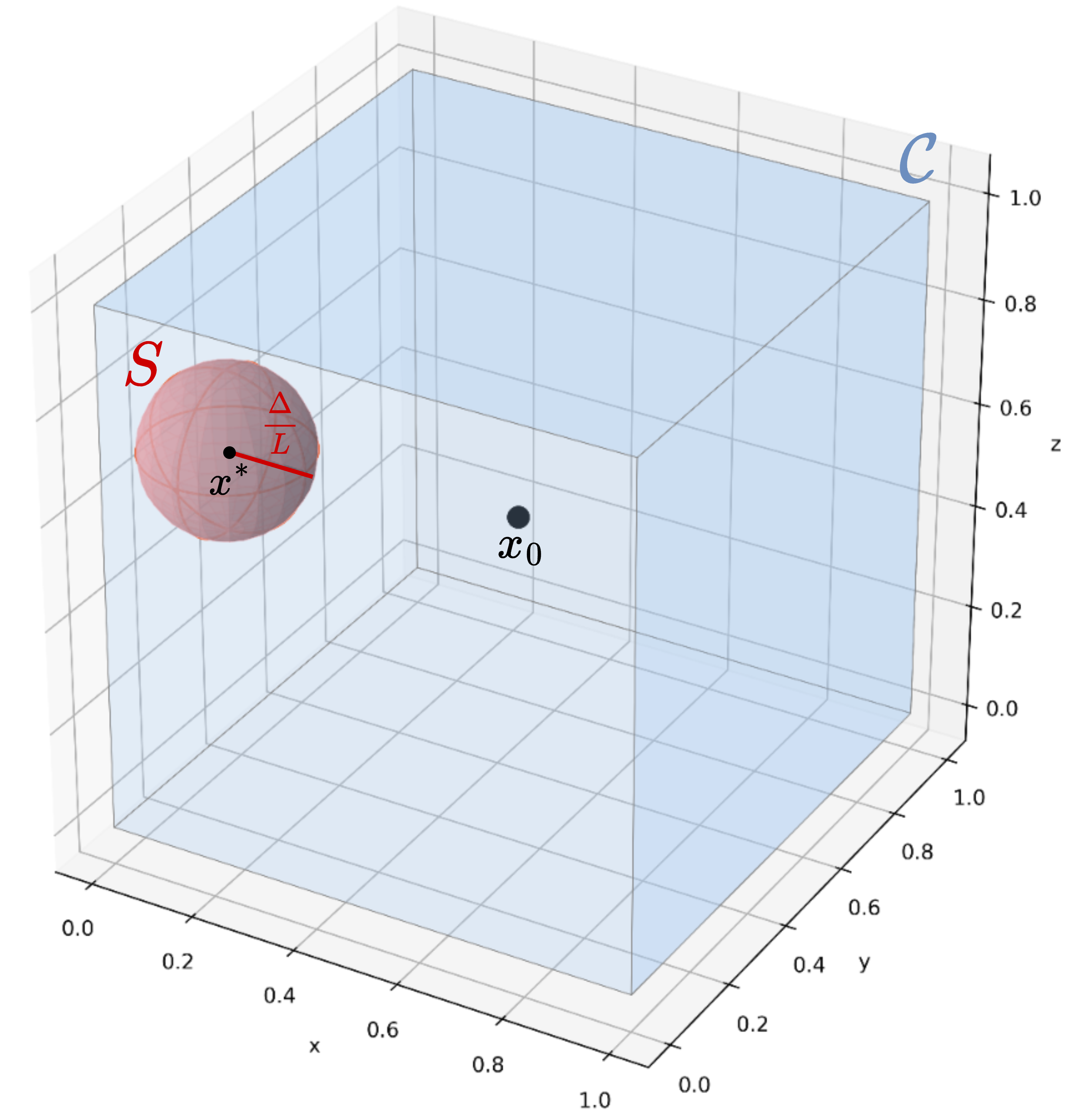}
    \caption{Illustrative representation in 3D of Theorem 3.5.}
    \label{fig:theorem3.5}
\end{figure}

Broadly speaking, in the theorem, we bound the probability that the estimated minimizer deviates from the true minimizer by more than a predefined threshold $\Delta$. Owing to the Lipschitz properties of DNNs, this is equivalent to estimating the probability that a random sample of inputs includes at least one point within distance $\Delta/L$ of the true minimizer $\mathbf{x}^*$, i.e., an input $\vx$ in
$S = \{ \mathbf{x} \in \mathcal{C} : \|\mathbf{x} - \mathbf{x}^*\|_2 \leq \Delta/L \}$.
Geometrically, $S$ corresponds to the intersection of the perturbation region $\mathcal{C}$ with a ball of radius $\Delta/L$ centered at $\mathbf{x}^*$. If $\mathbf{x}^*$ lies sufficiently far from the boundary of $\mathcal{C}$, then $S$ is exactly such a ball. Otherwise, if $\mathbf{x}^*$ is near the boundary, $S$ becomes a truncated ball, i.e., the intersection with $\mathcal{C}$. This is clearly represented in Fig. \ref{fig:theorem3.5}, where we depict a potential perturbation region $\mathcal{C}$ in 3d derived from the original input $\vx_0$ and the minimizer $\vx^*$ of $f$ for $\mathcal{C}$.

Clearly, the bound given in Theorem \ref{theorem:worst_case_bound} accounts for the worst-case scenario and assumes no specific property of the distribution on the input.
When the perturbation region has high dimensionality or the DNN has a large Lipschitz constant, this bound may become too loose to offer meaningful insights. Following EVT, and particularly the results of \citep{de1981estimation}, we can provide a tighter bound on the error in the estimation. 
\newpage
\begin{lemma}[Tighter Qualitative bound]\label{lemma:qual_bound}
Fix a neuron in our $N$-layer DNN and let $z$ be the be the function mapping the input network to the pre-activation value of the neuron. Let $\overline{l} = \min_{k=1,\ldots,n} z(\vx_k)$ the minimum pre-activation value observed in a sample of $n$ inputs, $\vx_1,\ldots,\vx_n$ independently and uniformly drawn from $\mathcal{C} \subseteq \mathbb{R}^{d}$. Let ${l^*} = \min_{\vx \in \mathcal{C}} z(\vx)$, the true minimum pre-activation value achievable over all $\vx \in \mathcal{C}$. Let $Y_1,Y_2,\ldots,Y_n$ be the order statistics for $z(\vx_1),\ldots,z(\vx_n)$. For all $p \in (0,1)$, then it holds that:
    \begin{equation}\label{eq:qualitative_bound1}
        Pr\left[\vert \overline{l} - l^* \vert \leq \frac{Y_2-Y_1}{(1-p)^{-a} - 1} \right] \geq 1 - p
    \end{equation}
    with $a \approx log(\nu)/log\left(\frac{Y_\nu-Y_3}{Y_3-Y_2}\right)$, with $\nu$ any integer valued function of $n$ such that $\nu(n)\to\infty$ and $\nu(n)/n \to 0$.
\end{lemma}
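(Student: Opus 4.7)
The plan is to recast the statement as a classical endpoint estimation problem from extreme value theory, and then invoke the estimator of \citet{de1981estimation} essentially verbatim. Write $Z = z(\vx)$ for $\vx$ uniform in $\mathcal{C}$, so that $z(\vx_1), \dots, z(\vx_n)$ are i.i.d.\ copies of $Z$, and let $F$ denote the cumulative distribution function of $Z$. Then $\overline{l} = Y_1$ is the smallest order statistic, and $l^* = \inf\{t : F(t) > 0\}$ is the left endpoint of the support of $F$. The goal therefore becomes constructing a probabilistic confidence bound on the distance from the sample minimum to the endpoint, using only information contained in the lowest few order statistics.

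First I would argue that the hypotheses implicitly hold: since $z$ is continuous on the compact set $\mathcal{C}$ and we are drawing from a uniform measure, $F$ belongs to the domain of attraction of some extreme value distribution for minima, with an extreme value index $\gamma$ governing the local behavior of $F$ near $l^*$ (equivalently the tail behavior of $-Z$ near its maximum). This lets us quote the basic Fisher--Tippett--Gnedenko limit in the form that applies to minima: after affine rescaling, $Y_1$ converges in distribution to a generalized extreme value law with shape $\gamma$.

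Next I would invoke de Haan's endpoint estimator directly. With $\nu = \nu(n) \to \infty$ and $\nu/n \to 0$, the ratio $(Y_\nu - Y_3)/(Y_3 - Y_2)$ is consistent for $\nu^{\gamma}$, whence $a := \log(\nu)/\log\!\bigl((Y_\nu - Y_3)/(Y_3 - Y_2)\bigr)$ is a consistent estimator of $\gamma$. Conditionally on the estimated index, de Haan shows that the random quantity $(l^* - Y_1)/(Y_2 - Y_1)$ has an asymptotic distribution determined solely by $\gamma$; in particular its tail probability at level $c$ is $(1 + 1/c)^{-1/\gamma}$ in the regime of interest. Setting the right-hand side equal to $p$ and solving for $c$ yields $c = 1/((1-p)^{-a} - 1)$, which is exactly the bound appearing in the statement. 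Combining this with the consistency of $a$ gives the asserted probabilistic inequality.

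The main obstacle is that the result is fundamentally asymptotic: the equality $Pr[(l^* - Y_1)/(Y_2 - Y_1) > c] = (1+1/c)^{-1/\gamma}$ holds only in the limit, and the plug-in of the estimated $a$ for the true $\gamma$ introduces an additional approximation. Rather than re-deriving the rate of convergence, I would cite \citet{de1981estimation} for the limit law of the spacing ratio $(l^* - Y_1)/(Y_2 - Y_1)$, and separately cite the consistency of the Pickands/de Haan-type estimator for $\gamma$ based on the $\nu$-th, third, and second order statistics. The ``$\approx$'' in the definition of $a$ and the ``$\geq 1-p$'' in the bound should be read as the usual first-order asymptotic guarantees; making this rigorous non-asymptotically would require quantitative second-order regular variation assumptions on $F$ near $l^*$, which is beyond what we need for the downstream use of the lemma.
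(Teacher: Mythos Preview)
Your approach is essentially the same as the paper's: the paper does not give a proof of this lemma at all but simply attributes the result to \citet{de1981estimation}, adding only the remark that the required hypothesis is that the order statistics lie in the domain of attraction of a Weibull-type law $1-\exp(-x^{a})$ for some $a>0$, which holds when $z$ is uniformly differentiable (and, for ReLU networks, after splitting the activations into linear pieces). Your write-up is a more explicit unpacking of exactly that citation, and is appropriate; one small slip is in the line ``setting the right-hand side equal to $p$''---to land on $c = 1/((1-p)^{-a}-1)$ you need the asymptotic \emph{cdf} of the spacing ratio at $c$ to equal $1-p$, not the tail to equal $p$ with the formula you wrote, so that sentence should be adjusted.
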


Importantly, Lemma \ref{lemma:qual_bound}, holds under the assumption that the samples $Y_1,\ldots,Y_n$ follow a nondegenerate limit distribution function in the form $1-exp(-x^a)$ for some $a>0$ \citep{de1981estimation}. In particular, this is true for uniformly differentiable $f$, which is the case for most common neural networks, e.g., when the activation functions are Sigmoid, Tanh, etc. For ReLU-based networks, where the differentiability requirement is not satisfied, we can still apply Lemma \ref{lemma:qual_bound} by splitting the non-linearities, i.e., splitting all the ReLU nodes, achieving a linear system differentiable everywhere \cite{babSplitRelu}. 

Concretely, whenever a ReLU’s interval bound crosses 0 (i.e., $\underline{z} < 0 < \overline{z}$) computed over the perturbation region $\mathcal{C}$, we split the computation into two linear branches enforcing $z(\vx)\ge 0$ (active) and $z(\vx)\le 0$ (inactive). On each resulting subregion, the network is affine, so the neuron's pre-activation is an affine (hence differentiable) map of $\vx$. Sampling i.i.d. uniformly from $\mathcal{C}$ and rejecting points that do not satisfy a branch’s linear constraints is equivalent to sampling i.i.d. uniformly from that constrained subregion. Hence, the order statistics restricted to the subregion obey the required extreme-value limit. In practice, in our experimental setting described in Sec. \ref{sec:empirical}, where $\mathcal{C}$ is an $\ell_\infty$-ball and ReLU constraint splitting is applied, the rejection rate remains very low. Specifically, the fraction of discarded samples is always negligible (typically below 5\% on average). This is because the probability that a uniformly sampled point violates the constraints decreases rapidly as the sample size increases, and the rejection step only eliminates points outside the feasible region while preserving the independence of the remaining samples.

% Since the points sampled from the original perturbation region are i.i.d., the rejection phase acts simply as a filter to enforce the constraints, still resulting in a computationally efficient method. 

\subsection{Extending Wilks' Probabilistic Guarantees}
Building on the results of Lemma \ref{lemma:qual_bound}, we can extend the theoretical guarantees of \texttt{PT-LiRPA} to the entire perturbation region $\mathcal{C}$. Specifically, Lemma \ref{lemma:qual_bound} provides a bound on the maximum error in estimating the true minimum (or maximum) of a given intermediate node $z$. By adding this EVT-based error bound to the initial estimates $\overline{l}$ and $\underline{u}$, obtained from the first random sampling of $n$ inputs from $\mathcal{C}$, we derive a probabilistically tight approximation of the reachable set for $z$ that holds for any $\vx \in \mathcal{C}$. Notably, following the asymptotic requirements in \citep{haan2006extreme}, we can choose the number of upper order statistics as $\nu(n) = \lfloor n^{\xi} \rfloor$, with $\xi \in (0,1)$, which clearly satisfies the conditions $\nu(n) \to \infty$ and $\nu(n)/n \to 0$.

Hence, for any intermediate neuron in the network, we have:

\begin{theorem}[Improved \texttt{PT-LiRPA} probabilistic guarantee on the estimated reachable set]\label{theorem:pt_lirpa_guarantees}
    Fix a positive integer $n$ and real values $p, \xi \in (0,1),$ and let $\nu = \lfloor n^\xi \rfloor.$
    For any neuron $z$ in an N-layer DNN, let $z(\vx)$ denote the pre-activation value of $z$ when $\vx$ is the input to the network. Let $\overline{l} = \min_{k=1,\ldots,n} z(\vx_k)$ and 
    $\underline{u} = \max_{k=1,\ldots,n} z(\vx_k)$ as the minimum and maximum pre-activation values observed in a sample of $n$  random inputs $\vx_1,\ldots,\vx_n$ independently and uniformly drawn from $\mathcal{C} \subseteq \mathbb{R}^{d}$. Let $Y_1 \leq Y_2 \leq \cdots \leq Y_n$ be the order statistics for the observed values $z(\vx_k)$. 
    %Fix $p \in (0, 1)$, $\nu = \lfloor n^\xi \rfloor$, with $\xi\in(0,1)$. 
    
    Then for any $\vx \in \mathcal{C}$ it holds that:
    \[ Pr\left[  
        \overline{l} - \frac{Y_2 - Y_1}{(1 - p)^{-a_l} - 1}\leq z(\vx) \leq  
        \underline{u} + \frac{Y_n - Y_{n-1}}{(1 - p)^{-a_u} - 1} 
    \right] \geq 1-2p .\] 

    with $a_l \approx \frac{\log(\nu)}{\log\left( \frac{Y_\nu - Y_3}{Y_3 - Y_2} \right)},
        a_u \approx \frac{\log(\nu)}{\log\left( \frac{Y_{n-2} - Y_{n-\nu}}{Y_{n-1} - Y_{n-2}} \right)}.$ 
\end{theorem}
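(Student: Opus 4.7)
The plan is to obtain the two-sided tail bound by independently controlling the estimation error for the minimum and the maximum of $z$ over $\mathcal{C}$, and then combining the two via a union bound while exploiting that for every $\vx \in \mathcal{C}$ we have $l^* \leq z(\vx) \leq u^*$ deterministically by the very definition of $l^* = \min_{\vx \in \mathcal{C}} z(\vx)$ and $u^* = \max_{\vx \in \mathcal{C}} z(\vx)$.

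The first step is to apply Lemma \ref{lemma:qual_bound} directly to the minimum. With $\nu = \lfloor n^\xi \rfloor$ for some $\xi \in (0,1)$, the asymptotic requirements $\nu(n) \to \infty$ and $\nu(n)/n \to 0$ are satisfied, so the lemma yields that with probability at least $1-p$,
\[
|\overline{l} - l^*| \leq \frac{Y_2 - Y_1}{(1-p)^{-a_l}-1},
\]
with $a_l$ as in the statement. Since $\overline{l} \geq l^*$ always (because $\overline{l}$ is the sample minimum over points in $\mathcal{C}$), this inequality is equivalent to the one-sided bound $l^* \geq \overline{l} - (Y_2-Y_1)/((1-p)^{-a_l}-1)$.

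The second step is the analogous argument for the maximum. Applying Lemma \ref{lemma:qual_bound} to $-z$ (which is also Lipschitz/uniformly differentiable whenever $z$ is, and whose order statistics are obtained by reversing those of $z$, so $-Y_n \leq -Y_{n-1} \leq \cdots$ and the top-$\nu$ gaps correspond to the bottom-$\nu$ gaps of $-z$) gives, with probability at least $1-p$,
\[
|\underline{u} - u^*| \leq \frac{Y_n - Y_{n-1}}{(1-p)^{-a_u}-1},
\]
with $a_u$ as in the statement; since $\underline{u} \leq u^*$ always, this yields $u^* \leq \underline{u} + (Y_n - Y_{n-1})/((1-p)^{-a_u}-1)$.

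The third step is to combine both bounds by a union bound, so that both one-sided inequalities hold simultaneously with probability at least $1-2p$. On this joint event, for any $\vx \in \mathcal{C}$ the deterministic sandwich $l^* \leq z(\vx) \leq u^*$ immediately gives the desired two-sided containment. I expect the only real obstacle to be a careful justification of the symmetric application of Lemma \ref{lemma:qual_bound} to the maximum: one must verify that the EVT conditions of \cite{de1981estimation} (nondegenerate limit of the form $1-\exp(-x^a)$, uniform differentiability or the ReLU-splitting workaround discussed after Lemma \ref{lemma:qual_bound}) transfer to $-z$, and that the order statistics transformation yields precisely the coefficient $a_u$ stated in the theorem. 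Everything else is a straightforward union bound together with the deterministic inequalities $l^* \leq z(\vx) \leq u^*$.
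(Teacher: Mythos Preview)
Your proposal is correct and follows essentially the same approach as the paper: the paper's proof consists of a single sentence stating that the result ``directly follows from the union bound, exploiting Lemma~\ref{lemma:qual_bound},'' and your three steps (apply the lemma to $z$ for the minimum, apply it to $-z$ for the maximum, then union bound and sandwich via $l^* \leq z(\vx) \leq u^*$) are precisely the intended unpacking of that sentence. Your added care about checking the EVT hypotheses for $-z$ and tracking the order-statistics reversal to obtain $a_u$ is more detail than the paper provides, but it is in the same spirit and not a different route.
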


\begin{proof}
    The proof directly follows from the union bound, exploiting Lemma \ref{lemma:qual_bound}.
\end{proof}

Therefore, in a network with $m$ neurons, using the LiRPA approach employing the estimates $\hat{l}$ 
and $\hat{u}$ for the reachable set of neuron $z$ as given by 
Theorem \ref{theorem:pt_lirpa_guarantees}, i.e., 
\begin{equation} \label{gaps}
\hat{l} = \overline{l} - \frac{Y_2 - Y_1}{(1 - p)^{-a_l} - 1}
\qquad 
\hat{u} = \underline{u} + \frac{Y_n - Y_{n-1}}{(1 - p)^{-a_u} - 1},
\end{equation} 

we can compute a linear lower bound function $\va^T_{\texttt{PT-LiRPA}}(\vx)+ \vc_{\texttt{PT-LiRPA}}$ such that with probability at least $1-2mp$ for any $\vx \in \mathcal{C}$ satisfies: 

%employing the linearization process, we have the final $\texttt{PT-LiRPA}$ probabilistic guarantee on the linear output lower (and upper) bounds as follows.

%\begin{lemma}[\texttt{PT-LiRPA} probabilistic guarantees on the linear output bounds]\label{lemma: PT-LiRPA_bound}
%   Fix an N-layer ReLU DNN and let $f$ denote the function it computes. Then, for any $p \in (0,1)$ \texttt{PT-LiRPA},  using $n$ independent and uniformly distributed input points from the perturbation region $\mathcal{C}$, computes a linear approximation $\va^T_{\texttt{PT-LiRPA}}(\vx) + \vc_{\texttt{PT-LiRPA}}$ of $f$ such that with probability at least $1-2p$,  for any new ${\vx} \in \mathcal{C}$ it holds that 
%    $$
    
\begin{equation} \label{final-inequality}
f({\vx}) \geq \va^T_{\texttt{PT-LiRPA}}(\vx) + \vc_{\texttt{PT-LiRPA}}.
\end{equation}

We apply the sampling procedure independently for each of the $m$ neurons, estimating the reachable set using $n$ i.i.d. samples drawn from $\mathcal{C}$. By Theorem \ref{theorem:pt_lirpa_guarantees}, each reachable set is correct with probability at least $1 - 2p$. Then we use union bound to estimate the probability  that all reachable sets are simultaneously correct, yielding a lower bound $1 - 2mp$. We note that this analysis does not need independence between estimation errors across neurons. In fact, dependencies may arise due to the layered nature of DNNs, which may lead to compounding errors in deeper layers.

Some observations are in order. The result of Theorem \ref{theorem:pt_lirpa_guarantees} provides for any choice of $n$ and $p$ a guarantee holding for all input values $x \in \mathcal{C}.$ However, there is a potential weakness in its practical use since we would also like to guarantee that the two sides of inequality (\ref{final-inequality}) are as close as possible, i.e., that the linear lower bound is as tight as possible. For this, we would like to have that for each neuron $z$ the values $\hat{l}$ (respectively, $\hat{u}$) and $\overline{l}$ (respectively, $\underline{u}$) are close.  
However, the tail index parameters $a_l$ and $a_u,$ ruling their difference, depend on the shape of the tail of the distribution of $z(\vx)$ over $\mathcal{C}.$ This precludes the possibility of computing the minimum value of $n$ yielding to achieve a desired precision for a given desired confidence $p$. 

One possibility to address this issue is to guess the minimum number of samples by a standard doubling technique: keep on doubling the number of samples used until the estimated tail corrections fall below a desired threshold. 
Alternatively, we can start with a conservative sample size $n$ inspired by Wilks' formula and our extension (Proposition \ref{prop:wilks_union_probability}), set $\nu = \lfloor n^{\xi}\rfloor$ with $\xi \in (0,1)$, and compute the resulting values $\hat{l}$, $\hat{u}$ as by Theorem \ref{theorem:pt_lirpa_guarantees}. While potentially suboptimal, the experiments show that this approach produces linear bounds that, besides satisfying the above guarantees, remain significantly tighter than those provided by the traditional LiRPA-based approach, especially in deeper layers where LiRPA errors tend to compound. 

\subsection{Example of \texttt{PT-LiRPA} Linear Bounds Computation.}\label{subsec:example_computation}

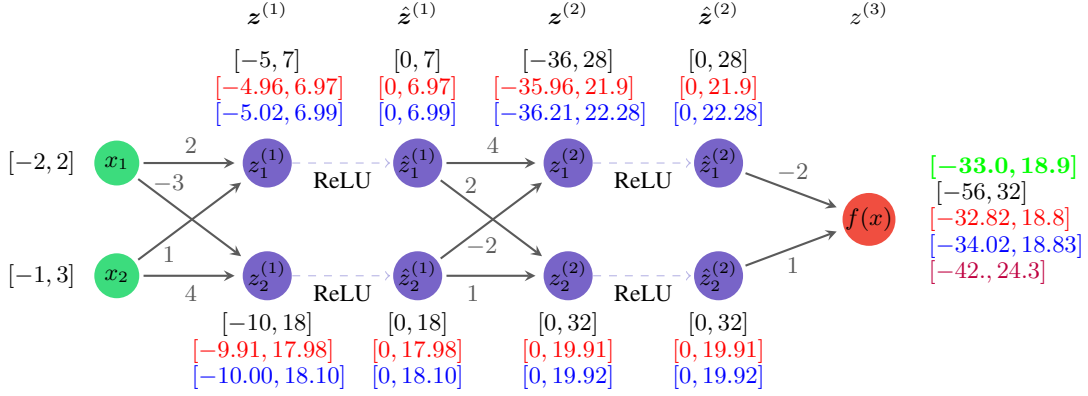
\begin{figure}[h!]
    \centering
    \def\layersep{2.0cm}
\begin{tikzpicture}[shorten >=1pt,->,draw=black!50, node
				distance=\layersep,font=\footnotesize]
				
				\node[input neuron] (I-1) at (0,-1) {$x_1$};
				\node[input neuron] (I-2) at (0,-2.5) {$x_2$};

                    % vector input
				\node[left=-0.05cm of I-1] (b1) {};
				\node[left=-0.05cm of I-2] (b2) {};
				
				\node[hidden neuron] (H-1) at (\layersep,-1) {$z_1^{(1)}$};
				\node[hidden neuron] (H-2) at (\layersep,-2.5) {$z_2^{(1)}$};
                   				
				\node[hidden neuron] (H-4) at (2*\layersep,-1) {$\hat{z}_1^{(1)}$};
				\node[hidden neuron] (H-5) at (2*\layersep,-2.5) {$\hat{z}_2^{(1)}$};

                    \node[hidden neuron] (H-7) at (3*\layersep,-1) {$z_1^{(2)}$};
				\node[hidden neuron] (H-8) at (3*\layersep,-2.5) {$z_2^{(2)}$};
                    		
				\node[hidden neuron] (H-10) at (4*\layersep,-1) {$\hat{z}_1^{(2)}$};
				\node[hidden neuron] (H-11) at (4*\layersep,-2.5) {$\hat{z}_2^{(2)}$};

				\node[output neuron] at (5*\layersep, -1.75) (O-1) {$f(x)$};
				
				% Connect every node in the hidden layer with the output layer
				\draw[nnedge] (I-1) --node[above] {$2$} (H-1);
				\draw[nnedge] (I-1) --node[above, pos=0.3] {$-3$} (H-2);
                  
				\draw[nnedge] (I-2) --node[below, pos=0.3] {$1$} (H-1);
				\draw[nnedge] (I-2) --node[below] {$4$} (H-2);

				\draw[dashed,->] (H-1) --node[below] {ReLU} (H-4);
				\draw[dashed,->] (H-2) --node[below] {ReLU} (H-5);

				\draw[nnedge] (H-4) --node[above] {$4$} (H-7);
                    \draw[nnedge] (H-4) --node[above, pos=0.3] {$2$} (H-8);
                   
                    \draw[nnedge] (H-5) --node[below, pos=0.4] {$-2$} (H-7);
				\draw[nnedge] (H-5) --node[below, pos=0.3] {$1$} (H-8);

                    \draw[dashed,->] (H-7) --node[below] {ReLU} (H-10);
                    \draw[dashed,->] (H-8) --node[below] {ReLU} (H-11);
                    \draw[nnedge] (H-10) --node[above] {$-2$} (O-1);
				\draw[nnedge] (H-11) --node[below] {$1$} (O-1);

				% result first prop
				\node[below=0.05cm of H-1] (b1) {};
				\node[below=0.05cm of H-2] (b2) {};

                     % Biases
				\node[below=0.05cm of H-4] (b1) {};
				\node[below=0.05cm of H-5] (b2) {};

                      % Biases
				\node[below=0.05cm of H-7] (b1) {};
				\node[below=0.05cm of H-8] (b2) {};
                   
                    % Annotate the layers
		          \node[annot,left of=I-1, node distance=1cm] (hl1) {$[-2, 2]$};
            \node[annot,left of=I-2, node distance=1cm] (hl1) {$[-1, 3]$};

				\node[annot,above of=H-1, node distance=2cm] (hl1) {$\vz^{(1)}$};
            \node[annot,above of=H-1, node distance=1cm] (hl1) {$[-5, 7]$\\{\color{red}$[-4.96, 6.97]$}\\{\color{blue}$[-5.02, 6.99]$}};
            
            \node[annot,below of=H-2, node distance=1cm] {$[-10, 18]$\\{\color{red}$\hspace{-3mm}[-9.91, 17.98]$}\\{\color{blue}$\hspace{-3mm}[-10.00, 18.10]$}};

             \node[annot,below of=H-5, node distance=1cm] {$[0, 18]$\\{\color{red}$[0, 17.98]$}\\{\color{blue}$[0, 18.10]$}};
    
				\node[annot,above of=H-4, node distance=2cm] (hl2){$\hat{\vz}^{(1)}$};
                \node[annot,above of=H-4, node distance=1cm] (hl2){$[0,7]$\\{\color{red}$[0, 6.97]$}\\{\color{blue}$[0, 6.99]$}};

                    \node[annot,above of=H-7, node distance=2cm] (hl3) {$\vz^{(2)}$ };
                    \node[annot,above of=H-7, node distance=1cm] (hl3) {$[-36, 28]$\\{\color{red}$ \hspace{-3mm}[-35.96, 21.9]$}\\{\color{blue}$ \hspace{-3mm}[-36.21, 22.28]$}};
                    \node[annot,below of=H-8, node distance=1cm] (hl3) {$[0, 32]$\\{\color{red}$[0, 19.91]$}\\{\color{blue}$[0, 19.92]$}};
                     \node[annot,above of=H-10, node distance=2cm] (hl4) {$\hat{\vz}^{(2)}$ };

                     \node[annot,above of=H-10, node distance=1cm] (hl4) {$[0, 28]$\\{\color{red}$[0, 21.9]$}\\{\color{blue}$[0, 22.28]$} };

                     \node[annot,below of=H-11, node distance=1cm] (hl4) {$[0, 32]$\\{\color{red}$[0, 19.91]$}\\{\color{blue}$[0, 19.92]$} };

                    \node[annot,right of=O-1, node distance=1.5cm] (o1) {{\color{Green}$\bm{[-33, 18.9]}$}\\\hspace{-3mm}$[-56, 32]$\\{\color{red}$[-32.82, 18.8]$}\\{\color{blue}$[-34.02, 18.83]$}\\{\color{purple}$[-42, 24.3]$}};
                
                    \node[annot,above of=O-1, node distance=2.75cm] (o1) {$z^{(3)}$};

			\end{tikzpicture}
    \caption{Toy DNN used in this example. Intervals reported in green are the exact output reachable set computed via MIP, in black are the results of the IBP, and in purple the CROWN ones considering the input $[[-2,2], [-1,3]]$. In red are the reachable sets computed using a naive sampling-based approach of $n=10k$ samples. Finally, in blue, the ones computed using a naive sampling-based approach combined with the EVT error estimation.}
    \label{fig:toyDNN_new}
\end{figure}

Recalling the example provided in Sec. \ref{example_computation}, we show now the computation of the linear bounds employing CROWN enhanced with $\texttt{PT-LiRPA}$. In detail, the calculation is analogous to what we have seen above, except for the construction of the diagonal matrices and bias vectors. In the following, we will compute the linear bounds using both the theoretical results of Theorem \ref{theorem:PT_lirpa_wilks} and Theorem \ref{theorem:pt_lirpa_guarantees}.

We start by computing the estimated reachable sets from a sample-based approach in $\mathcal{C}$ using $n=10k$ samples, which for the Proposition \ref{prop:wilks_union_probability}, with $R=0.999$ and considering the number of neurons in the DNN, are sufficient to have a final confidence $\psi \geq 0.99$. We report in Figure \ref{fig:toyDNN_new} highlighted in red the estimated reachable sets obtained from the propagation of $n$ random samples drawn from $[[-2,2], [-1,3]]$. As we can notice, the bounds are slightly tighter than the overestimated ones obtained from the IBP process. Our intuition is thus that from the computation of $\underline{\mD}^{(i)},\overline{\mD}^{(i)}, \underline{\vb}^{(i)},\overline{\vb}^{(i)}$ using these tightened bounds we can obtain more accurate lower and upper final linear bounds. For the diagonal matrices and bias vectors, we get:

\begin{minipage}{0.49\linewidth}
    \begin{align*}
        &\underline{\mD}^{(2)} = \begin{bmatrix}
        \frac{u}{u-l} & 0\\
        0 & 1
    \end{bmatrix}= \begin{bmatrix}
        0.3784& 0\\
        0&1
    \end{bmatrix}\\
     &\overline{\mD}^{(2)} = \begin{bmatrix}
        \alpha & 0\\
        0 & 1
    \end{bmatrix}= \begin{bmatrix}
        0 & 0\\
        0&1
    \end{bmatrix}
    \end{align*}
\end{minipage}\hfill
\begin{minipage}{0.49\linewidth}
\begin{align*}
&\underline{\vb}^{(2)} = \begin{bmatrix}
    \frac{-ul}{u-l}\\
    0 
\end{bmatrix}= \begin{bmatrix}
    13.61\\
    0
\end{bmatrix}\\
&\overline{\vb}^{(2)} = \begin{bmatrix}
    0\\
    0 
\end{bmatrix},
\end{align*}
\end{minipage}

and 

\begin{minipage}{0.49\linewidth}
    \begin{align*}
        &\underline{\mD}^{(1)} = \begin{bmatrix}
        \frac{u}{u-l} & 0\\
        0 & \alpha
    \end{bmatrix}= \begin{bmatrix}
        0.5839 & 0\\
        0& 0
    \end{bmatrix}\\
     &\overline{\mD}^{(1)} = \begin{bmatrix}
        \frac{u}{u-l} & 0\\
        0 & \frac{u}{u-l}
    \end{bmatrix}= \begin{bmatrix}
        0.5839 & 0\\
        0& 0.6448
    \end{bmatrix}
    \end{align*}
\end{minipage}\hfill
\begin{minipage}{0.49\linewidth}
\begin{align*}
&\underline{\vb}^{(1)} = \begin{bmatrix}
    \frac{-ul}{u-l}\\
    0 
\end{bmatrix}= \begin{bmatrix}
    2.8986\\
    0
\end{bmatrix}\\
&\overline{\vb}^{(1)} = \begin{bmatrix}
    \frac{-ul}{u-l}\\
    \frac{-ul}{u-l} 
\end{bmatrix}
= \begin{bmatrix}
    2.8986\\
    6.3870 
\end{bmatrix}.
\end{align*}
\vspace{3mm}
\end{minipage}

Thus computing all the $\mA$s and $d$s vectors,

\begin{align*}
    \underline{\mA}^{(2)} &= \underline{\mA}^{(3)}\underline{\mD}^{(2)}\mW^{(2)} = [-1.0275,  2.5138],\qquad
    \overline{\mA}^{(2)} = \overline{\mA}^{(3)}\overline{\mD}^{(2)}\mW^{(2)} = [2,1],\\
    \underline{\mA}^{(1)} &= \underline{\mA}^{(2)}\underline{\mD}^{(1)}\mW^{(1)} = [-1.2, -0.6], \qquad \hspace{7.5mm}
    \overline{\mA}^{(1)} = \overline{\mA}^{(2)}\overline{\mD}^{(1)}\mW^{(1)} = [0.4013, 3.7471],\\
    \underline{d} &= \underline{\mA}^{(3)} \underline{\vb}^{(2)} + \underline{\mA}^{(2)} \underline{\vb}^{(1)} = -30.1983,\qquad \hspace{8.5mm}
    \overline{d} = \overline{\mA}^{(3)} \overline{\vb}^{(2)} + \overline{\mA}^{(2)} \overline{\vb}^{(1)} = 12.1841,
\end{align*}

we obtain:

\begin{align*}
    \underline{f}_{\text{CROWN w/ \texttt{PT-LiRPA}}} = \min\limits_{\vx \in \mathcal{C}} \underline{\mA}^{(1)} (\vx) + \underline{d} &= -\vert\vert \underline{\mA}^{(1)} \vert\vert_1 \cdot \varepsilon + \underline{\mA}^{(1)} \vx_0 + \underline{d}\\
    &= -3.6 -0.6 -30.1983 = -34.4. 
\end{align*}

\begin{align*}
    \overline{f}_{\text{CROWN w/ \texttt{PT-LiRPA}}} = \max\limits_{\vx \in \mathcal{C}} \overline{\mA}^{(1)} (\vx) + \overline{d} &= \vert\vert \underline{\mA}^{(1)} \vert\vert_1 \cdot \varepsilon + \overline{\mA}^{(1)} \vx_0 + \overline{d}\\
    &= 8.2968 + 3.7471 + 12.1841 = 24.23. 
\end{align*}

As we can notice, these bounds are significantly tighter than the ones computed using CROWN ($[-42,24.3]$). However, the theoretical guarantees provided by Theorem \ref{theorem:PT_lirpa_wilks} allow us to state that these bounds are probabilistically sound only for a fraction $R$ of the perturbation region $\mathcal{C}$, thus resulting in a slightly weaker guarantee w.r.t. the one provided by existing probabilistic approaches. Nonetheless, we believe that if one accepts the assumption underlying this theoretical guarantee, this approach still presents a valuable and computationally efficient tool for computing probabilistically valid linear output bounds. In the following, we show how to practically extend these theoretical guarantees to the whole perturbation region, exploiting Theorem \ref{theorem:pt_lirpa_guarantees}.

We start again from computing the estimated reachable sets from a sample-based approach in $\mathcal{C}$. For each estimated lower and upper bound, we compute and add the corresponding error using $\frac{Y_2 - Y_1}{(1 - p)^{-a_l} - 1}$ for the lower and $\frac{Y_n - Y_{n-1}}{(1 - p)^{-a_u} - 1}$ for the upper bound, respectively. We report in Figure \ref{fig:toyDNN_new} highlighted in blue the new estimated reachable sets obtained from the propagation of $n$ random samples drawn from $[[-2,2], [-1,3]]$ with the addition of the corresponding error. Hence, we speculate that recomputing $\underline{\mD}^{(i)},\overline{\mD}^{(i)}, \underline{\vb}^{(i)},\overline{\vb}^{(i)}$ using these new estimated reachable sets we can still obtain more accurate lower and upper final linear bounds w.r.t. the LiRPA-based approaches. In fact, we obtain:

\begin{minipage}{0.49\linewidth}
    \begin{align*}
        &\underline{\mD}^{(2)} = \begin{bmatrix}
        \frac{u}{u-l} & 0\\
        0 & 1
    \end{bmatrix}= \begin{bmatrix}
        0.3809& 0\\
        0&1
    \end{bmatrix}\\
     &\overline{\mD}^{(2)} = \begin{bmatrix}
        \alpha & 0\\
        0 & 1
    \end{bmatrix}= \begin{bmatrix}
        0 & 0\\
        0&1
    \end{bmatrix}
    \end{align*}
\end{minipage}\hfill
\begin{minipage}{0.49\linewidth}
\begin{align*}
&\underline{\vb}^{(2)} = \begin{bmatrix}
    \frac{-ul}{u-l}\\
    0 
\end{bmatrix}= \begin{bmatrix}
    13.7919\\
    0
\end{bmatrix}\\
&\overline{\vb}^{(2)} = \begin{bmatrix}
    0\\
    0 
\end{bmatrix},
\end{align*}
\end{minipage}

and 

\begin{minipage}{0.49\linewidth}
    \begin{align*}
        &\underline{\mD}^{(1)} = \begin{bmatrix}
        \frac{u}{u-l} & 0\\
        0 & \alpha
    \end{bmatrix}= \begin{bmatrix}
        0.5820 & 0\\
        0& 0
    \end{bmatrix}\\
     &\overline{\mD}^{(1)} = \begin{bmatrix}
        \frac{u}{u-l} & 0\\
        0 & \frac{u}{u-l}
    \end{bmatrix}= \begin{bmatrix}
        0.5820 & 0\\
        0& 0.6441
    \end{bmatrix}
    \end{align*}
\end{minipage}\hfill
\begin{minipage}{0.49\linewidth}
\begin{align*}
&\underline{\vb}^{(1)} = \begin{bmatrix}
    \frac{-ul}{u-l}\\
    0 
\end{bmatrix}= \begin{bmatrix}
    2.9219\\
    0
\end{bmatrix}\\
&\overline{\vb}^{(1)} = \begin{bmatrix}
    \frac{-ul}{u-l}\\
    \frac{-ul}{u-l} 
\end{bmatrix}
= \begin{bmatrix}
    2.9219\\
    6.4427 
\end{bmatrix}.
\end{align*}
\end{minipage}

We can now compute all the $\mA$s and $d$s vectors.

\begin{align*}
    \underline{\mA}^{(2)} &= \underline{\mA}^{(3)}\underline{\mD}^{(2)}\mW^{(2)} = [-1.0471,  2.5235]\\
    \overline{\mA}^{(2)} &= \overline{\mA}^{(3)}\overline{\mD}^{(2)}\mW^{(2)} = [2,1]\\
    \underline{\mA}^{(1)} &= \underline{\mA}^{(2)}\underline{\mD}^{(1)}\mW^{(1)} = [-1.2187, -0.6094]\\
    \overline{\mA}^{(1)} &= \overline{\mA}^{(2)}\overline{\mD}^{(1)}\mW^{(1)} = [0.3957, 3.7402]\\
    \underline{d} &= \underline{\mA}^{(3)} \underline{\vb}^{(2)} + \underline{\mA}^{(2)} \underline{\vb}^{(1)} = -30.6434\\
    \overline{d} &= \overline{\mA}^{(3)} \overline{\vb}^{(2)} + \overline{\mA}^{(2)} \overline{\vb}^{(1)} = 12.2865
\end{align*}

Finally we have

\begin{align*}
    \underline{f}_{\text{PT-LiRPA}} = \min\limits_{\vx \in \mathcal{C}} \underline{\mA}^{(1)} (\vx) + \underline{d} &= -\vert\vert \underline{\mA}^{(1)} \vert\vert_1 \cdot \varepsilon + \underline{\mA}^{(1)} \vx_0 + \underline{d}\\
    &= -3.6562 -0.6094 -30.6434 = -34.91. 
\end{align*}

\begin{align*}
    \overline{f}_{\text{PT-LiRPA}} = \max\limits_{\vx \in \mathcal{C}} \overline{\mA}^{(1)} (\vx) + \overline{d} &= \vert\vert \underline{\mA}^{(1)} \vert\vert_1 \cdot \varepsilon + \overline{\mA}^{(1)} \vx_0 + \overline{d}\\
    &= 8.2717 + 3.7402 + 12.2865 = 24.3. 
\end{align*}

Although the upper bound is equivalent to the original CROWN approach, we can notice that our procedure produces a tighter lower bound. This toy example provides a preliminary insight into the potential of the proposed solution. Our speculation on the impact of \texttt{PT-LiRPA} on realistic verification instances will be confirmed by the experiments presented in Sec. \ref{sec:empirical}.

\paragraph{EVT-based approach to directly bound the output?}

A natural question that arises is whether the results of Theorem \ref{theorem:pt_lirpa_guarantees}
can be used directly to obtain a tight estimation of the output reachable set, without relying on the LiRPA combination.
Although this sampling-based method offers a probabilistic estimate that, with high confidence, contains the entire perturbation region, it may still underestimate the true output bounds due to its reliance on a finite number of samples. For example, the MIP result yields bounds of $[-33.0, 18.9]$, and as we can notice in Fig. \ref{fig:toyDNN_new} highlighted in blues, the output reachable set only applying a forward computation of Theorem \ref{theorem:pt_lirpa_guarantees} still underestimates the exact upper bound $[-34.02, 18.83]$. In contrast, since our method integrates a sampling-based approach with any LiRPA method—which inherently provides sound overestimations—the final computed bounds will always be at least as tight as those obtained through estimation based on a finite number of samples and are likely (with a confidence at least $1-2mp$) to produce valid linear bounds, i.e., not discarded by potential adversarial attacks (in fact, we obtain as final result $[-34.4, 24.23]$). Additionally, as emphasized in prior work \citep{autolirpa}, combining forward and backward analysis typically yields tighter bounds compared to using a simple forward bound computation. This observation further motivated our investigation into how tighter reachable sets can enhance the linearization approaches for verification efficiency.

\section{\texttt{PT-LiRPA} Framework for Neural Network Verification}\label{sec:pt_lirpa_algo}

Based on the theoretical results of Sec. \ref{sec:method}, we now present in Algorithm \ref{alg:pt_lirpa} the \texttt{PT-LiRPA} approach for the verification process. For the sake of clarity and without loss of generality, we present the procedure applied to the parallel BaB as shown for the optimized LiRPA approach proposed in \citep{acrown}.
Given a DNN $f$ with $m$ neurons and a region of interest $\mathcal{C}$, the verification process typically involves a projected gradient descent (PGD) attack \citep{pgd}. 
\begin{algorithm}[b]
\caption{\texttt{PGD\_Attack}\cite{pgd}}\label{alg:pgd_linf}
\begin{algorithmic}[1]
\small
\STATE \textbf{Input} Original input $\vx_0$, neural network $f$, maximum perturbation $\varepsilon$ to create data range $[x_{\min},x_{\max}]$, step size $\alpha$, iterations $T$,  \textit{random\_start}
\STATE \textbf{Output} adversarial example $\vx_{\text{adv}}$ with $\|\vx_{\text{adv}}-\vx_0\|_\infty \le \varepsilon$ or original input $\vx_0$
\vspace{0.6em}

\IF{\textit{random\_start}}
    \STATE $\vx \leftarrow \vx_0 + \operatorname{Uniform}(-\varepsilon,\varepsilon)$ 
\ELSE
    \STATE $\vx \leftarrow \vx_0$
\ENDIF
\STATE $\vx_{\text{adv}} \leftarrow \operatorname{clip}(\vx, x_{\min}, x_{\max})$
\FOR{$t \in \{1,\ldots, T\}$}
    \IF{$f(\vx_{\text{adv}}) \le 0$}
        \STATE \textbf{return} $\vx_{\text{adv}}$
    \ENDIF

    \STATE $g \leftarrow \nabla_{\vx_{\text{adv}}} \,f(\vx_{\text{adv}})$ \hspace*{\fill} $\rhd$  gradient of scalar output w.r.t. adversarial input

    \STATE $\vx_{\text{adv}} \leftarrow \vx_{\text{adv}} - \alpha \cdot \operatorname{sign}(g)$ \hspace*{\fill} $\rhd$  descent step for minimizing $s$
    
    \STATE $\Delta \leftarrow \operatorname{clip}(\vx_{\text{adv}} - \vx_0, -\varepsilon, \varepsilon)$ \hspace*{\fill} $\rhd$  project perturbation onto $L_\infty$ ball
    
    \STATE $\vx_{\text{adv}} \leftarrow \operatorname{clip}(\vx_0 + \Delta, x_{\min}, x_{\max})$
\ENDFOR

\RETURN $\vx_0$
\end{algorithmic}
\end{algorithm}
This attack, reported in Alg. \ref{alg:pgd_linf} for the sake of completeness, can be employed before, after, or during the BaB procedure to search for potential adversarial inputs within the input region under consideration.  
\begin{algorithm}[t]
\caption{\texttt{PT-LiRPA} on parallel BaB}\label{alg:pt_lirpa}
\begin{algorithmic}[1]
\small
\STATE \textbf{Input:} A DNN $f$ with $N$ layers and $m$ neurons, an original input $\vx_0$, a maximum $\varepsilon$ perturbation to create a perturbation region $\mathcal{C}$, maximum error in the confidence $p$, sample size $n$, $\xi \in (0,1)$ for $\nu(n)$ and a batch size $t$.
\STATE \textbf{Output:} \textit{robust}/\textit{not-robust}
\vspace{0.2cm}

\hspace*{\fill} $\rhd$ as in Alg. \ref{alg:pgd_linf}
\IF{\texttt{PGD\_attack}($f, \mathcal{C}$)} 
    \STATE \textbf{return} \textit{not robust} 
\ENDIF

\STATE{\textit{interm\_bounds} $\gets \texttt{get\_interm\_bounds}(f, \mathcal{C}, n, p, \xi)$} \hspace*{\fill} $\rhd$ as in Alg. \ref{alg:interm_bounds}
\STATE{$\underline{f}_{\mathcal{C}}, \overline{f}_{\mathcal{C}} \gets \texttt{LiRPA}(f, \mathcal{C}, \textit{interm\_bounds}$)} \hspace*{\fill} $\rhd$ as in Alg. \ref{alg:lirpa} where $\mathcal{C}$ contains $\vx_0, \varepsilon$

\STATE{$\mathcal{B} \gets (\underline{f}_{\mathcal{C}}, \overline{f}_{\mathcal{C}})$}

\WHILE{$\mathcal{B} \neq \emptyset$}
{
      \STATE{$\mathcal{C}_1, \dots, \mathcal{C}_t \gets \texttt{split}(\mathcal{B}, t)$}
      \STATE{$\textit{interm\_bounds}_{\mathcal{C}_1,\dots,\mathcal{C}_t}
      \gets\texttt{get\_interm\_bounds}(f, [\mathcal{C}_1,\dots,\mathcal{C}_t], n, p, \xi)$}
      \STATE{$(\underline{f}_{\mathcal{C}_{1}}, \overline{f}_{\mathcal{C}_{1}}),\ldots,(\underline{f}_{\mathcal{C}_{t}}, \overline{f}_{\mathcal{C}_{t}})  \gets \texttt{LiRPA}(f, [\mathcal{C}_1,\ldots, \mathcal{C}_t],$\textit{interm\_bounds}$_{\mathcal{C}_1,\ldots,\mathcal{C}_t})$}
      \hspace*{\fill} $\rhd$ parallel exec. of Alg. \ref{alg:lirpa} on $\mathcal{C}_{1,\ldots,t}$
      
      \STATE $\mathcal{B}' \gets (\underline{f}_{\mathcal{C}_1}, \overline{f}_{\mathcal{C}_1}), \dots, (\underline{f}_{\mathcal{C}_t}, \overline{f}_{\mathcal{C}_t})$

      \IF{$\exists\;\mathcal{C}_i \in \mathcal{B'}\; s.t.\; \overline{f}_{\mathcal{C}_i} < 0$ or \texttt{PGD\_attack}$(f,C_i)$}
        {
        \STATE{\textbf{return} \textit{not robust}}
        }
        \ENDIF}
      \STATE{$\mathcal{B} \gets \mathcal{B}' \setminus \texttt{get\_robust\_domains}(\mathcal{B}')$}  
\ENDWHILE

\STATE{\textbf{return} \textit{robust}}
\end{algorithmic}

\end{algorithm}

In detail, we report in our \texttt{PT-LiRPA} algorithm (Alg. \ref{alg:pt_lirpa}), a potential employment of PGD during the verification process. Specifically, the attack is performed before and during the BaB, performing a projected gradient descent search in the $L_{\infty}$ ball $\mathcal{C} = \{\vx:\|\vx-\vx_0\|_{\infty}\leq\varepsilon\}$ to find an adversarial input $\vx_{\text{adv}}$ that makes the scalar model output non-positive, i.e., $f(\vx_{\text{adv}})\leq 0$. Starting from either the clean original input or a random uniform perturbation in $[-\varepsilon,\varepsilon]$ (i.e., a random input vector in the $\mathcal{C}$) the method iteratively evaluates the scalar output, computes its gradient with respect to the input, and takes an $L_{\infty}$-constrained descent step using the elementwise sign of the gradient. After each update, the perturbation is projected back onto the $L_{\infty}$ ball and the input is clipped to the valid data range; the procedure stops early if a negative output is obtained and otherwise runs for at most $T$ iterations.\\ The main hyperparameters are the maximum perturbation $\varepsilon$, the step size $\alpha$ (typically chosen on the order of $\varepsilon/T$), the maximum iterations $T$, and the optional random start; multiple restarts or momentum can be used to increase attack strength. Success provides a concrete counterexample to robustness within the prescribed $L_{\infty}$ radius, while failure is only a heuristic indication and does not constitute a formal certificate of robustness.\\
Hence, Alg. \ref{alg:pt_lirpa} begins with a PGD attack (lines 3-5), and if no adversarial is found, we proceed with the Branch-and-Bound process. 
We compute the estimated reachable sets using the \texttt{get\_interm\_bounds} method (line 6), which exploits the results of Theorem \ref{theorem:pt_lirpa_guarantees} and is reported here below in Alg. \ref{alg:interm_bounds} for clarity. 

\begin{algorithm}[h!]
\caption{\texttt{get\_interm\_bounds}}\label{alg:interm_bounds}

\begin{algorithmic}[1]
\small

\STATE \textit{interm\_bounds} $\gets \{\}$
\STATE  $\vx_1,\ldots,\vx_n\gets \operatorname{UniformSampling}(\mathcal{C}, n)$ \hspace*{\fill} $\rhd$  collect $n$ random i.i.d inputs from $\mathcal{C}$
\FOR{each intermediate layer}
\STATE $\hat{\vl}, \hat{\vu} \gets \{\}$
\FOR{each node $z$ in layer nodes}

    \STATE $Y_1,\ldots, Y_n$ $\gets \operatorname{Sort}(z(\vx_1), \ldots, z(\vx_n))$ \hspace*{\fill} $\rhd$  with $z(\cdot)$ as in Theorem \ref{theorem:pt_lirpa_guarantees}
    \STATE $\overline{l}, \underline{u} \gets$ $Y_1$, $Y_n$
    \STATE $a_l, a_u \gets \frac{\log(\nu)}{\log\left( \frac{Y_\nu - Y_3}{Y_3 - Y_2} \right)}, \frac{\log(\nu)}{\log\left( \frac{Y_{n-2} - Y_{n-\nu}}{Y_{n-1} - Y_{n-2}} \right)}$
    \STATE $\hat{l},\hat{u} \gets \overline{l} - \frac{Y_2 - Y_1}{(1 - p)^{-a_l} - 1}, \underline{u} + \frac{Y_n - Y_{n-1}}{(1 - p)^{-a_u} - 1}$\hspace*{\fill} $\rhd$  as in Eq. \ref{gaps}
    \STATE $\hat{\vl},\hat{\vu}  \gets \hat{\vl} \cup \hat{l}, \hat{\vu} \cup \hat{u}$

\ENDFOR
\STATE \textit{interm\_bounds} $\gets$ \textit{interm\_bounds} $\cup [\hat{\vl}, \hat{\vu}]$ \hspace*{\fill} $\rhd$ store the vector of lower and upper bounds for the specific layer
\ENDFOR
\STATE{\textbf{return} \textit{interm\_bounds}}
\end{algorithmic}
\end{algorithm}
We then use these bounds in the linear bounds computation on any existing \texttt{LiRPA} approach (line 6), following Alg. \ref{alg:lirpa} of Sec. \ref{sec:lirpa_preliminaries} and the computation shown in the toy example of Sec. \ref{subsec:example_computation}. We store the resulting output bounds $\underline{f}$ and $\overline{f}$ for the region $\mathcal{C}$, namely $\underline{f}_{\mathcal{C}}$ and $\overline{f}_{\mathcal{C}}$ in a set $\mathcal{B}$ of unverified regions (line 7).
\begin{algorithm}[b]
\caption{\texttt{get\_robust\_domains}}\label{alg:get_robust_domains}
\begin{algorithmic}[1]
\small
\STATE \textit{robust\_domains} $\gets \{\}$
\FOR{$(\underline{f}_{\mathcal{C}_i}, \overline{f}_{\mathcal{C}_i}) \in \mathcal{B}$}
\IF{$\underline{f}_{\mathcal{C}_i} >0$}
    \STATE \textit{robust\_domains} $\gets$\textit{robust\_domains} $\cup \;\mathcal{C}_i$ \hspace*{\fill} $\rhd$ following Def. \ref{def:decision_problem_new}
\ENDIF
\ENDFOR
\STATE{\textbf{return} \textit{robust\_domains}}
\end{algorithmic}
\end{algorithm}
We then continue the BaB process by splitting (using the \texttt{split} method) the original region from $\mathcal{B}$ into $t$ sub-regions (line 9). Notably, we can perform the parallel selection and splitting into sub-domains using information on unstable ReLU nodes, as shown in \citep{babSplitRelu,bcrown}, or just on the perturbation region $\mathcal{C}_i$ \citep{reluval}. Once we have the new sub-domains, we recompute the estimated reachable sets in parallel and use these bounds for the new computation of the linear lower and upper bounds for each sub-region, and we update $\mathcal{B}$ with the resulting unverified sub-domains (lines 10-12). At each iteration, the process can end either because there is at least a single sub-domain $\mathcal{C}_i \in \mathcal{B}$ that presents $\overline{f} < 0$, or a PGD attack succeeds, thus returning \textit{not robust} as the answer (lines 13-16). Otherwise, the process continues updating $\mathcal{B}$ with the unverified domains
using the procedure \texttt{get\_robust\_domains} (line 15) reported in Alg. \ref{alg:get_robust_domains}. Following Theorem \ref{theorem:pt_lirpa_guarantees}, if we reach the emptiness of $\mathcal{B}$, thus no adversarial examples are found during the verification process and all the sub-domains are evaluated as \textit{robust}, we can state that, with a confidence $\geq 1-2mp$, the DNN is robust for the whole perturbation region $\mathcal{C}$.

\section{Empirical Evaluation}\label{sec:empirical}
Our empirical evaluation consists of three main experiments to answer the following questions:
\begin{description}
    \item [\textbf{Q1.}] \textit{How does the hyperparameter $\xi$ impact the lower bound computation? How does the number of samples employed in the computation process impact the tightening process? }

    \item [\textbf{Q2.}] \textit{How much \texttt{PT-LiRPA} improves the robustness bounds certificates w.r.t. other probabilistic and worst-case methods? What is the computational overhead of the proposed solution with respect to a worst-case certification approach?}
    
    \item [\textbf{Q3.}] \textit{What is the general impact of \texttt{PT-LiRPA} in the verification process of challenging instances such as the one employed in the VNN-COMP \citep{VNN-comp2022,VNN-comp2023}?}
\end{description}

All data are collected on a cluster running Rocky Linux 9.34 equipped with Nvidia RTX A6000 (48 GiB) and a CPU AMD Epyc 7313 (16 cores). The code, trained models, and comprehensive instructions for reproducing our results are available at \url{https://github.com/lmarza/ProbVerNet}.
\paragraph{\textbf{Answers to Q1.}} To address the first question, we consider the original pre-trained models on MNIST dataset. Specifically, we focus on MLP models with varying depths and activation functions. For consistency and ease of comparison, we adopt the same notation as \citep{crown} and \citep{proven}: a model is denoted by the dataset name, followed by the number of layers $i$, the number of neurons per layer $j$, formatted as $i\times[j]$, and the activation function used.
We begin by analyzing the mean percentage error in estimating the lower bounds of the intermediate reachable sets (using Eq.\;\ref{eq:error}) for a fixed input image, using \texttt{PT-LiRPA} on various neural networks trained on the MNIST dataset.

\begin{equation}\label{eq:error}
    error = \frac{Y_2-Y_1}{(1-p)^{-a} - 1}
\end{equation}

with $a \approx  \frac{\log(\nu)}{\log\left( \frac{Y_\nu - Y_3}{Y_3 - Y_2} \right)}$, where $\nu = \lfloor n^\xi \rfloor$, $\xi \in (0,1)$.

\begin{figure}[b]
    \centering    \includegraphics[width=0.32\textwidth]{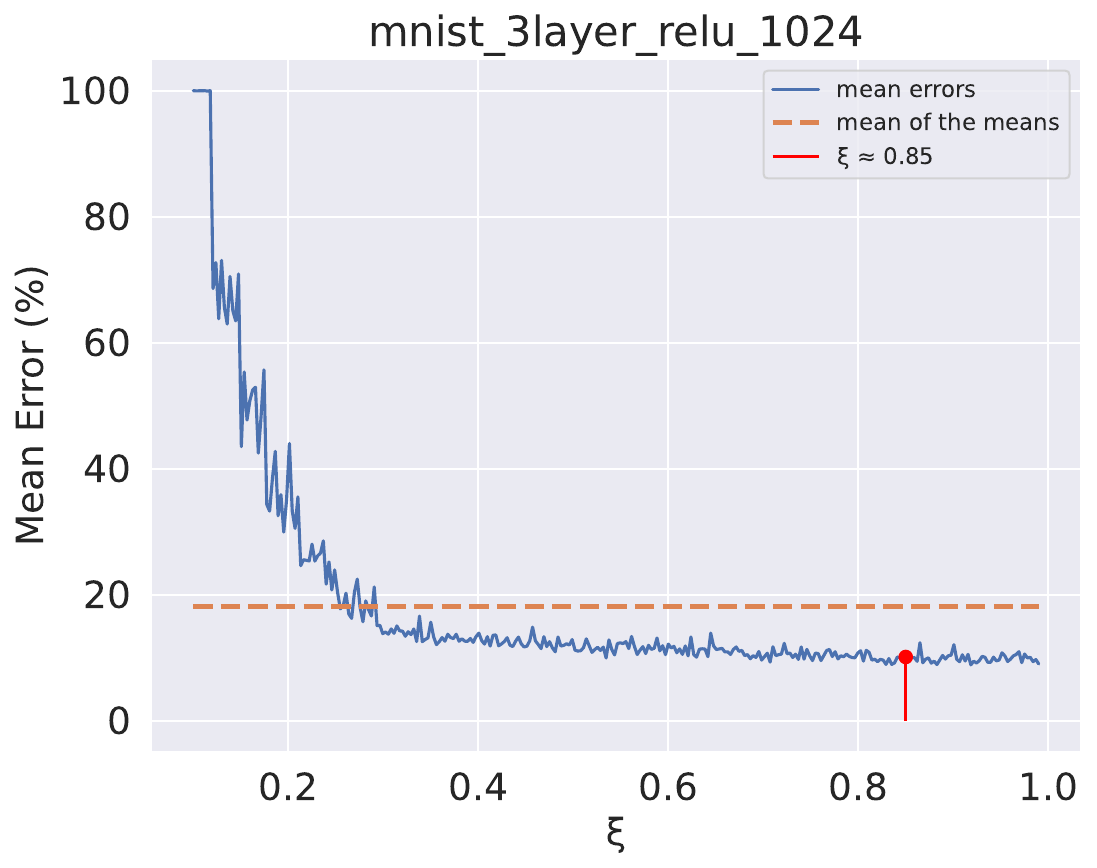}
    \includegraphics[width=0.32\textwidth]{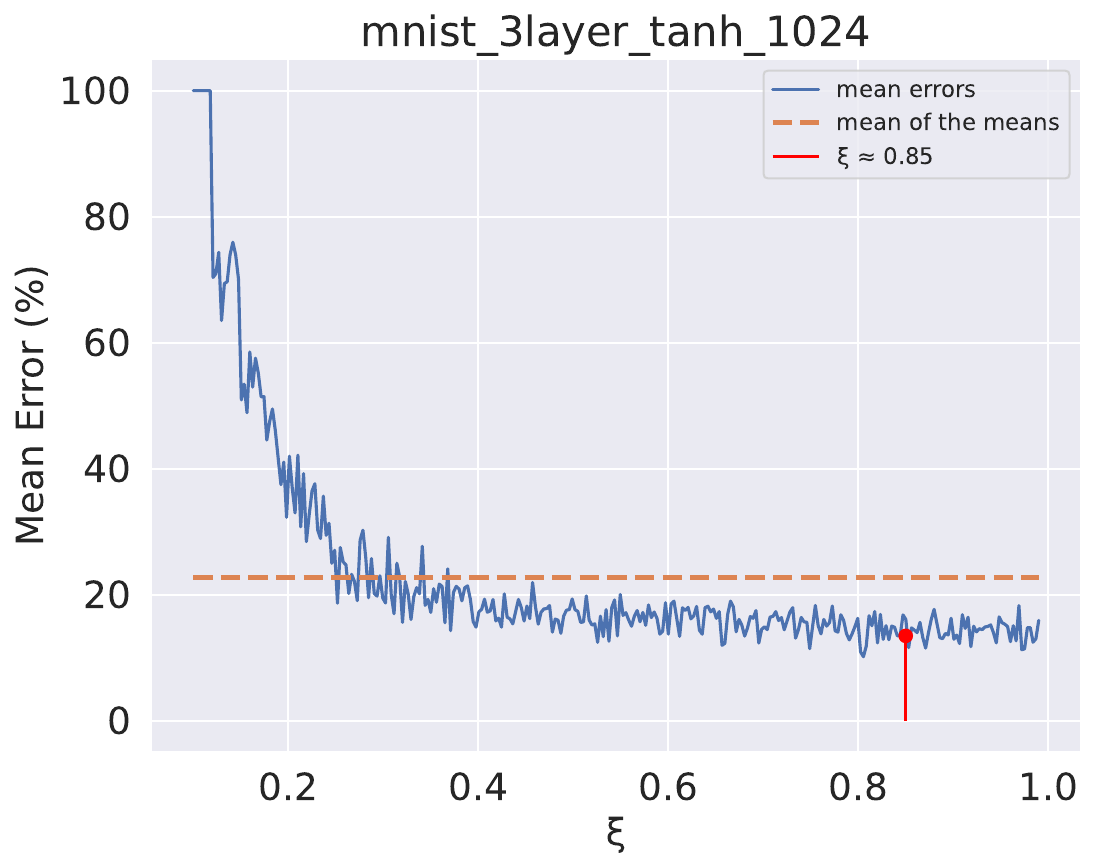}
    \includegraphics[width=0.32\textwidth]{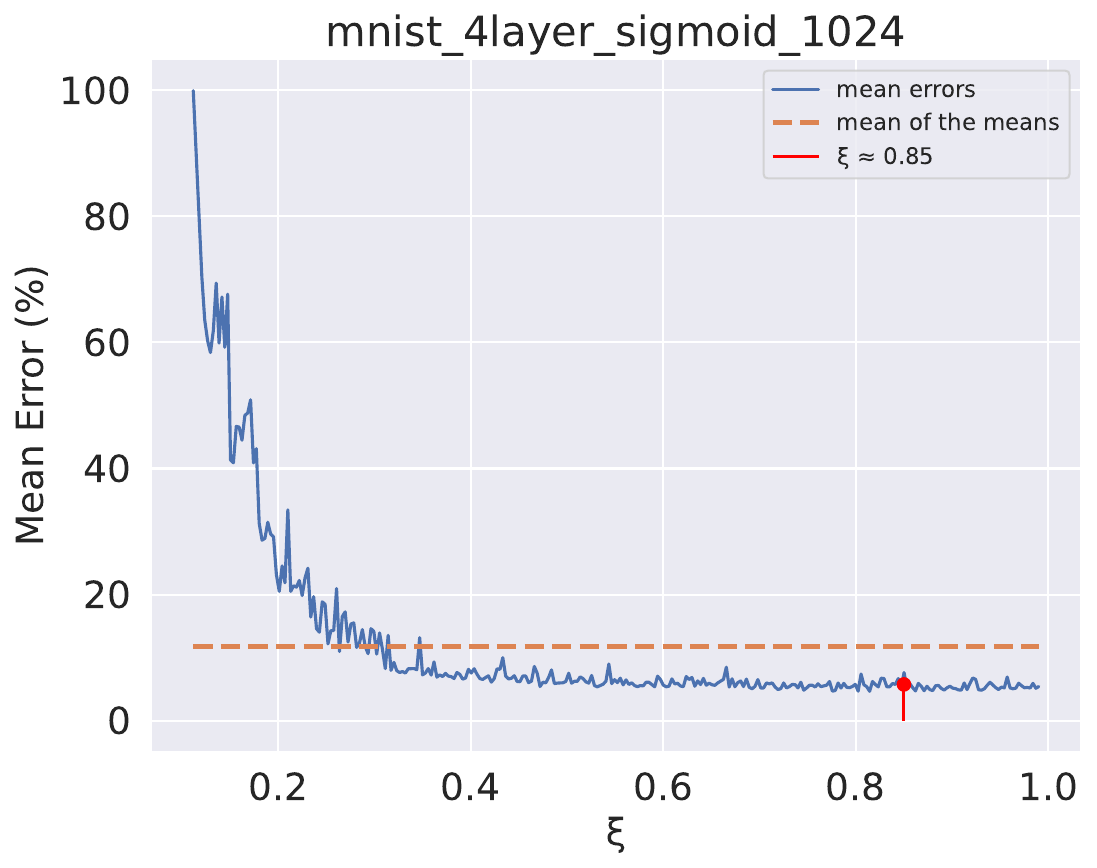}
    \caption{Mean distance error (\%) achieved on each intermediate node using PT-LiRPA with EVT-based error computation, for networks with ReLU (left), Tanh (middle), and Sigmoid (right) activations. In red we report the value  $\xi=0.85$ selected for the following experiments.}
    \label{fig:evterror}
\end{figure}

In particular, we first study the impact of the $\xi \in (0,1)$ hyperparameter, which controls the number of order statistics $\nu = \lfloor n^\xi \rfloor$ used to estimate the tail distribution and compute the mean distance error across all intermediate nodes, and thus the neural network. Our results reported in Fig. \ref{fig:evterror} show that, using a fixed sample size of $n=10k$, for small values of $\xi$, the number of extreme samples is limited, leading to high variance and bias in the tail modeling, and consequently to considerable estimation errors. As $\xi$ increases in the interval $(0.2, 0.6)$, the error decreases significantly due to the improved reliability of the extreme value statistics. For larger $\xi$ values, around $[0.6,1)$, the error stabilizes below the overall mean error across all different $\xi$ values, indicating that the estimator becomes robust and further improvements are marginal. We highlight that, even without access to the true lower (or upper, respectively) bound, one can in principle fine-tune the $\xi$ parameter as desired to minimize the error with respect to the target value for a given application.  Based on these observations, we set $\xi = 0.85$ for the following experiments, as it provides a good trade-off between low error estimation and stability across different models.

Hence, for the fixed value $\xi$ and perturbed input, we investigate the impact of three sample sizes, namely $10000, 100000,$ and $350000$, in the lower bound estimation of each estimated reachable set. For each sample size, we compute the mean distance error across the entire network setting $p=0.01$, thus ensuring a confidence level of at least 99\% in the final results.  
For the distance error achieved on each intermediate node of the network using Eq.\;\ref{eq:error} with $\xi=0.85$. We then compute the percentage error relative to the smallest observed value in the sample. Specifically, for each neuron, the error is normalized by the absolute value of the smallest pre-activation observed value.

\begin{table}[h]
    \centering
     \caption{Mean maximum error in estimating the lower bound of the intermediate reachable set for a fixed input image, using \texttt{PT-LiRPA} with $1-p=0.99$ on various neural networks trained on the MNIST dataset with different sample sizes.}
     \resizebox{\textwidth}{!}{
    \begin{tabular}{l||cc|cc|cc} 
    \hline 
    \textbf{Name model} & \textbf{\# samples} & \textbf{mean distance} & \textbf{\# samples} & \textbf{mean distance} & \textbf{\# samples} & \textbf{mean distance} \\ 
    \multicolumn{1}{c||}{} & & \textbf{error (\%)} &  & \textbf{error (\%)} & & \textbf{error (\%)}\\
    % \hline
    % MNIST 3$\times[20]$, ReLU & 5000 & 0.28 & 10000 & 0.18 & 50000 & 0.17\\ 
    \hline
    MNIST 2$\times[1024]$, ReLU & 10000 & 12.91 & 100000 & 9.35 & 350000 & 8.78 \\ 
    \hline
    MNIST 3$\times[1024]$, ReLU & 10000 & 9.63 & 100000 & 8.49 &350000 & 7.09\\ 
    \hline
    MNIST 4$\times[1024]$, ReLU & 10000 & 11.89 & 100000 & 9.34 &350000 & 8.85\\    
    \hline
    MNIST 2$\times[1024]$, Tanh & 10000 & 11.31 & 100000 & 12.22 & 350000& 10.49\\ 
    \hline
    MNIST 3$\times[1024]$, Tanh & 10000 & 10.67 & 100000 & 11.01 & 350000 & 10.34\\ 
    \hline
    MNIST 4$\times[1024]$, Sigmoid & 10000 & 5.03 & 100000 & 6.21 & 350000 & 4.36\\ 
    \hline
     & \textbf{mean error} & 10.24\% & \textbf{mean error} & 9.44\%  & \textbf{mean error} & 8.32\%  \\ 
    \end{tabular}}
   
    \label{tab:bound}
\end{table}

The results of Tab.\;\ref{tab:bound} demonstrate that with high confidence (i.e., at least 99\%) across all tested networks, increasing the sample size, the percentage error in the estimation decreases. Importantly, even with a limited sample size (e.g., $10000$ samples), we note that the mean error across all the networks in the lower bound estimation is $10.24\%$ from the estimated one, while increasing the sample size, we reach a maximum error of $8.32\%$.  

To assess the practical impact of the estimation error on the final output bounds, we consider the same perturbed input of the previous experiments and fix the sample size to $350k$. We then compare the final output bounds obtained by an exact MIP-based verification \citep{MIP}, the state-of-the-art $\alpha$-CROWN \citep{acrown} method, and $\alpha$-CROWN \citep{acrown} enhanced with our \texttt{PT-LiRPA}. As a representative case, we select the \textit{MNIST\_2x[1024]\_ReLU} network. 
\begin{figure}[b]
    \centering
    \includegraphics[width=0.9\linewidth]{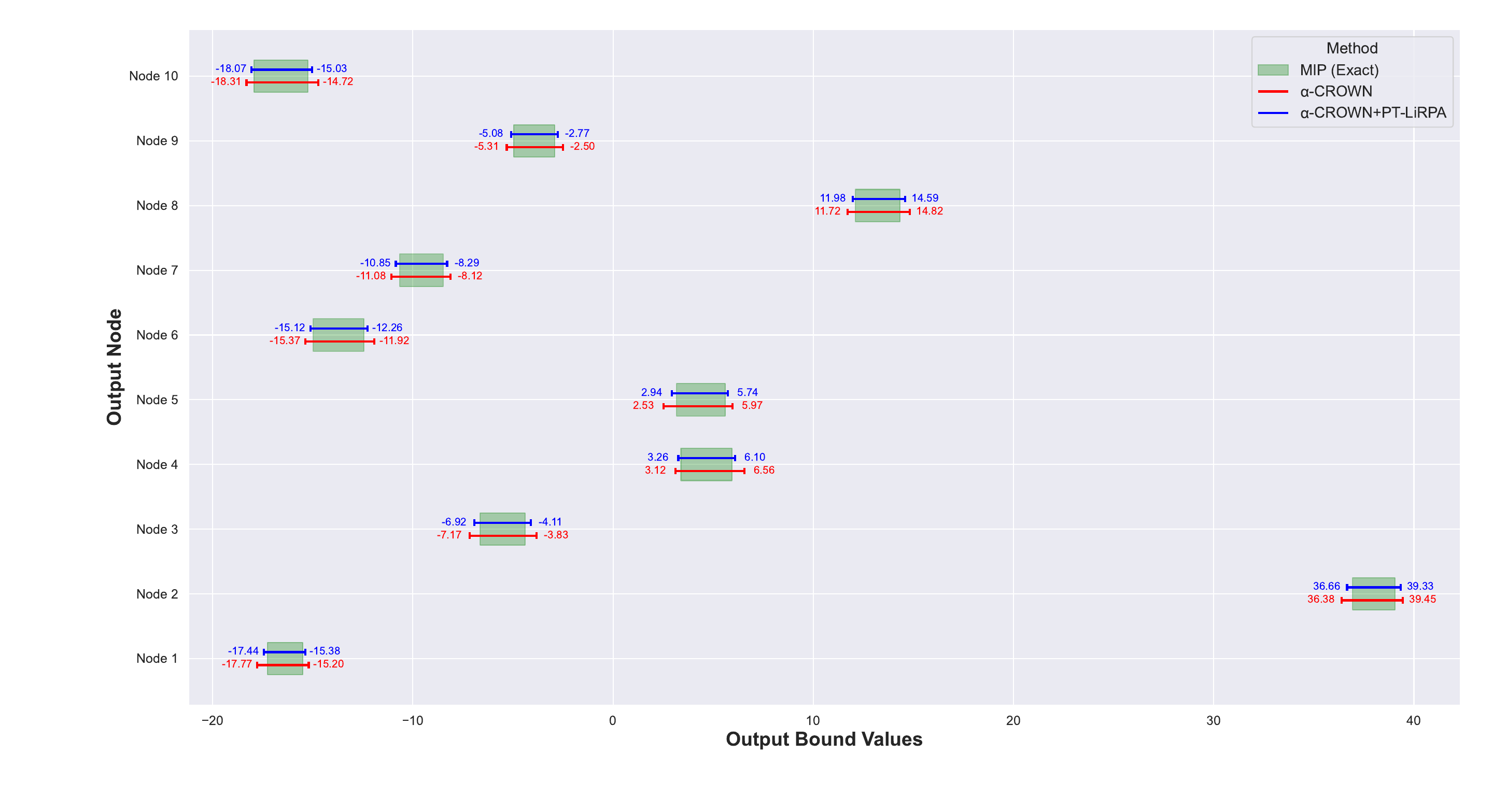}
    \vspace{-1cm}
    \caption{Comparison of output bounds on the \textit{MNIST\_2x[1024]\_ReLU} network using  $\alpha$-CROWN \citep{acrown} reported in red,$\alpha$-CROWN \citep{acrown} with \texttt{PT-LiRPA} in blue, and exact MIP verification \citep{MIP} in green.}
    \label{fig:bound_comparison}
\end{figure}
The reason for selecting this network is to enable a comparison with MIP \cite{MIP}, which provides exact output bounds. Since MIP solvers are inherently designed for ReLU, and more generally, piecewise-linear activations, they are not directly applicable to networks with Sigmoid or Tanh activations. Among the models listed in Table \ref{tab:bound}, the ones with larger estimation errors under ReLU activations with confidence $1-p=0.99$ are \textit{MNIST\_2x[1024]\_ReLU} and \textit{MNIST\_4x[1024]\_ReLU}. We select the former because MIP scalability becomes a limiting factor on larger architectures, making \textit{MNIST\_2x[1024]\_ReLU} the most suitable candidate for this analysis.
The results, reported in Fig.\;\ref{fig:bound_comparison}, show that despite the estimation error, \texttt{PT-LiRPA} produces tighter output bounds than $\alpha$-CROWN, while soundly overapproximating the exact MIP bounds. Similar trends, not reported here for the sake of readability, were consistently observed across all evaluated networks compared with $\alpha$-CROWN.

We also perform an additional experiment to analyze the impact of the confidence level on the tightness of the bounds. Specifically, we consider the same model of Fig. \ref{fig:bound_comparison}, i.e., \textit{MNIST\_2x[1024]\_ReLU}, where we have the possibility of employing the exact MIP solver and evaluated a range of increasing confidence levels, namely
$1-p \in \{0.8, 0.9, 0.95, 0.99, 0.995, 0.996, 0.997, 0.998, 0.999\}.$ For each confidence level, we compute the bounds using $\alpha$-CROWN, $\alpha$-CROWN enhanced with our \texttt{PT-LiRPA}, and MIP (which provides the exact bounds). To measure the tightness, we calculate for each of the 10 output nodes the distance between the two bounds computed. For example, if MIP returns output bounds $[-2.2, 3.5]$ and \texttt{PT-LiRPA} returns $[-2.4, 3.7]$, the distance is given by the sum of the absolute differences between the lower and upper bounds, i.e., $ |-2.2 +2.4| + |3.7-3.5| = 0.4$. This procedure is repeated for all 10 nodes, collecting, for each confidence level, the mean and standard deviation of these distances.

\begin{figure}[h!]
    \centering
    \includegraphics[width=0.75\linewidth]{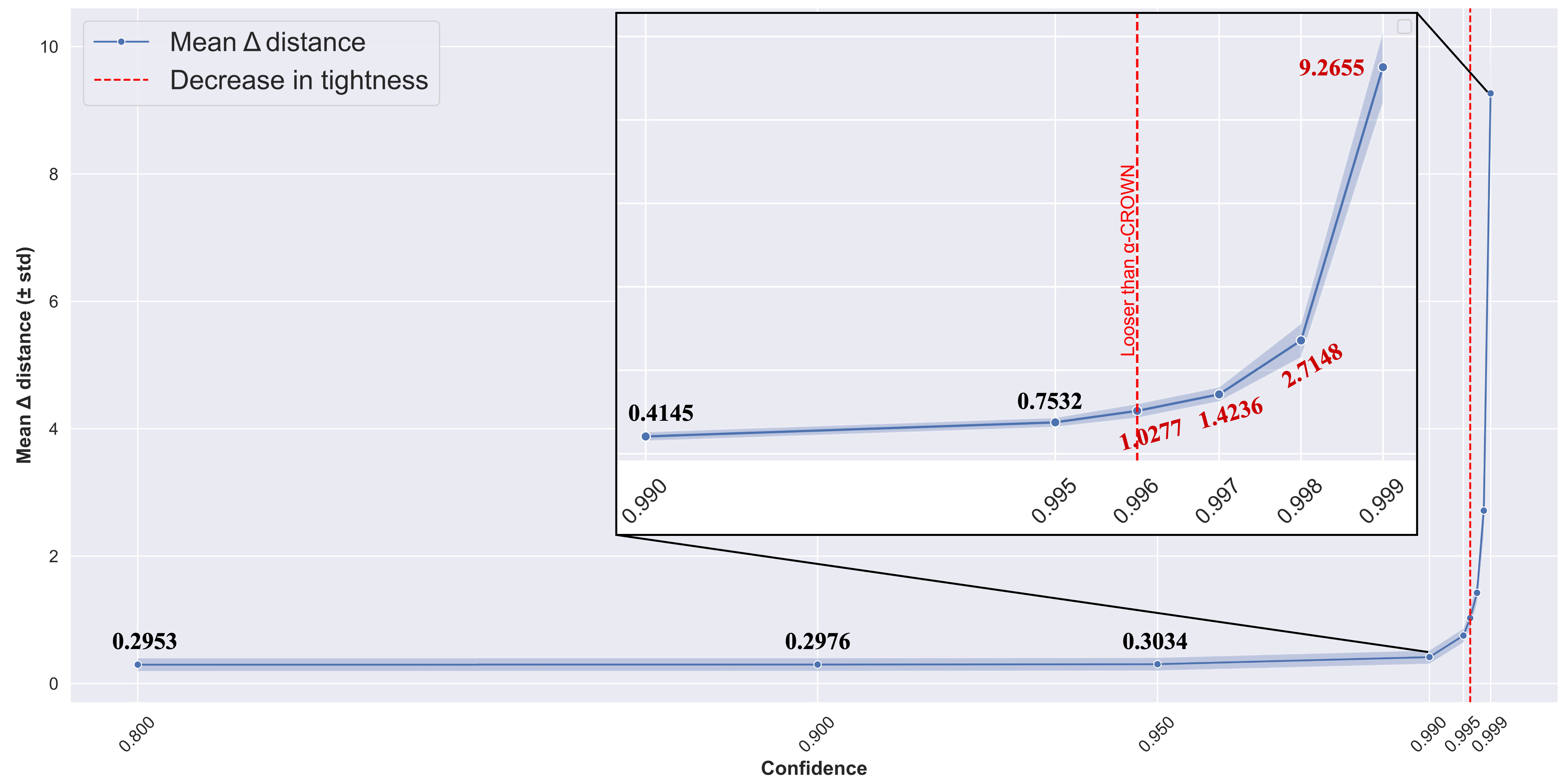}
    \caption{Mean tightness level of \texttt{PT-LiRPA} for increasing confidence level on \textit{MNIST\_2x[1024]\_ReLU} model.}
    \label{fig:exp_confidence}
\end{figure}

The results in Fig. \ref{fig:exp_confidence} clearly show that for moderate confidence levels, i.e., $1-p \in [0.8, 0.995]$, \texttt{PT-LiRPA} consistently produces sound and tighter bounds compared to $\alpha$-CROWN. As the confidence level approaches 1 (i.e., as $p \to 0$), the distance between \texttt{PT-LiRPA} and the exact bounds increases, leading to looser bounds than $\alpha$-CROWN. This trend is perfectly in line with the probabilistic nature of our approach. In fact, considering, for instance, the formula used to compute the probabilistic lower bound, $\hat{l} = Y_1 - \frac{Y_2 - Y_1}{(1 - p)^{-a_l} - 1}$ where $a_l \approx \frac{\log(\nu)}{\log\left( \frac{Y_\nu - Y_3}{Y_3 - Y_2} \right)}$ ($\nu = \lfloor n^\xi \rfloor = 350k^{0.85} \simeq 50k$), we observe that as $p \to 0$ (corresponding to requiring very high confidence), the denominator $(1-p)^{-a_l} - 1$ tends to 0 faster than the numerator $(Y_2 - Y_1)$. As a result, the error term $\frac{Y_2 - Y_1}{(1 - p)^{-a_l} - 1}$
becomes large, and a correspondingly larger margin must be subtracted from the observed minimum. Consequently, the bounds become looser as the confidence level approaches $1$. This behavior is mathematically unavoidable since we cannot guarantee arbitrarily high confidence levels while simultaneously maintaining very tight bounds. The plot therefore not only confirms the soundness of our method but also highlights the expected trade-off between confidence and tightness, which is especially relevant for safety-critical applications. For the sake of completeness, we also tested additional models such as \textit{MNIST\_2x[1024]\_Tanh} and \textit{MNIST\_4x[1024]\_Sigmoid}. Although MIP was not applicable in these cases, we compared \texttt{PT-LiRPA} only against $\alpha$-CROWN and observed a similar trend, i.e., a loss of tightness above confidence $0.995$, further confirming the generality of our findings.

\paragraph{\textbf{Answers to Q2.}} For the second question, we consider the models trained on MNIST and CIFAR datasets, as provided in \citep{crown}. Hence, we evaluate the performance of our \texttt{PT-LiRPA}-based probabilistic verifier, alongside PROVEN \citep{proven}, Randomized Smoothing \citep{randomizedSmoothing}, and CROWN \citep{crown}, by testing for each model 10 random images from the corresponding test datasets. Specifically, we compare the maximum input perturbation $\varepsilon$ that can be certified for each method. CROWN \citep{crown} serves as the baseline for worst-case robustness certification, as employed in \citep{proven}. Based on the previous results, we set a sample size of $n = 350k,\;\xi=0.85$ and $1-2mp\geq0.99$ in our \texttt{PT-LiRPA}-based verifier. This confidence level is consistent with the settings used by \citet{proven} in their experiments.

\begin{table}[h!]
    \centering
    \caption{Comparison of \texttt{PT-LiRPA} with worst-case bound CROWN \cite{crown} and probabilistic approaches PROVEN \cite{proven}, Randomized Smoothing \cite{randomizedSmoothing} on different neural networks MNIST and CIFAR models. $\dagger$ results taken from the original paper \cite{proven} due to the code's unavailability to reproduce the results.}
    
    \resizebox{\textwidth}{!}{
    \begin{tabular}{l||cccccccc} 
    \hline 
    Certification method & Worst-case (CROWN) & PROVEN$^{\dagger}$~ & Rand. Smoothing & CROWN w/ \texttt{PT-LiRPA}~ & \texttt{PT-LiRPA} certification bound increase \\
    \multicolumn{1}{c||}{Confidence} & ~100\% & $\geq 99\%$ & $\geq 99\%$ & $\geq 99\%$ & w.r.t. CROWN, PROVEN, Rand. Smoothing \\ 
    \hline
    % MNIST 3$\times[20]$, ReLU & 0.02412 & 0.03828 & 0.016 & \textbf{0.04} & \textbf{1.66X} &  \textbf{1.04X} & \textbf{2.5X} \\ 
    % \hline
    MNIST 2$\times[1024]$, ReLU & 0.03566$\pm 0.011$ & 0.0556 & 0.0461$\pm$0.0106 & \textbf{0.0558$\pm$0.02068} & \textbf{1.6X}, \textbf{1.004X},  \textbf{1.21X} \\ 
    \hline
    MNIST 3$\times[1024]$, ReLU & 0.03112 $\pm$ 0.01076 & 0.03524 & 0.03452$\pm$0.0169 & \textbf{0.06652 $\pm$0.02501} & \textbf{2.14X}, \textbf{1.9X}, \textbf{1.93X}\\ 
    \hline
    MNIST 2$\times[1024]$, Tanh & 0.01827 $\pm$0.01331 & 0.02915 & 0.02301$\pm$0.0115 & \textbf{0.02949 $\pm$ 0.02515} & \textbf{1.61X}, \textbf{1.01X}, \textbf{1.28X} \\ 
    \hline
    MNIST 3$\times[1024]$, Tanh & 0.01244 $\pm$0.00468 & 0.01360 & 0.01294$\pm$0.0073 & \textbf{0.0257 $\pm$ 0.01205} & \textbf{2.07X}, \textbf{1.89X}, \textbf{1.99X} \\ 
    \hline
    MNIST 4$\times[1024]$, Sigmoid & 0.01975 $\pm$0.0111 & 0.02170 & 0.02439$\pm$0.019 &\textbf{0.05506$\pm$0.04035} & \textbf{2.79X}, \textbf{2.54X}, \textbf{2.26X} \\ 
    \hline
    CIFAR 5$\times[2048]$, ReLU & 0.002412 $\pm$ 0.00184 & 0.00264 & 0.00778$\pm$0.0104 &\textbf{0.00874 $\pm$ 0.00107} & \textbf{3.62X}, \textbf{3.31X}, \textbf{1.12X} \\ 
    \hline
    CIFAR 7$\times[1024]$, ReLU & 0.001984 $\pm$ 0.00089 & 0.00209 &  \textbf{0.006264$\pm$0.0061} &0.00471 $\pm$ 0.00273 & \textbf{2.37X}, \textbf{2.25X}, 0X \\
    \hline
    \end{tabular}}
    \label{tab:proven_comparison}
\end{table}

Tab.\ref{tab:proven_comparison} reports the results obtained. The \textit{``Worst-case"} column indicates the mean and standard deviation of the maximum $\varepsilon$ perturbation tolerated and provably certified using CROWN \citep{crown} for that model under consideration in the 10 random images tested, as in \citep{proven}. Instead, the PROVEN, Rand. Smoothing and \texttt{PT-LiRPA} columns report the certified mean $\varepsilon$ and standard deviation we achieve for the same images, using the three probabilistic approaches, sacrificing only a $10^{-2}$ of confidence. In general, we observe that the \texttt{PT-LiRPA}-based verifier can certify robustness levels up to 3.62 times higher than the worst-case baseline CROWN \citep{crown} and up to 3.31 and 2.26 times higher compared to PROVEN \citep{proven} and Rand. Smoothing \citep{randomizedSmoothing}, respectively. This demonstrates the significant advantage of our approach over other existing probabilistic methods. 

\begin{figure}[b]
    \centering
    \includegraphics[width=\textwidth]{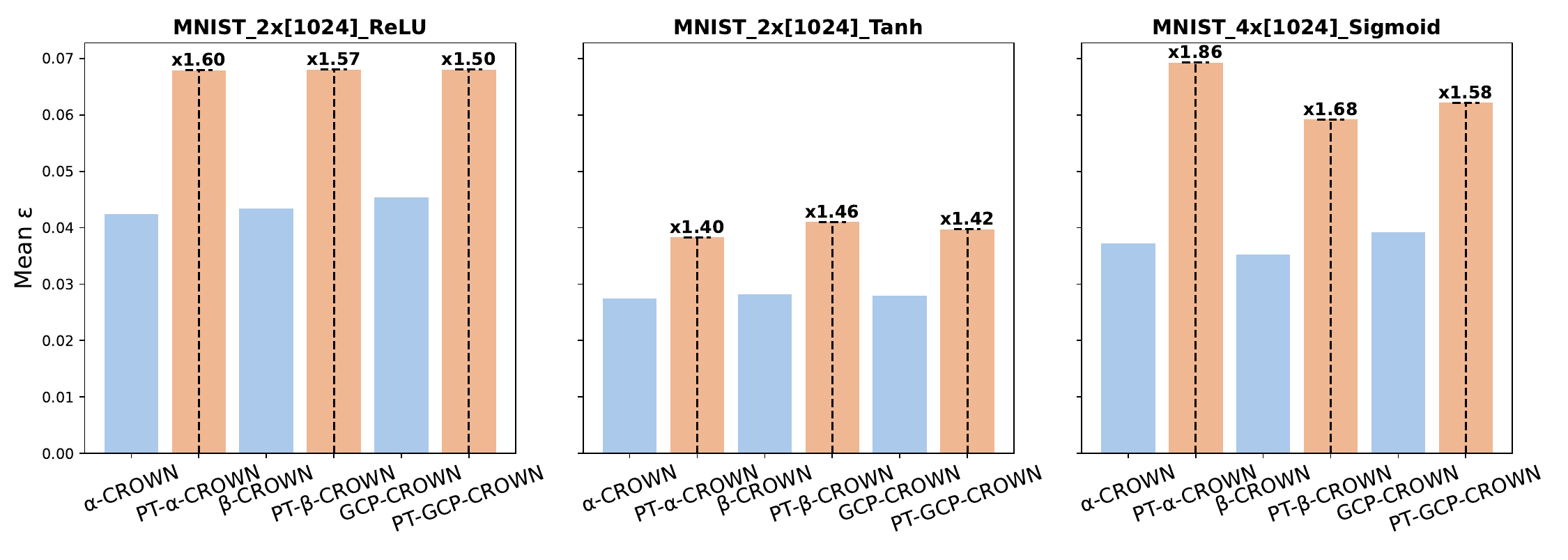}
    \vspace{-1cm}
    \caption{Comparison \texttt{PT-LiRPA} with worst-case bound CROWN \citep{crown}, $\alpha$-CROWN\citep{acrown}, $\beta$-CROWN \citep{bcrown}, GCP-CROWN\citep{GCP-CROWN} on MNIST\_2$\times[1024]$\_ReLU, MNIST\_2$\times[1024]$\_Tanh, and MNIST\_4$\times[1024]$\_Sigmoid models. On the x-axis, we report the original worst-case method and the corresponding probabilistic version using our \texttt{PT-LiRPA} framework. On the y-axis, we report, for each method, the mean maximum input perturbation $\varepsilon$ that can be certified on 10 random images.}
    \label{fig:appendix_further_empirical}
\end{figure}

Since the worst-case setting employed in this comparison is one of the first LiRPA-based approaches proposed in the literature, we conduct further analysis on the level of tightness we can achieve with respect to more recent LiRPA approaches such as $\alpha$-CROWN \citep{acrown}, $\beta$-CROWN \citep{bcrown}, and GCP-CROWN \citep{GCP-CROWN}. For each of these approaches, as well as their probabilistic counterparts based on our \texttt{PT-LiRPA} framework, we compute the mean input perturbation $\varepsilon$ that can be certified across 10 random images.
As shown in Fig.\;\ref{fig:appendix_further_empirical}, our probabilistic framework consistently certifies robustness levels at least 1.4 times higher than the worst-case baseline, even when using more recent LiRPA techniques. Notably, the results indicate that as the estimated reachable sets become more precise through over-approximation, the impact of our approach diminishes. Nonetheless, our framework can provide interesting safety information on the model's robustness level even with very tight provable reachable sets.

Finally, to assess the computational overhead introduced by the proposed approach, we perform an ablation study analyzing its impact on the total certification time, considering $n=350k$ samples and progressively larger network sizes. We highlight that the overall time complexity of the algorithm is polynomial in the network size, specifically $O\left(n \sum_{i=1}^{N} d_i d_{i-1}\right)$, where the term $\sum_{i=1}^{N} d_i d_{i-1}$ represents the complexity for a single sample, where $N$ is the total number of layers in the network, $d_{i-1}$ is the number of neurons in the preceding layer, and $d_i$ is the number of neurons in the current layer. The product $d_i d_{i-1}$ therefore indicates the number of multiplications required for the processing of layer $i$. Consequently, the total complexity is proportional to the number of samples multiplied by the cost of a single forward propagation through the entire network.
This aspect is well highlighted in the results of Tab. \ref{tab:proven_time_comparison}, where the time overhead of the sampling-based approach in \texttt{PT-LiRPA} is negligible (i.e., less than one second) in the overall certification time, even when employing a significantly large number of samples, thanks to the GPU acceleration employed in the certification process. Clearly, the total computation time of CROWN enhanced with \texttt{PT-LiRPA} is greater than that of using CROWN alone, as the probabilistic certification of a larger tolerable input perturbation results in a longer verification process.

\begin{table}[h!]
    \centering
    \caption{Time comparison between CROWN and CROWN enhanced with \texttt{PT-LiRPA} for the certification of the models in Table \ref{tab:proven_comparison}. The \textit{Total Cert. Time} column reports the overall time required to compute the average $\varepsilon$ perturbation that the model can tolerate across 10 random test images. The \textit{\# Computations of Interm. Bounds} column indicates how many times the intermediate bound computation procedure is invoked to determine the mean $\varepsilon$, while the \textit{\# Samples} column specifies the number of samples used in each instance of intermediate bound computation. Finally, the last two columns present the total overhead and the average time per call of the sampling-based approach used to compute the probabilistically optimal intermediate bounds.}
    \resizebox{\textwidth}{!}{
    \begin{tabular}{l||c||cccccc} 
    \textbf{Certification method} & \textbf{Worst-case (CROWN)} &  &  &\textbf{CROWN w/ \texttt{PT-LiRPA}}  &   &   \\
    \hline
     & \textbf{Total} & \textbf{Total} & \textbf{\# Computations}  &\textbf{\# Samples} & \textbf{Total interm. bounds} & \textbf{Mean Interm. Bounds}  \\ 
     & \textbf{Cert. Time} & \textbf{Cert. Time} & \textbf{of Interm. Bounds}  & & \textbf{Comp. time} & \textbf{Comp. time} \\
    \hline
    MNIST 2$\times[1024]$, ReLU & 35.7s & 47.4s & 253 & 350k & 0.27s &  0.001s \\ 
    \hline
    MNIST 3$\times[1024]$, ReLU & 37.73s & 54.85s & 248 &  350k & 0.34s  & 0.0014s\\ 
    \hline
    MNIST 2$\times[1024]$, Tanh & 23.02s & 36.73s  & 165 & 350k & 0.22s & 0.0013s \\ 
    \hline
    MNIST 3$\times[1024]$, Tanh &  31.47s &  59.82s  & 186 & 350k & 0.26s & 0.0014s  \\ 
    \hline
    MNIST 4$\times[1024]$, Sigmoid & 40.26s & 64.61s & 209 & 350k & 0.33s & 0.0016s\\ 
    \hline
    CIFAR 5$\times[2048]$, ReLU & 26.3s & 82.42s & 120 & 350k & 0.48s & 0.004s \\ 
    \hline
    CIFAR 7$\times[1024]$, ReLU & 21.02s & 73.9s & 154  & 350k & 0.37s & 0.0024s \\
    \hline
    \end{tabular}}
    
    \label{tab:proven_time_comparison}
\end{table}

Importantly, the time comparison is conducted only against CROWN as the goal of this experiment is to show that the additional cost introduced by \texttt{PT-LiRPA} is negligible in the overall verification time, while still producing tighter output bounds. Importantly, once the LiRPA baseline is fixed (e.g., $\alpha$-CROWN, $\beta$-CROWN, GCP-CROWN), the subsequent linearization procedure is identical whether using the original method or our \texttt{PT-LiRPA} variant. The only difference lies in the way intermediate bounds are computed, which are then used to construct the diagonal matrices and bias terms. Consequently, we argue that measuring the overhead against CROWN is representative, since the additional cost introduced by \texttt{PT-LiRPA} does not depend on which LiRPA baseline is employed.

\paragraph{\textbf{Answers to Q3.}} To answer the last question, we integrate our \texttt{PT-LiRPA} in the $\alpha,\beta$-CROWN toolbox (\url{https://github.com/Verified-Intelligence/alpha-beta-CROWN}) and perform a final experiment on different benchmarks of the VNN-COMP 2022 and 2023 \citep{VNN-comp2022,VNN-comp2023}. We point out that the comparison between probabilistic and provable verifiers is performed to have a ground truth (when possible), and to highlight the valuable help that probabilistic approaches can have in solving challenging instances to be verified.\\
To keep the paper self-contained, we report below a brief overview of the selected benchmarks.

\begin{itemize}
    \item \textit{ACAS xu} \citep{ACAS,Reluplex} benchmark 2023: includes ten properties evaluated across 45 neural networks designed to provide turn advisories for aircraft to prevent collisions. Each neural network consists of 300 neurons distributed over six layers, using ReLU activation functions. The networks take five inputs representing the aircraft's state and produce five outputs, with the advisory determined by the minimum output value. Here, we verified only property 3, which returns unsafe if the clear of conflict (COC) output is minimal, with a max computation time of 116s.
    \item \textit{TllVerifyBench} benchmark 2023: this benchmark features Two-Level Lattice (TLL) neural networks with two inputs and one single output. These models are then transformed into MLP ReLU networks where the output properties consist of a randomly generated real number and a randomly generated inequality direction to be verified. Here we verify all 32 instances of the VNN-COMP 2023 with a timeout of 600s for each property.
    \item \textit{CIFAR\_biasfield} benchmark 2022: this benchmark focuses on verifying a Cifar-10 network under bias field perturbations. These perturbations are modeled by creating augmented networks that reduce the input space to just 16 parameters. For each image to be verified, a distinct bias field transform network is generated, consisting of a fully connected transform layer followed by the Cifar CNN with 8 convolutional layers with ReLU activations. Each bias field transform network has 363k parameters and 45k nodes. Here, we test all 72 properties with a timeout set to 300s for each one.
    \item \textit{TinyImageNet} benchmark 2022: consists of CIFAR100 image classification ($56\times56\times3$) with Residual Neural Networks (ResNet). Here, we consider the medium network size composed of 8 residual blocks, 17 convolutional layers, and 2 linear layers. For \textit{TinyImageNet-ResNet-medium}, we verify all 24 properties with a timeout of 200 seconds for each property.
\end{itemize}

In general, we selected benchmarks where the state-of-the-art $\alpha,\beta$-CROWN method is unable to solve some of the instances within the time constraints. This set of experiments aims to confirm our hypothesis regarding the effectiveness of having tighter estimated reachable sets for verification purposes. In detail, our intuition is that even though our procedure requires a little initial overhead, with tighter estimated reachable sets, we can achieve more precise final output bounds, potentially reducing the cases where the verification approach can not make a decision and must resort to a split in the BaB process, thus resulting in more efficient overall verification time.\\
Before starting to verify these instances,  we explore the effect of varying incremental sample sizes for a fixed $\xi=0.85$ on the computation of estimated reachable sets with neural networks of significant size. 

\begin{figure}[h!]
    \centering
    \includegraphics[width=0.7\linewidth]{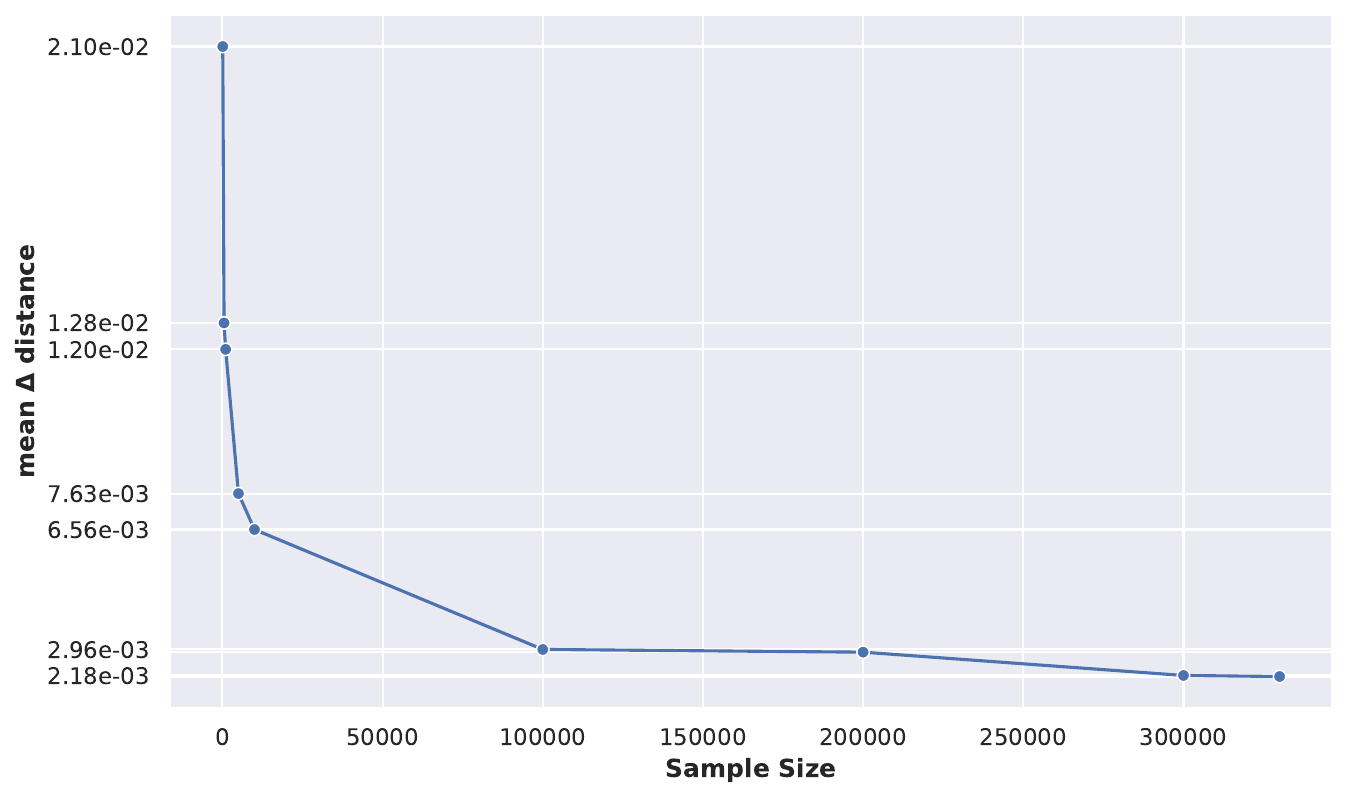}
    \caption{Estimated reachable sets at convergence for the increasing sample size in \textit{CIFAR\_biasfield} benchmark. $y$-axis reports the mean distance between  estimated reachable sets using $350k$ samples (as reference) and the one using $[100, 500, 1k, 5k, 10k, 100k, 200k, 300k, 330k]$, respectively.}
    \label{fig:convergence}
\end{figure}

In detail, we focus on the \textit{CIFAR\_biasfield} benchmark and set a confidence level of $1-2mp \geq 99\%$. In detail, we compute the mean distance between estimated reachable sets using $350k$ samples as reference and the one using progressively increasing the sample size until the difference between successive estimated reachable sets exceeds the threshold of $\Delta=0.001$. Specifically, we begin with $100$ samples and progressively increase the sample size until the difference between successive estimated reachable sets exceeds the threshold. Our results, detailed in Fig.\;\ref{fig:convergence}, indicate that stable estimated reachable sets, in this scenario, can be obtained with sample sizes ranging from $300k$ to $350k$ as the mean distance between estimated bounds is strictly less than $\Delta = 0.001$. Hence, employing a sample size of $350k$ samples results in a valid choice even for considerably large networks. We recall once again that propagating a large number of samples, such as $350k$, requires a negligible computational effort and time (as shown in the results of Tab. \ref{tab:proven_time_comparison}) due to batch processing and GPU acceleration. The principal limitation arises from GPU memory capacity, since larger sample sizes may increase the risk of memory errors relative to CPU-based propagation.

\begin{table}[t]
%\scriptsize
\centering
\caption{Results on VNN-COMP 2022-2023 benchmarks. Results marked in \textbf{bold} report improved performance in terms of verified accuracy (\% sat instances/all instances) and total verification time for the specific benchmark tested, w.r.t. a worst-case verification approach.}

\resizebox{\textwidth}{!}{%
\begin{tabular}{c||cccccccc}
\hline
\textbf{Benchmark} & \textbf{Method} & \textbf{Confidence} & \textbf{Verified accuracy} & \textbf{\#safe (unsat)} & \textbf{\#unsafe (sat)} & \textbf{\#unkwown} & \textbf{Tot verification time} \\
\hline
\textit{ACASxu} & $\alpha$,$\beta$-CROWN & $100\%$ & 93.33\% & 42 & 3 & 0 & 26s\\
 & $\alpha$,$\beta$-CROWN &&&&&\\
 & w/\texttt{PT-LiRPA} & $\geq 99\%$ & 93.33\% & 42 & 3 & 0 & \textbf{16.37s}\\
 \hline
\textit{tllVerifyBench} & $\alpha$,$\beta$-CROWN & $100\%$ & 46.875\% & 15 & 17 & 0 & 90.2s\\
 & $\alpha$,$\beta$-CROWN &&&&&\\
 & w/\texttt{PT-LiRPA} & $\geq 99\%$ & 46.875\% & 15 & 17 & 0 & 92s\\
 \hline
\textit{CIFAR\_biasfield } & $\alpha$,$\beta$-CROWN & $100\%$ & 95.83\% & 69 & 1 & 2 & 1553.5s\\ & $\alpha$,$\beta$-CROWN &&&&&\\
 & w/\texttt{PT-LiRPA} & $\geq 99\%$ & \textbf{98.61\%} & \textbf{71} & 1 & \textbf{0} & \textbf{408.7s}\\
 \hline
\textit{CIFAR\_tinyimagenet} & $\alpha$,$\beta$-CROWN & $100\%$ &  62.5\% & 15 & 3 & 6 & 1429.6s\\ & $\alpha$,$\beta$-CROWN &&&&&\\
 & w/\texttt{PT-LiRPA} & $\geq 99\%$ & \textbf{87.5\%} & \textbf{21} & 3 & \textbf{0} & \textbf{425.6s}\\
 \hline
\end{tabular}}

\label{tab:results_VNNCOMP}

\end{table}
In Tab.\;\ref{tab:results_VNNCOMP} we report our results on the VNN-COMP, where we consider an increased difficulty for the verification process. We start with the simpler benchmark \textit{ACASxu} \citep{ACAS,Reluplex}, and we test property 3. This property is particularly interesting as it holds for 42 of the 45 models tested, thus allowing us to verify the improvement in terms of time and verification accuracy. In the first row of Tab.\;\ref{tab:results_VNNCOMP}, we can notice that by sacrificing only a $0.01\%$ of confidence, $\alpha,\beta$-CROWN enhanced with \texttt{PT-LiRPA} achieves the same verified accuracy in less verification time, thus confirming our intuition. 
Interestingly, we observe that tighter bounds are not always beneficial in general. Specifically, in cases where a PGD attack succeeds despite loose bounds, using tighter bounds does not lead to further improvements. Additionally, in some scenarios, less accurate bounds from vanilla LiRPA methods could be quickly refined by BaB, still resulting in efficient verification time. This is exemplified by the \textit{tllVerifyBench} experiments, where even sacrificing a $0.01\%$ of confidence, \texttt{PT-LiRPA} produced tight estimated reachable sets but achieved the same verified accuracy with a minor overhead in bounds computation.\\
Crucially, the real benefit of our \texttt{PT-LiRPA} arises on more challenging verification benchmarks such as \textit{CIFAR\_biasfield} and \textit{CIFAR\_tinyimagenet}. Both these benchmarks are image-based verification tasks and thus allow us to show the scalability and the impact of tightened estimated reachable sets on large networks using the proposed approach. Crucially, in these two last benchmarks, sacrificing a $0.01\%$ of confidence, we obtain significant improvements in verification results with respect to worst-case $\alpha,\beta$-CROWN. In detail, in both \textit{CIFAR\_biasfield} and \textit{CIFAR\_tinyimagenet}, we achieved higher verified accuracy without incurring any \textit{unknown} answer and with significantly less verification time. These final results demonstrate the effectiveness and the potential impact of using \texttt{PT-LiRPA} for verification purposes, showing the advantage of incorporating estimated reachable sets in handling challenging instances that are difficult to solve with provable solvers.

\section{Assumptions and Limitations}

The proposed \texttt{PT-LiRPA} framework builds on several assumptions that define its applicability and theoretical guarantees. Our probabilistic framework relies on uniform random sampling within the perturbation region $\mathcal{C}_{\mathbf{x}_0, \epsilon}$ to estimate probabilistically tight reachable sets. Consequently, our theoretical probabilistic guarantees, derived from \citet{wilks}'s tolerance limit theorem and extended using extreme value theory \citep{haan2006extreme}, hold under the assumption that the samples are independent and representative of the true input distribution within $\mathcal{C}$.\\
Importantly, by accepting robustness certificates that hold for a fraction $R$ of the perturbation region, the theoretical and practical tool derived from \citet{wilks}'s results, i.e., Theorem \ref{theorem:PT_lirpa_wilks}, remains a valuable solution, as it provides a closed-form expression to compute the number of samples required for any desired confidence level $ \psi$ and coverage ratio $R$. To address this limitation and extend the analysis to probabilistic certificates valid over the entire perturbation region, thus aligning with the probabilistic verification literature, we further based our theoretical derivation on extreme value theory and the results of \citet{de1981estimation}. In this case, our derivation in Theorem \ref{theorem:pt_lirpa_guarantees} provides, for any choice of sample size $n$ and confidence error $p$, a guarantee that holds for all input values $\vx \in \mathcal{C}$. However, unlike Wilks’ approach, EVT does not yield a closed-form expression to determine the minimum number of samples required to achieve a desired precision and confidence level. Consequently, the balance between probabilistic soundness and the tightness of the resulting output bounds becomes more empirical, as confirmed in our experiments. In fact, the tightness of the computed bounds and the computational efficiency of the method depend on the chosen sample size $n$, and the estimate of the tail distribution $\nu = \lfloor n^\xi \rfloor$, which must balance verification accuracy and computational cost. Despite these limitations, we argue that \texttt{PT-LiRPA} complements deterministic verification methods by offering practical, quantifiable robustness guarantees for challenging instances where worst-case formal verification is either overly conservative or computationally infeasible.
\section{Conclusion}

We introduced \texttt{PT-LiRPA}, a novel probabilistic framework that combines LiRPA-based formal verification of deep neural networks approaches with a sampling-based technique. We provide a rigorous theoretical derivation of the correctness of our approach, complementing, for the first time, statistical results on the tolerance limit, with qualitative bounds on the error magnitude of a sampling-based approach employed to estimate reachable set domains. Our approach provides tighter linear bounds, significantly improving both the accuracy and verification efficiency while maintaining provable probabilistic guarantees on the soundness of the verification result. Empirical results demonstrate that \texttt{PT-LiRPA} outperforms related probabilistic methods, particularly in robustness certification, decreasing the confidence in the result by infinitesimal amounts. Additionally, we show the potential of our probabilistic approach for verifying challenging instances where the formal approaches fail.

\section*{Acknowledgments}
This work has been supported by PNRR MUR project PE0000013-FAIR. 

%Bibliography
\bibliographystyle{plainnat} 
\bibliography{references}

\end{document}